\documentclass{article}

\usepackage[preprint]{neurips_2026}
\usepackage[T1]{fontenc}    
\usepackage{hyperref}       
\usepackage{url}            
\usepackage{booktabs}       
\usepackage{amsfonts}       
\usepackage{nicefrac}       
\usepackage{microtype}      
\usepackage{xcolor}         
\usepackage{mathtools}
\usepackage{caption}
\usepackage{graphicx}
\usepackage{subcaption}
\usepackage{mathrsfs}
\usepackage{amsmath,amsthm,amssymb,stmaryrd}
\usepackage{dsfont}
\usepackage{bm}
\usepackage{centernot}
\usepackage{enumitem}
\usepackage[capitalize,noabbrev]{cleveref}

\newcommand\given[1][]{\:\ifnum\currentgrouptype=16\middle\fi|\:}
\newcommand\stcolon[1][]{\:\colon\:}
\newcommand\stbar\given
\makeatletter
\def\st{\@ifstar\stbar\stcolon}

\def\mC{{\bm{C}}}

\def\mI{{\bm{I}}}

\def\mR{{\bm{R}}}

\def\mT{{\bm{T}}}

\newcommand{\Rb}{\mathbb{R}}
\newcommand{\Eb}{\mathbb{E}}
\newcommand{\Pb}{\mathbb{P}}
\newcommand{\Bb}{\mathbb{B}}
\newcommand{\Nb}{\mathbb{N}}
\newcommand{\BEoS}{\vec{\Theta}_{\mathrm{BEoS}}}

\newcommand{\CN}{\mathcal{N}}

\newcommand{\loss}{\mathcal{L}}
\newcommand{\relu}{\mathrm{ReLU}}
\newcommand{\Dict}{\mathscr{D}}
\newcommand{\Variation}{\mathrm{V}}

\newcommand{\MSE}{\mathrm{MSE}}

\newcommand{\DictReLU}{\Dict_{\phi}}

\newcommand{\Iall}{\mathsf{I}^{\mathrm{all}}}
\newcommand{\Oany}{\mathsf{O}^{\mathrm{any}}}
\newcommand{\sfI}{\mathsf{I}}
\newcommand{\sfO}{\mathsf{O}}
\newcommand{\poly}{\mathrm{poly}}

\newcommand{\Gap}{\mathrm{Gap}}

\newcommand{\Risk}{\mathcal{R}}

\renewcommand{\vec}[1]{{\bm{#1}}}

\newcommand{\T}{\mathsf{T}}
\newcommand{\dd}{\,\mathrm d}

\newcommand{\curly}[1]{\left\{#1\right\}}

\newcommand{\norm}[1]{\left\lVert#1\right\rVert}
\newcommand{\pathnorm}[1]{\left\lVert#1\right\rVert_{\mathrm{path}}}

\newcommand{\Ib}{\mathbb{I}}

\newcommand{\Sph}{\mathbb{S}}
\newcommand{\M}{\mathcal{M}}
\newcommand{\F}{\mathcal{F}}

\newcommand{\GaussC}{\mathcal{G}}

\newcommand{\CP}{\mathcal{P}}
\newcommand{\CD}{\mathcal{D}}

\newcommand{\CS}{\mathcal{S}}
\newcommand{\Excess}{\mathrm{Excess}}
\newcommand{\Gen}{\mathrm{Gen}}
\newcommand{\proj}{\operatorname{proj}}

\theoremstyle{plain}
\newtheorem{theorem}{Theorem}[section]
\newtheorem*{theorem*}{Theorem}
\newtheorem{proposition}[theorem]{Proposition}
\newtheorem{lemma}[theorem]{Lemma}
\newtheorem{corollary}[theorem]{Corollary}

\theoremstyle{definition}
\newtheorem{definition}[theorem]{Definition}

\newtheorem{remark}[theorem]{Remark}

\newtheorem{construction}[theorem]{Construction}

\title{{Does Sparse Connectivity Improve Generalization? \\Convolutional Networks Below the Edge of Stability}}

\author{
Tongtong Liang \\ UC San Diego \\ \texttt{ttliang@ucsd.edu}
\And
Esha Singh \\ UC San Diego \\ \texttt{e3singh@ucsd.edu}
\And
Rahul Parhi \\ UC San Diego \\ \texttt{rahul@ucsd.edu}
\And
Alexander Cloninger \\ UC San Diego \\ \texttt{acloninger@ucsd.edu} \\
\And
Yu-Xiang Wang \\ UC San Diego \\ \texttt{yuxiangw@ucsd.edu} \\
}

\begin{document}

\maketitle
\begin{abstract}
Gradient descent on overparameterized neural networks typically operates at the Edge of Stability (EoS), where the largest Hessian eigenvalue hovers around a step-size-dependent threshold. We study how sparse connectivity changes generalization below this threshold in two-layer ReLU networks. Prior results have shown that for fully-connected networks (FCNs), generalization guarantees in this regime degrade and become vacuous on high-dimensional spherical inputs. Our analysis reveals that sparse connectivity fundamentally alters this picture. Under sparse connectivity, the network processes a collection of low-dimensional patches rather than the full input vector, so the effective constraint imposed by the stability condition is governed by the geometry of the training patch collection. We prove that when the receptive fields are small relative to the ambient dimension, the effective constraint yields non-vacuous generalization bounds in precisely the spherical regime where FCNs provably fail. The same framework also reveals a contrasting failure mode: if the patch collection lacks geometric structure, the constraint becomes unable to prevent overfitting. We corroborate this theory by analyzing the patch geometry of natural images, showing that standard convolutional designs produce patch multiset with low-dimensional structure that facilitates generalization. This provides a principled explanation for the generalization advantage of convolutional networks. Thus, our analysis yields a unified framework that identifies how architecture, data geometry, and gradient descent jointly govern generalization performance.
\end{abstract}
\section{Introduction}
Convolutional neural networks (CNNs) \citep{fukushima1988neocognitron,lecun1998gradient} are among the most successful neural architectures. The renaissance of CNNs in computer vision tasks \citep{krizhevsky2012imagenet,ronneberger2015u,he2016deep} effectively ignited the modern deep learning revolution. CNNs remain a dominant choice in image and video generative models today \citep[see, e.g.,][and the references therein]{lai2025principles}.

Modern CNN models are \emph{overparameterized} and often have more parameters than the number of data points to fit them. It was shown that, despite the ability to fit pure noise perfectly, overparameterized CNNs trained with gradient descent tend to generalize well without explicit regularization, much more so than their densely connected counterparts \citep{zhang2017rethinking,arpit2017memorization}.  It remains a mystery how the CNN architecture interacts with the implicit regularization of gradient descent that ends up finding generalizing solutions. 

A tractable way to study the implicit regularization of gradient descent is to focus on the properties shared by solutions that GD can practically reach and maintain. Empirical work shows that gradient descent on neural networks typically enters the Edge-of-Stability (EoS) regime: the largest eigenvalue of the training loss Hessian increases, and then hovers around a stability threshold controlled by the step size~\citep{cohen2021gradient}. Theoretical analyses of minima stability prove that solutions that GD can stably converge to  are inherently constrained by this threshold~\citep{wu2018sgd,nar2018step,mulayoff2021implicit,nacson2022implicit,damian2023self}. 
We refer to such solutions as \emph{below the edge of stability} (BEoS), following~\citet{qiao2024stable,liang2025datageometry}, and take the BEoS condition as the analytical setup for studying the implicit regularization of GD.

Recent work has analyzed this problem for two-layer ReLU fully connected networks (FCNs), establishing that generalization below the edge of stability depends on the geometry of ambient input vectors ~\citep{mulayoff2021implicit,nacson2022implicit,qiao2024stable,liang2025_neural_shattering,liang2025datageometry}. In particular, \citet{liang2025_neural_shattering,liang2025datageometry} show that as the input distribution concentrates closer to a sphere, the resulting generalization guarantees eventually become vacuous.

Standard normalization preprocessing often maps images near a hypersphere, yet CNNs trained on such data generalize well in this regime. The discrepancy points to the architectural inductive bias. A convolutional layer departs from a fully-connected one in that each neuron only receives input from a small subset of coordinates, a property we term \emph{sparse connectivity}. On the forward side, the network no longer operates on full input vectors but on a collection of lower-dimensional \emph{patches} induced by the sparse connectivity, and thus on the backward side, the gradient dynamics couple to the geometry of the patch collection rather than the ambient point cloud. Therefore, our central question is \emph{how it changes the set of solutions that satisfy the BEoS condition, and in turn, the generalization below the edge of stability.}


To isolate this effect, we study a minimal two-layer ReLU network where each hidden neuron receives input from a prescribed set of coordinate projections, and weights are shared across patches. This architecture captures sparse connectivity while remaining analytically tractable.

\noindent\textbf{Contributions.}
We develop a stability-based theory for sparse connectivity that connects architecture, data geometry, and the generalization below the edge of stability.

\begin{enumerate}
    \item We show that for a sparsely connected two-layer network, the BEoS condition translates into a stepsize dependent bound on a weighted path norm of the network. The weight function in this norm is governed by the geometry of the induced \emph{patch multiset}, namely the collection of all local patches extracted from the training data. This extends stability-based analyses of FCNs and makes precise that sparse patch extraction shifts the relevant data dependence from the full input vectors to the patches themselves.

    \item From this characterization, we prove that on uniformly distributed hyperspherical data, the BEoS-constrained class admits non-vacuous generalization bounds. For fixed patch size $m$ and large ambient dimension $d$, our bound scales as $n^{-1/6+O(m/d)}$ (Theorem~\ref{thm:upper_bound}) and is immune to the curse of dimensionality, which we also verify numerically. This demonstrates that sparse connectivity can yield strong generalization guarantees by changing the representation format the network processes.

    \item We also characterize a limitation: without distributional assumptions, there exist datasets where a sparse network can perfectly interpolate while satisfying the BEoS condition (Theorem~\ref{thm:flat_lcws_gap}). This occurs precisely when individual patches can be isolated from the rest by a single half-space. The result clarifies that the generalization benefit of sparse connectivity relies on a structural property of the patch data, namely that it resists such isolation.

    \item We empirically analyze the patch geometry of natural images (CIFAR-10) and show that standard convolutional receptive fields produce patch geometry that is highly concentrated and resistant to the isolating cuts described above (Section~\ref{sec:patch_geometry_prior}). Together, these results provide a systematic explanation for why convolutional networks generalize better than their fully connected counterparts.
\end{enumerate}

\noindent
In summary, our work shows that the inductive bias of CNNs can be understood through a single geometric lens: sparse connectivity shifts the effective constraint imposed by GD's stability condition from the high-dimensional ambient space to the lower-dimensional patch space, where it naturally favors generalizable features.

\noindent{\textbf{Technical novelty}.} The core technical advance is a stability analysis in which the architecture enters as an explicit variable in the constraint that governs generalization. Under sparse connectivity, each neuron receives only the patches within its receptive field, so its contribution to the loss Hessian is a function of those patches rather than the full input vectors. We show that this sparse connectivity couples each neuron directly to the geometry of the full patch multiset, producing an architecture-dependent regularizer. This architectural dependence is absent from prior work in the same regime \citep{mulayoff2021implicit,nacson2022implicit,qiao2024stable,liang2025_neural_shattering,liang2025datageometry}.

\noindent{\textbf{Scope of analysis}.} 
We analyze the sharpness constraint that gradient descent with a finite step size must satisfy to remain stable.
To make this constraint analytically tractable, we work with two-layer ReLU networks. In particular, we focus on the design of sparse connectivity, which is a  generic structure appearing in nearly every block of modern vision backbones (e.g. CNNs and ViTs).
To expose its role without confounding extra architectural assumptions, we pair it with idealized data distributions that create a sharp contrast while keeping the analysis feasible.
Extending this analysis to deep networks requires understanding how stability constraints propagate through successive layers. In that setting, each layer's output forms the input geometry for the next, and the geometry that sparse connectivity processes is no longer static but an evolving representation, shaped by attention, normalization, and residual connections during training.
How the gradient dynamics couples to this evolving representational hierarchy requires substantial hard work, which we leave as a future work.
\vspace{-2em}
\section{Related Work}
\vspace{-0.5em}
\noindent\textbf{Patch-based representation learning.}
Early unsupervised image representation methods decomposed images into local patches and learned sparse dictionaries or manifolds from these primitives~\cite{aharon2006k, peyre2009manifold}.
Patch based representations have also shown success in unsupervised learning~\cite{paulin2017convolutional} and as preprocessing for image classification~\cite{coates2011analysis, thiry2021unreasonable, brutzkus2022efficient}.
These results highlight that patch space is highly structured, but it remains open how neural network training exploits such structure.
Our work helps bridge this gap by analyzing generalization below the edge of stability.

\noindent\textbf{Geometric awareness in deep learning.}
Recent theoretical work has explored how architectural design incorporates geometric awareness to improve neural network training and approximation. Approximation theory research has demonstrated that CNN architectures overcome the curse of dimensionality by exploiting compositional structure in natural images~\cite{mhaskar2017and, zhou2020universality, poggio2022foundations, poggio2024compositional}, with~\citet{mao2021theory} formally proving the superiority of CNNs over FCNs for learning certain composite functions.
More recently, Vision Transformers~\cite{dosovitskiy2021image} have been analyzed through the same lens: \citet{shi2025approximation} extend these approximation guarantees to ViTs, showing they outperform FCNs for compositional functions, and \citet{trockman2022patches} identify patch extraction, rather than attention, as the critical component, further reinforcing the strength of sparsely connected architectures.
These results, however, either assume the labeling function satisfies a compositional structure or operate in settings with explicit regularization.
Our work requires no assumptions on the labeling function and operates in the overparameterized regime.

\noindent\textbf{Theoretical analysis of CNNs and separation from FCNs.}
The advantages of CNNs over FCNs have been established from both approximation-theoretic and statistical perspectives.
Approximation theory shows that CNNs with appropriate sparse weights can achieve near-minimax optimal sample complexity~\cite{oono2019approximation} and adapt to intrinsic dimension when data lie on a low-dimensional manifold~\cite{liu2021besov}, with \citet{zhang2024nonparametric} extending these results to overparameterized regimes under weight decay.
However, these analyses rely on reducing CNNs to fully connected networks~\cite{yarotsky2017error} and therefore do not reveal the architectural insights specific to CNNs that we describe in this paper.
Specifically, we do not require the data to be supported on a low-dimensional manifold to avoid the curse of dimensionality, nor do we require explicit regularization such as weight decay or a sparsity constraint.
On the statistical side, a sample complexity separating CNNs, locally connected networks without weight sharing, and FCNs has been established~\citep{li2021convolutional,wang2023theoretical,lahoti2024role}.
Our results are for the same model family, but differ in that we consider an overparameterized regime without explicit regularization, which allows us to prove a stronger separation between FCNs and CNNs.
Lastly, the significance of the input data distribution, rather than the labeling function, was not discovered in prior work.

\noindent\textbf{Implicit bias of gradient descent.}
Many existing work on the implicit bias of gradient descent relies on strong assumptions on the data distributions (e.g., linearly separable data), simplified model architectures (e.g., linear activation) to make a gradient dynamics analysis tractable \citep{soudry2018implicit,gunasekar2018characterizing,gunasekar2018implicit}, or weight initialization schemes that keep the model in the kernel regimes \citep{jacot2018neural,arora2019exact}.
While CNNs were studied \citep{gunasekar2018implicit,arora2019exact}, the nature of the results are different from ours.

\noindent\textbf{Edge of stability, minima stability, and generalization by large stepsizes.}
Our approach builds upon a recent line of work that studies the set of solutions that gradient descent can visit (or converge to) via either Edge-of-stability observation or the minima stability theory \citep{ding2024flat,mulayoff2021implicit,nacson2022implicit, wu2023implicit, qiao2024stable,liang2025_neural_shattering,liang2025datageometry}. These approaches enable formal analysis of the generalization properties without having to analyze gradient dynamics. To the best of our knowledge, they all focused on feedforward neural networks, and we are the first to study CNNs and the impact of model architecture choices in the implicit regularization of large stepsizes.

\vspace{-0.5em}
\section{Preliminaries and Notations}\label{sec:prelim}
\vspace{-0.5em}

Throughout the paper, we use $O(\cdot)$ and $\Omega(\cdot)$ to absorb constants, while $\tilde{O}(\cdot)$ absorbs logarithmic factors. $\Sph^{d-1}$ denotes the unit hypersphere in $\Rb^d$ and $\Bb_R^d$ denotes a $d$-dimensional ball of radius $R$. The activation function is the ReLU denoted by $\phi(z) = \max\{0, z\}$. We write $[n] = \{1, 2, \dots, n\}$.

\noindent\textbf{Architectures and sparse connectivity.}
Fix integers $d \ge 1$ and $1 \le m \le d$.
A \emph{sparse connectivity pattern} is specified by a collection of coordinate subsets (receptive fields) $\CS = \{S_j\}_{j=1}^J$, where each $S_j \subset [d]$ has size $m$. Each subset induces a coordinate projection $\pi_j: \Rb^d \to \Rb^m$, and we call $\vec{x}^{(S_j)} := \pi_j(\vec{x})$ a \emph{patch}. This formalizes the property that each hidden neuron only receives input from a restricted subset of the ambient coordinates.

We analyze two-layer sparsely connected ReLU networks with weight sharing and Global Average Pooling (GAP)\footnote{Weight sharing and GAP are non-essential and chosen only for cleaner exposition. Appendix~\ref{app:sparse_no_share} proves an analogous result for SCNs without weight sharing, where the global patch-multiset weight $g_{\CD,\CS}$ is replaced by location wise weights.}. Given receptive fields $\CS$ and width $K \in \Nb$, we consider
\begin{equation}\label{eq:model_CNN}
f_{\vec{\theta}}(\vec{x}) = \sum_{k=1}^{K} v_k \left( \frac{1}{J} \sum_{j=1}^{J} \phi\bigl(\vec{w}_k^\T \pi_j(\vec{x}) - b_k\bigr) \right) + \beta.
\end{equation}
We refer to this architecture as a \emph{sparsely connected network} (SCN) or $\CS$CN if the sparse connectivity pattern is prescribed. 
A unit $\phi(\vec{w}_k^\T \cdot - b_k)$ is called a \emph{filter} or \emph{neuron}. 
It is said to be \emph{activated} on a patch $\vec{x}^{(S_j)}$ if  $\vec{w}_k^{\T} \vec{x}^{(S_j)} > b_k$. Let $\vec{\Theta}^{\CS}$ be the parameter set of all such $\CS$CNs of \emph{any finite} width.

\noindent\textbf{Data and loss function.}
A dataset is $\CD = \{(\vec{x}_i, y_i)\}_{i=1}^n$ with $\vec{x}_i \in \Bb_R^d$ and $y_i \in [-D,D]$. We use the squared loss and define the training objective
$\loss(\vec{\theta}) \coloneqq \frac{1}{2n} \sum_{i=1}^n \bigl( f_{\vec{\theta}}(\vec{x}_i) - y_i \bigr)^2$.

\noindent\textbf{``Edge of Stability'' regime.}
Empirical and theoretical work \citep{cohen2021gradient,damian2023self} has shown that gradient descent on neural networks typically enters the Edge of Stability (EoS) regime: the largest Hessian eigenvalue $\lambda_{\max}(\nabla^2 \loss(\vec{\theta}_t))$ steadily increases (``progressive sharpening") until it hovers around the value $2/\eta$, where $\eta>0$ is the step size. Throughout this paper, GD refers to vanilla gradient descent with learning rate $\eta$, and we assume $\eta < 2$.

\begin{definition}[Below Edge of Stability {\citep[Definition 2.3]{qiao2024stable}}]\label{def:BEoS}
Let $\{\vec{\theta}_t\}_{t \ge 1}$ be a GD trajectory on $\loss$ with step size $\eta$. 
Any parameter $\vec{\theta}_t$ with $\lambda_{\max}\!\bigl(\nabla^2 \loss(\vec{\theta}_t)\bigr) \le \frac{2}{\eta}$ is called a solution \emph{Below the Edge of Stability (BEoS)}, or simply a \emph{BEoS solution}.
\end{definition}

This condition applies to any twice-differentiable solution produced by GD, even when the loss does not fully converge. For our analysis, it is convenient to study the set of all parameters that satisfy this condition, whether or not they lie on a specific GD trajectory. We define
\begin{equation}\label{eq:BEoS_class}
\BEoS^{\CS}(\eta, \CD) \coloneqq \left\{
\vec{\theta} \in \vec{\Theta}^{\CS}
\middle|\lambda_{\max}\bigl(\nabla^2 \loss(\vec{\theta})\bigr) \le \frac{2}{\eta}
\right\}.
\end{equation}

\noindent\textbf{Statistical learning framework.}
We assume the data points $\{(\vec{x}_i, y_i)\}_{i=1}^n$ are i.i.d.~samples from a distribution $\CP$ on $\Bb_R^d \times [-D, D]$. The \emph{population risk} of a predictor $f$ is $ \Risk(f) := \Eb_{(\vec{x}, y) \sim \CP}[(f(\vec{x}) - y)^2]$. The \emph{empirical risk} on dataset $\CD$ is $\widehat{\Risk}_{\CD}(f) = \frac{1}{n} \sum_{i=1}^n (f(\vec{x}_i) - y_i)^2$. The \emph{generalization gap} is defined as $\mathrm{GenGap}(f, \CD) := |\Risk(f) - \widehat{\Risk}_{\CD}(f)|$.
\section{Main Results}\label{sec:main_results}
\vspace{-0.5em}
To start with, we use a toy example to preview how sparse connectivity in the network shapes the gradient dynamics itself. Consider an input $z\sim\mathrm{Uniform}(\mathbb{S}^{2})$ embedded into $\mathbb{R}^{9}$ by padding each coordinate with two zeros. The ambient data is spherical, so a fully connected network sees a high-dimensional unstructured geometry. Now equip a two-layer SCN with two different receptive field systems:
\textbf{(A)} $\CS_1$ (block-aligned): each receptive field groups 3 consecutive coordinates. Every patch has the form $(t,0,0)$, so the patch geometry is one-dimensional; \textbf{(B)} $\CS_2$ (interleaved): each receptive field groups coordinates modulo the stride. A patch recovers the full vector $(z_1,z_2,z_3)$, so the patch geometry is spherical.
\begin{figure}[t]
\centering
\vspace{-4em}
\includegraphics[width=0.85\linewidth]{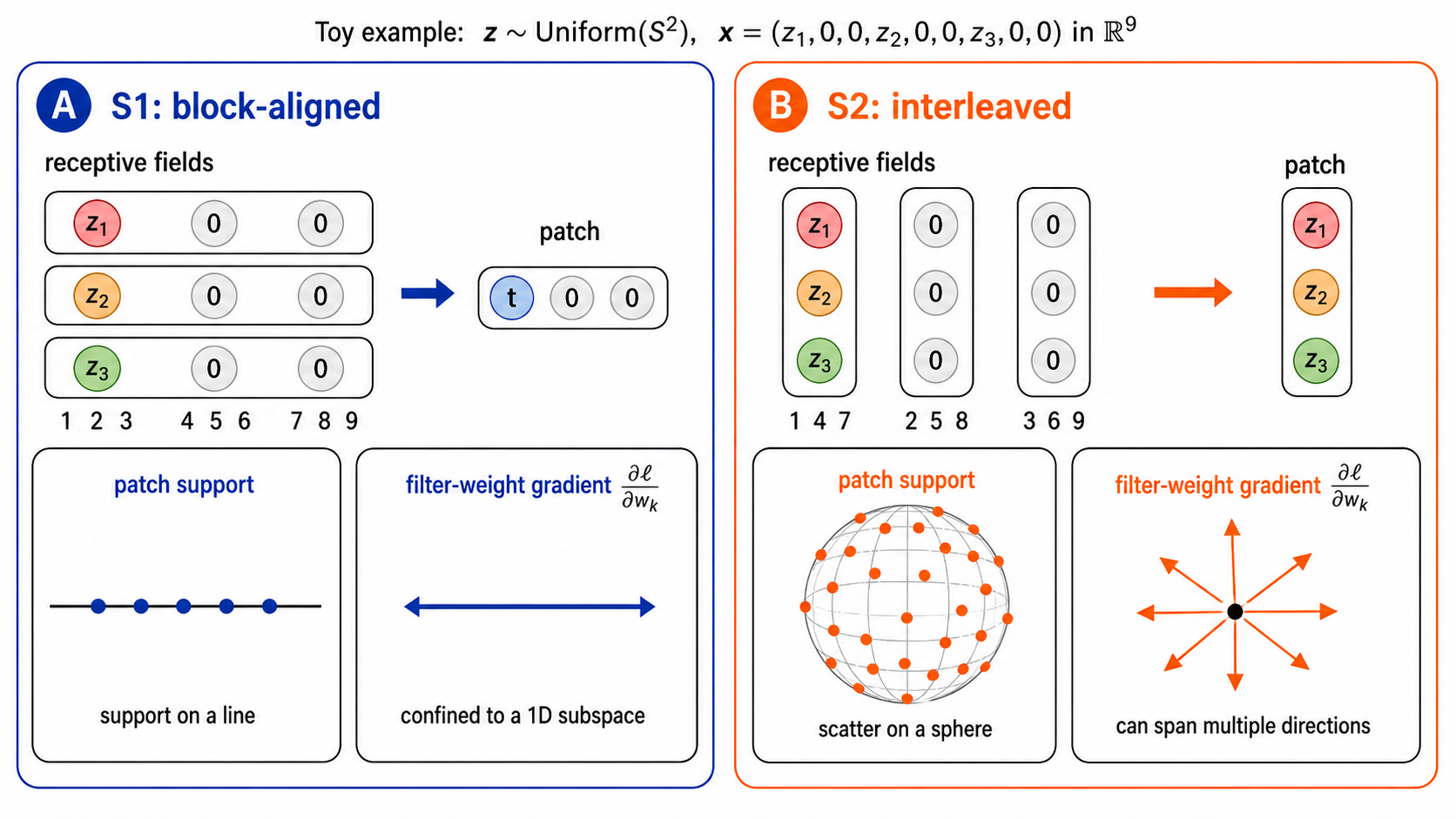}
\caption{\textbf{Data $\times$ architecture determines the geometry that gradient descent interacts with.}}
\label{fig:toy_models_2_receptive_fields}
\vspace{-1em}
\end{figure}
The effect of these two designs becomes visible in the gradient. For the model \eqref{eq:model_CNN}, the gradient of the training loss with respect to a filter $\vec{w}_k$ is
\begin{equation}\label{eq:filter_gradient}
\frac{\partial \loss(\vec{\theta})}{\partial \vec{w}_k}
= \frac{v_k}{nJ}\sum_{i=1}^n 
\sum_{j=1}^J \underbrace{(f_{\vec{\theta}}(\vec{x}_i)-y_i)\mathds{1}\{\vec{w}_k^\T \pi_j(\vec{x}_i) > b_k\}}_{\text{residue and activation scalars}}\,
\textcolor{red}{\underbrace{\pi_j(\vec{x}_i)}_{\text{patch vectors}}}.
\end{equation}
The update to $\vec{w}_k$ is a linear combination of the patches $\textcolor{red}{\pi_j(\vec{x}_i)}$ that activate it. Under $\CS_1$, every activated patch lies in a one-dimensional subspace, so each filter gradient is confined to that subspace. Under $\CS_2$, activated patches span the entire 3-dimensional patch space, and filter gradients can explore all directions. The two receptive field systems thus produce fundamentally different gradient dynamics, and the sets of solutions that GD can stably reach differ dramatically.

The rest of this section develops a framework that makes this intuition precise via the lens of dynamical stability analysis in gradient descent (the BEoS regime). Theorem~\ref{thm:implicit_bias_CNN} shows that the BEoS condition bounds a weighted path norm whose weight is governed by the patch geometry. Theorem~\ref{thm:upper_bound} and Theorem~\ref{thm:flat_lcws_gap} then develop the two contrasting regimes previewed by $\CS_1$ and $\CS_2$: one enables generalization, and the other fails.

\subsection{From the BEoS condition to a patch-geometry-weighted path norm}\label{subsec:beos_to_pathnorm}

Given a dataset $\CD=\{(\vec{x}_i,y_i)\}_{i=1}^n$ and receptive fields $\textcolor{red}{\CS}=\{S_j\}_{j=1}^J$, let $\vec{X}^{\textcolor{red}{\CS}}_{\CD}$ be a random vector drawn uniformly from the patch multiset $\{\pi_j(\vec{x}_i)\}_{(i,j)\in[n]\times[J]}\subset \Rb^m$. Define the weight function $g_{\CD,\textcolor{red}{\CS}}\!:\Sph^{m-1}\times \Rb\rightarrow \Rb$ by 
\begin{align}\label{eq:weight_g_CNN}
g_{\CD,\textcolor{red}{\mathcal{S}}}(\vec{u},t):=\Eb\left[\phi\left(\vec{u}^\T\vec{X}^{\textcolor{red}{\CS}}_{\CD} - t\right)\right]\,
	\sqrt{\Pb\left(\vec{u}^\T\vec{X}^{\textcolor{red}{\CS}}_{\CD} > t \right)^{2}+\left\|\Eb\left[\vec{X}^{\textcolor{red}{\CS}}_{\CD}\,\mathds{1}\Big\{\vec{u}^\T\vec{X}^{\textcolor{red}{\CS}}_{\CD} > t\Big\}\right]\right\|_2^{2}}.
\end{align}

\begin{theorem}\label{thm:implicit_bias_CNN}
Fix $\CD$ and $\textcolor{red}{\CS}$. For any $\vec{\theta}\in\vec{\Theta}^{\textcolor{red}{\CS}}$,
\begin{equation}\label{eq:beos_weighted_pathnorm}
\begin{aligned}
\sum_{k=1}^K |v_k|\,\|\vec{w}_k\|\;
g_{\CD,\textcolor{red}{\CS}}\!\left(
\frac{\vec{w}_k}{\|\vec{w}_k\|},\,
\frac{b_k}{\|\vec{w}_k\|}
\right)
\le\frac{1}{2}\Big(
\lambda_{\max}\!\bigl(\nabla^2\loss(\vec{\theta})\bigr)
+2(R+1)\sqrt{2\loss(\vec{\theta})}-1
\Big).
\end{aligned}
\end{equation}
In particular, for any $\vec{\theta}\in \BEoS^{\textcolor{red}{\CS}}(\eta,\CD)$,
\begin{equation}\label{eq:beos_weighted_pathnorm2}
\begin{aligned}
\sum_{k=1}^K |v_k|\,\|\vec{w}_k\|\;
g_{\CD,\textcolor{red}{\CS}}\!\left(
\frac{\vec{w}_k}{\|\vec{w}_k\|},\,
\frac{b_k}{\|\vec{w}_k\|}
\right)
\le\frac{1}{\eta}-\frac{1}{2}+(R+1)\sqrt{2\loss(\vec{\theta})}.
\end{aligned}
\end{equation}
\end{theorem}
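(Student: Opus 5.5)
We follow the standard minima-stability route used for FCNs in \citet{mulayoff2021implicit,qiao2024stable,liang2025datageometry}: lower bound $\lambda_{\max}$ by a Rayleigh quotient along a well-chosen direction, and split the Hessian into a Gauss--Newton part and a residual part. Write
\begin{equation*}
\nabla^2\loss(\vec{\theta})=\frac1n\sum_{i=1}^n\nabla f_{\vec{\theta}}(\vec{x}_i)\nabla f_{\vec{\theta}}(\vec{x}_i)^\T+\frac1n\sum_{i=1}^n\bigl(f_{\vec{\theta}}(\vec{x}_i)-y_i\bigr)\nabla^2 f_{\vec{\theta}}(\vec{x}_i).
\end{equation*}
The first term is the kernel part. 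The second term is a residual part, which is nonzero only through the mixed $(v_k,\vec{w}_k,b_k)$ blocks because ReLU is piecewise linear (almost everywhere in $\vec\theta$). For any unit vector $\vec{a}$ we then have
\begin{equation*}
\lambda_{\max}\ge \frac1n\sum_i\langle\nabla f(\vec{x}_i),\vec{a}\rangle^2-\Bigl|\frac1n\sum_i r_i\,\vec{a}^\T\nabla^2 f(\vec{x}_i)\vec{a}\Bigr|.
\end{equation*}

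For the kernel part we take $\vec{a}=\nabla_{\vec\theta}\loss$-like directions, or more simply the normalized direction $\vec{a}\propto\Eb_i[\nabla f(\vec{x}_i)]$. By Jensen, $\frac1n\sum_i\langle\nabla f_i,\vec a\rangle^2\ge\|\frac1n\sum_i\nabla f_i\|^2$. We compute the components of $\frac1n\sum_i\nabla f_i$:
\begin{itemize}
\item $\partial_\beta f=1$, contributing $1$;
\item $\partial_{v_k}$ gives $\frac1{nJ}\sum_{i,j}\phi(\vec w_k^\T\pi_j(\vec x_i)-b_k)=\|\vec w_k\|\,\Eb[\phi(\vec u_k^\T\vec X-t_k)]$;
\item $\partial_{\vec w_k}$ gives $v_k\Eb[\vec X\mathds 1\{\vec u_k^\T\vec X>t_k\}]$, and $\partial_{b_k}$ gives $-v_k\Pb(\vec u_k^\T\vec X>t_k)$.
\end{itemize}
Here $\vec X=\vec X^{\CS}_{\CD}$ is uniform on the patch multiset, and the averaging over $(i,j)$ is exactly what turns sums over patches into these expectations. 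This is the only place the architecture enters. Summing squares, $\|\Eb\nabla f\|^2=1+\sum_k(\|\vec w_k\|^2A_k^2+v_k^2B_k^2)$, where $A_k=\Eb[\phi(\cdot)]$ and $B_k$ is the square root in \eqref{eq:weight_g_CNN}. By AM--GM, $\|\vec w_k\|^2A_k^2+v_k^2B_k^2\ge 2|v_k|\|\vec w_k\|A_kB_k=2|v_k|\|\vec w_k\|g_{\CD,\CS}(\vec u_k,t_k)$. This yields $\lambda_{\max}\ge 1+2\sum_k|v_k|\|\vec w_k\|g-(\text{residual})$, which matches the factor $\tfrac12$ and the $-1$ in \eqref{eq:beos_weighted_pathnorm}.

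The residual term is the main obstacle. The mixed second derivatives are $\partial_{v_k}\partial_{\vec w_k}f=\frac1J\sum_j\mathds 1\{\cdot\}\pi_j(\vec x)$ and $\partial_{v_k}\partial_{b_k}f=-\frac1J\sum_j\mathds1\{\cdot\}$, and all other second derivatives vanish. So
\begin{equation*}
|\vec a^\T\nabla^2 f(\vec x)\vec a|\le 2\sum_k|a_{v_k}|\bigl(\|a_{\vec w_k}\|R+|a_{b_k}|\bigr)\le 2(R+1)\|\vec a\|^2,
\end{equation*}
using $\|\pi_j(\vec x)\|\le\|\vec x\|\le R$ and Cauchy--Schwarz over $k$ with $2|a||b|\le a^2+b^2$, so a constant $(R+1)$ suffices. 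Then Cauchy--Schwarz over $i$ gives $|\frac1n\sum_i r_i\,\vec a^\T\nabla^2f_i\vec a|\le 2(R+1)\sqrt{\frac1n\sum r_i^2}=2(R+1)\sqrt{2\loss}$.

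The remaining care is non-differentiability at kinks. We handle it as in the FCN references, either by assuming twice-differentiability at $\vec\theta$ (implicit in Definition~\ref{def:BEoS}) or by using the almost-everywhere Hessian. Combining the two bounds gives \eqref{eq:beos_weighted_pathnorm}, and substituting $\lambda_{\max}\le2/\eta$ gives \eqref{eq:beos_weighted_pathnorm2}.
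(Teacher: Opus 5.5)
Your proof is correct and matches the paper's: the Gauss--Newton/residual split of the Hessian, the lower bound $\bigl\|\frac1n\sum_i\nabla f_{\vec\theta}(\vec x_i)\bigr\|^2$ (the paper's $\frac{1}{n^2}\|\Phi\vec 1\|^2$) with the averaged components read off as patch-multiset expectations, the per-neuron AM--GM step, and the $2(R+1)$ bound on $|\vec a^\T\nabla^2 f\,\vec a|$ combined with Cauchy--Schwarz over samples. The only difference is cosmetic: you evaluate the full Hessian along the normalized mean gradient and use Jensen, whereas the paper evaluates it along the top eigenvector of $\mT_\CD$ and lower-bounds $\lambda_{\max}(\mT_\CD)$ separately. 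Both routes give the same inequality.
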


The proof is in Appendix~\ref{app:dd_regularity}. In particular, the connection between the Hessian and $g_{\CD,\CS}$ is detailed in Proposition~\ref{prop:CNN_weight_function_population}. Intuitively, the weight function $g_{\CD,\CS}$ evaluates a neuron's activation boundary $(\vec{u},t)$ against the patch multiset. Crucially, the architecture $\CS$ enters solely in this way: different receptive field systems produce different patch multisets, hence different penalty structures. In particular, the term $\Pb(\vec{u}^\T\vec{X}^{\CS}_{\CD} > t)^2$ in $g_{\CD,\CS}$ suggests that a neuron that fires on a large fraction of patches incurs a large penalty on its path norm, while one that fires on few patches incurs a weak penalty. Returning to our example, under $\CS_1$ the patch multiset is essentially one-dimensional and highly concentrated; a direct calculation shows that $g_{\CD,\CS}$ is bounded away from zero for any neuron that activates on a non-negligible fraction of patches, so the BEoS condition enforces a meaningful path-norm constraint. Under $\CS_2$, the patch multiset is a 3-dimensional sphere; a neuron can isolate an individual patch, driving $g_{\CD,\CS}$ arbitrarily close to zero and rendering the BEoS constraint vacuous. The next two subsections develop the quantitative consequences of this dichotomy.

\vspace{-0.5em}
\subsection{When patch geometry induces strong regularization}\label{subsec:cnn_sphere}
\vspace{-0.5em}
The next result formalizes the $\CS_1$ regime: small receptive fields on spherical ambient data.

\begin{theorem}\label{thm:upper_bound}
Let $\CP$ have marginal $\mathrm{Uniform}(\Sph^{d-1})$, and let $\CD$ be a dataset of $n$ i.i.d samples. Assume $d>3$ and $1\le m<\frac{d(d-3)}{d+3}$. For any $\vec{\theta}\in\BEoS^{\CS}(\eta,\CD)$ with $\|f_{\vec{\theta}}\|_{\infty}\leq M$, with probability $\geq 1 - 2\delta$,
\begin{equation}\label{ineq:cnn_epsilon_trade_off_uniform}
\mathrm{GenGap}(f_{\vec{\theta}},\CD)
\lessapprox_d \poly\left(\frac{1}{\eta},J,M\right)\,
n^{-\frac{(d-m)(d+3)}{6d^2-2md+6d-6m}},
\end{equation}
where  $\lessapprox_d$ hides constants depending on $d$ and logarithmic factors in $n$ and $(J/\delta)$.
\end{theorem}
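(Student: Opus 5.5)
Starting from Theorem~\ref{thm:implicit_bias_CNN}, every $\vec{\theta}\in\BEoS^{\CS}(\eta,\CD)$ satisfies a bound on the weighted path norm $\sum_k|v_k|\|\vec{w}_k\|\,g_{\CD,\CS}(\vec{u}_k,t_k)$. Since $\|f_{\vec{\theta}}\|_\infty\le M$ and $|y|\le D$, we have $\loss(\vec{\theta})\le (M+D)^2/2$. The right-hand side of \eqref{eq:beos_weighted_pathnorm2} is therefore $\poly(1/\eta,M)$.

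The plan follows the FCN template of \citet{liang2025datageometry}, with the ambient point cloud replaced by the patch multiset. When $\vec{x}\sim\mathrm{Uniform}(\Sph^{d-1})$, each patch $\pi_j(\vec{x})\in\Rb^m$ has a marginal density proportional to $(1-\|\vec{z}\|^2)^{(d-m-2)/2}$ on $\Bb^m_1$, identical for every $j$. This density is bounded when $m\le d-2$, and it is a genuinely full-dimensional ball distribution rather than a sphere. That is the key gain.

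\textbf{Step 1: population weight.} Define $g_{\CP,\CS}$ with $\vec{X}$ drawn from this patch marginal. For a threshold $t\in[-1,1]$, compute or lower bound $g_{\CP,\CS}(\vec{u},t)$. The cap mass is $\Pb(\vec{u}^\T\vec{X}>t)\asymp (1-t)^{(d-m+1)/2}$ up to $d$-dependent powers, and $\Eb[\phi(\vec{u}^\T\vec{X}-t)]\asymp(1-t)^{(d-m+3)/2}$. Together these give $g\gtrsim(1-t)^{\alpha}$ for an explicit exponent $\alpha=\alpha(d,m)$. Neurons with $t\ge 1$ are inactive and can be discarded.

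\textbf{Step 2: empirical-to-population.} The patch multiset consists of $nJ$ points, but only $n$ of them are independent, since the $J$ patches of one sample are dependent. Average over $i$ the per-sample functions $\frac1J\sum_j\phi(\vec{u}^\T\pi_j(\vec{x}_i)-t)$ and the analogous indicator and vector terms. These form VC or pseudo-dimension classes in $(\vec{u},t)$, so uniform concentration gives $|g_{\CD,\CS}-g_{\CP,\CS}|\lesssim\sqrt{m\log(nJ/\delta)/n}$ uniformly, with probability $1-\delta$. Here the product and square-root structure is handled through Lipschitz bounds on each factor.

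\textbf{Step 3: split neurons by threshold.} Fix $\epsilon>0$. For neurons with $t_k\le 1-\epsilon$, the weight satisfies $g_{\CD,\CS}\gtrsim\epsilon^\alpha-\tilde O(n^{-1/2})$, so their path norm $\sum|v_k|\|\vec{w}_k\|$ is at most $\poly(1/\eta,M)\,\epsilon^{-\alpha}$. That component, $f_{\le}$, lies in a path-norm ball of two-layer ReLU networks on patches averaged over $J$. Its Rademacher complexity is $\lesssim\epsilon^{-\alpha}\sqrt{1/n}$, possibly up to a factor of $J$ from the averaging structure.

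The remaining neurons, with $t_k>1-\epsilon$, fire only on patches lying in $\epsilon$-caps. Control their contribution to the risk differently. Use $\|f\|_\infty\le M$ together with a covering or Rademacher bound on functions whose support within each patch lies in a thin cap, exploiting that a cap has mass $\lesssim\epsilon^{(d-m+1)/2}$ under the patch marginal. The generalization gap of this tail part is then bounded by $M^2$ times the union over $J$ locations of cap probabilities, plus a uniform deviation term. This is the step where the FCN analysis used a spherical shell and failed; here the bounded ball density makes caps genuinely small.

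\textbf{Step 4: balance and obstacle.} Combine the two contributions, each polynomial in $J$, $M$, and $1/\eta$. Choose $\epsilon$ to balance $\epsilon^{-\alpha}n^{-1/2}$ against the cap term, which should reproduce the exponent $\frac{(d-m)(d+3)}{6d^2-2md+6d-6m}$. The condition $m<d(d-3)/(d+3)$ is needed so that the resulting exponent is positive and the density integrals converge. Taking a union bound over the two concentration events gives probability $1-2\delta$.

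The main obstacle is Step~3's treatment of high-threshold neurons: their output weights $|v_k|$ are unconstrained, and we only know their sum is bounded in sup norm by $M$. I would first try the approach of \citet{liang2025datageometry}, which decomposes $f$ into a low-threshold network plus a residual. That residual is bounded by $2M$ and supported on the union of $J$ caps per input. Its population mass is then bounded via the cap volume, and its empirical mass via uniform convergence over cap unions of VC dimension $O(Jm\log J)$.
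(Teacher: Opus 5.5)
Your skeleton is the paper's: the weighted path norm from Theorem~\ref{thm:implicit_bias_CNN}, a population lower bound on $g$, uniform deviation of $g_{\CD,\CS}$, a split at level $1-\varepsilon$, then balancing. However, three steps fail as written.

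\textbf{Step~1 exponents.} For every unit $\vec u\in\Sph^{m-1}$, $\vec u^\T\pi_j(\vec X)$ is exactly a one-dimensional marginal of $\mathrm{Uniform}(\Sph^{d-1})$, with density $\propto(1-z^2)^{(d-3)/2}$. Hence $\Pb(\vec u^\T\pi_j(\vec X)>t)\asymp(1-t)^{(d-1)/2}$ and $\Eb[\phi(\vec u^\T\pi_j(\vec X)-t)]\asymp(1-t)^{(d+1)/2}$. This gives $g_{\CP,\CS}\gtrsim_d\varepsilon^{d}$ for $t\le 1-\varepsilon$, with no dependence on $m$; your exponents $(d-m+1)/2$ and $(d-m+3)/2$ are wrong.

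\textbf{High-threshold neurons (the more serious gap).} These cannot be controlled through cap masses. $K$ is unbounded and these neurons have arbitrary directions, so the set of inputs on which they fire is a union of up to $KJ$ caps. Neither a single cap mass nor a VC bound of order $Jm\log J$ controls that union. You also only know $\|f_{\vec\theta}\|_\infty\le M$, not that the residual is bounded by $2M$: the low-threshold part need not be bounded off the core.

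The paper splits input space instead of neurons. Set $\Iall_\varepsilon=\{\vec x:\max_j\|\pi_j(\vec x)\|\le 1-\varepsilon\}$ and let $\Oany_\varepsilon$ be its complement. Every cap $\{\vec z:\vec u^\T\vec z>t\}$ with $t\ge 1-\varepsilon$ lies in the shell $\{\|\vec z\|>1-\varepsilon\}$, so all such neurons vanish identically on $\Iall_\varepsilon$, and $f_{\vec\theta}$ equals the core network there. Meanwhile $\Oany_\varepsilon$ is a fixed, data-independent set of mass $\lesssim_d J\varepsilon^{(d-m)/2}$ by Proposition~\ref{prop:boundary_tail_for_projected}. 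Its contribution to the gap is $\lesssim JM^2\varepsilon^{(d-m)/2}$, using only $\|f_{\vec\theta}\|_\infty\le M$ and Hoeffding. This shell mass, not a cap mass, is what produces the factor $d-m$ in the rate.

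\textbf{Complexity bound.} Your bound is too weak to give the stated exponent. A Rademacher bound linear in the path norm gives an interior term $\lesssim MA\varepsilon^{-d}n^{-1/2}$. Balancing against $JM^2\varepsilon^{(d-m)/2}$ then yields only $n^{-(d-m)/(6d-2m)}$. This is strictly slower than the claimed $n^{-(d-m)(d+3)/(6d^2-2md+6d-6m)}$: after multiplying numerator and denominator by $d+3$, the denominators differ by $12d$.

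The paper rewrites the SCN as an FCN on $\Rb^d$ via the embeddings $\iota_j$, which preserves the path norm. It then uses the $L^\infty$ metric entropy $\log N(t,\Variation_C(\Bb_R^d),\|\cdot\|_\infty)\lessapprox_d(C/t)^{2d/(d+3)}$ of Proposition~\ref{prop:metric_entropy_of_bounded_variation_space} with Dudley chaining. This gives the sublinear dependence $C^{d/(d+3)}M^{(d+6)/(d+3)}n^{-1/2}$. With $C=A\varepsilon^{-d}$, the balance gives $\varepsilon^\star\asymp n^{-(d+3)/Q}$ with $Q=(d-m)(d+3)+2d^2$. Since $2Q=6d^2-2md+6d-6m$, this is exactly the stated exponent.

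\textbf{Role of the hypothesis on $m$.} The condition $m<d(d-3)/(d+3)$ is not about positivity of the exponent or convergence of density integrals; both hold for every $m<d$. It is the feasibility condition $(\varepsilon^\star)^d\gtrsim n^{-1/2}$, which keeps $g_{\CD,\CS}\ge g_{\CP,\CS}-O(n^{-1/2})$ of order $\varepsilon^d$ at the chosen scale. That is precisely the positivity of your own quantity $\epsilon^\alpha-\tilde O(n^{-1/2})$, which you never verify. Indeed, $n^{-d(d+3)/Q}\gtrsim n^{-1/2}$ holds if and only if $m<d(d-3)/(d+3)$.
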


The formal proof appears in Appendix~\ref{app:proof_upper_bound}. Notably, when $m$ is fixed and $d\to\infty$, the exponent rate scales as $-\frac{1}{6}+O(m/d)$, and thus there is no curse of dimensionality. In the same spherical setting, FCNs admit no non-trivial generalization guarantee under the BEoS condition~\citep{liang2025datageometry}. The geometric reason is that when $m\ll d$, the patch projections concentrate near the origin of $\Rb^m$ (see Proposition~\ref{prop:boundary_tail_for_projected}), which makes the patch multiset harder to shatter. A neuron's activation boundary can isolate/shatter only a small fraction of the patches without cutting through the dense interior, so $\Pb(\vec{u}^\T\vec{X}^{\CS}_{\CD} > t)$ and hence $g_{\CD,\CS}$ are bounded away from zero for any neuron that reaches a meaningful part of the distribution. The path-norm penalty in Theorem~\ref{thm:implicit_bias_CNN}therefore becomes effective. Figure~\ref{fig:gen_gap_two_plots} corroborates this prediction: the empirical generalization gap decays with $n$ at rates consistent with the theorem, and the gap decreases as $d$ grows for fixed patch size $m$. 
\begin{figure}[h]
    \centering
    \begin{subfigure}[t]{0.45\linewidth}
        \centering
        \includegraphics[width=\linewidth]{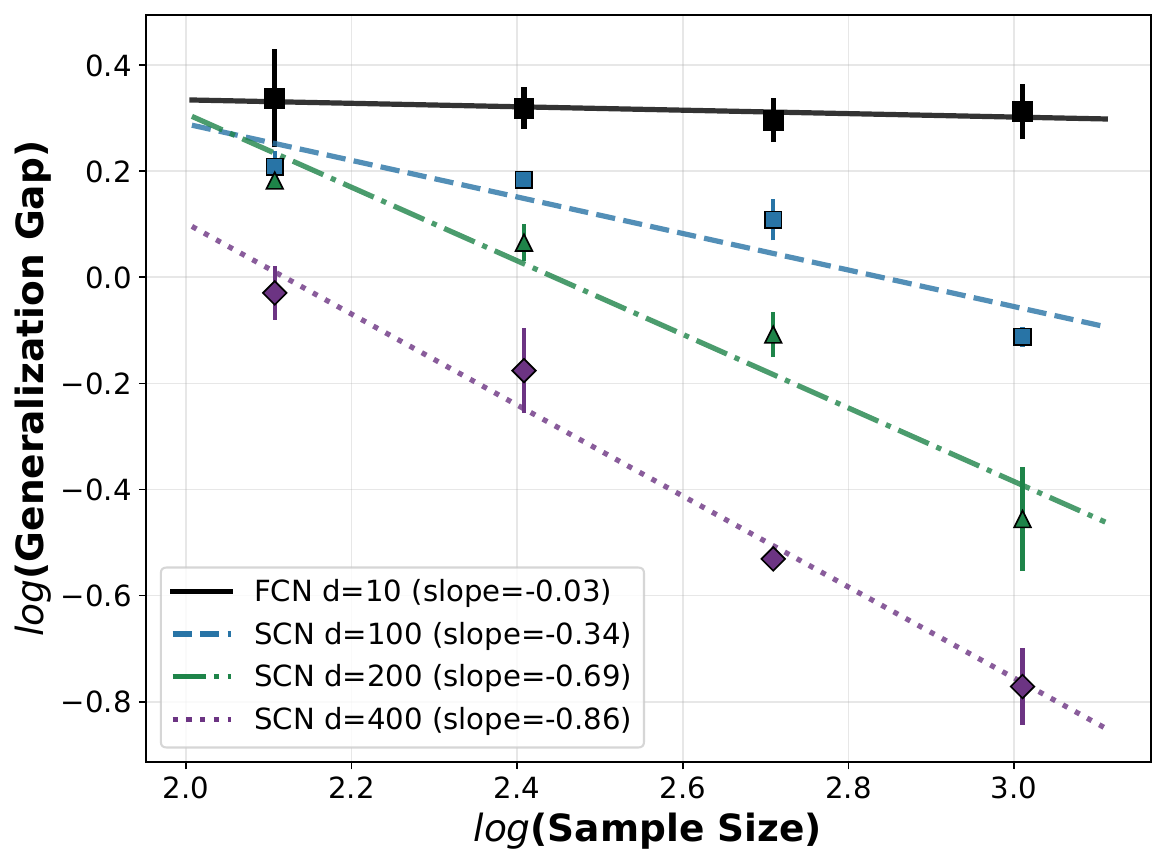}
        \caption{Log--log scaling with $n$.}
        \label{fig:gap_vs_n_loglog}
    \end{subfigure}
    \hfill
    \begin{subfigure}[t]{0.45\linewidth}
        \centering
        \includegraphics[width=\linewidth]{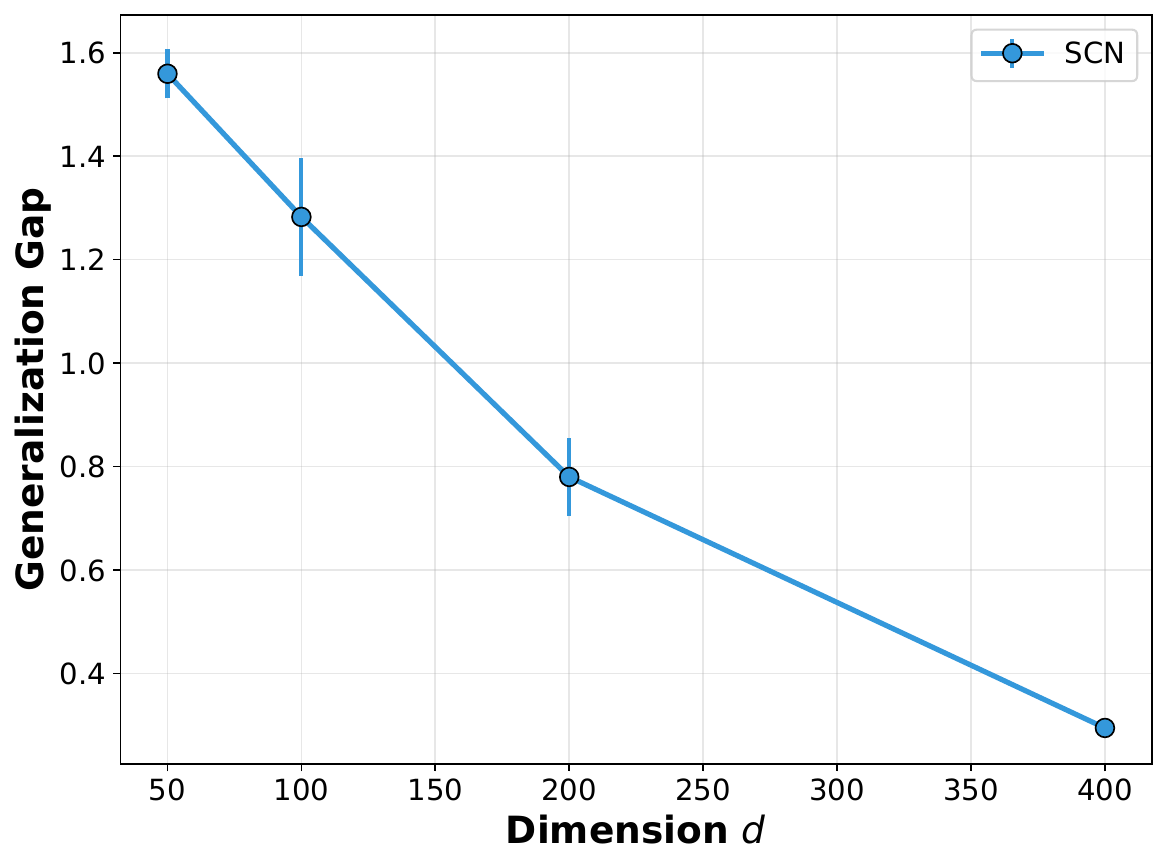}
        \caption{GenGap versus $d$ (fixed $n$).}
        \label{fig:gap_vs_d}
    \end{subfigure}
    \caption{\textbf{Generalization-gap scaling in synthetic experiments.} \textbf{(Left)} $\mathrm{GenGap}(f_{\vec{\theta}},\CD)$ versus the sample size $n$ on a log--log scale. The fitted slope summarizes the empirical rate: if $\mathrm{GenGap}\lesssim n^{-c}$, then $\log(\mathrm{GenGap})\le -c\log n + b$, so a more negative slope indicates faster decay (better generalization). In our experiments, the FCN slope is nearly flat (slope $=-0.03$ at $d=10$), whereas SCN exhibits increasingly negative slopes as $d$ grows (slope $=-0.34$ at $d=100$, $-0.69$ at $d=200$, and $-0.86$ at $d=400$), indicating faster decay with $n$. \textbf{(Right)} $\mathrm{GenGap}(f_{\vec{\theta}},\CD)$ versus the ambient dimension $d$ with $n$ fixed, illustrating that for SCN (with patch size $m\ll d$) the generalization gap remains stable and can even decrease as $d$ increases. More details can be found in Appendix \ref{app:experiment_details.}.}
    \vspace{-1em}
    \label{fig:gen_gap_two_plots}
\end{figure}

\vspace{-1em}
\subsection{When patch geometry fails to constrain}\label{subsec:cnn_overfit}

The previous generalization guaranty relies on the fact that low-dimensional patch projections make the patch multiset concentrated and thus hard to shatter by the activation hyperplanes of the filters in the network. The next result shows that if the sparse connectivity pattern cannot yield a favorable patch geometry, the BEoS condition alone cannot prevent SCNs from overfitting.

\begin{theorem}[Interpolation below the edge of stability]\label{thm:flat_lcws_gap}
Assume all training patches have norm at most one (after rescaling), and every nonzero-labeled training example contains a unit-norm patch that appears nowhere else in the full training patch multiset. There exists a network of the form \eqref{eq:model_CNN} with width $K\le n$ that interpolates $\CD$ and satisfies
\begin{equation}\label{eq:H_op_bound_lcws_gap}
\lambda_{\max}\!\bigl(\nabla_{\vec{\theta}}^2\loss\bigr)
\;\le\;
1+\frac{D^2+2/J^2}{n}.
\end{equation}
\end{theorem}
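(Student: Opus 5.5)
We construct an explicit interpolating network with one neuron per nonzero-labeled example. Each neuron is a very sharp "spike" aimed at that example's unique unit-norm patch. We then bound the Hessian at this zero-loss point through its Gauss--Newton form.

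\textbf{Construction.} For each $i$ with $y_i\neq 0$, let $\vec{p}_i$ be the unit-norm patch of $\vec{x}_i$ that appears nowhere else in the patch multiset. Every other patch $\vec{q}$ satisfies $\|\vec{q}\|\le 1$ and $\vec{q}\neq\vec{p}_i$, so Cauchy--Schwarz gives $\vec{p}_i^\T\vec{q}<1$. Since the multiset is finite, there is $\varepsilon\in(0,1)$ with $\vec{p}_i^\T\vec{q}<1-\varepsilon$ for all such $\vec{q}$ and all $i$.

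Set $\vec{w}_i=c_i\vec{p}_i$ and $b_i=c_i(1-\varepsilon)$ for a scale $c_i>0$ to be chosen. Neuron $i$ is then activated on exactly one patch in the whole training multiset, namely $\vec{p}_i$ inside $\vec{x}_i$, with activation value $a_i:=c_i\varepsilon$. (If $\vec{p}_i$ occurs at several locations of $\vec{x}_i$ itself, a multiplicity factor appears. I read ``appears nowhere else'' as excluding this.)

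Take $\beta=0$ and $v_i=Jy_i/a_i$. Then $f_{\vec{\theta}}(\vec{x}_i)=v_i a_i/J=y_i$. For $j\neq i$ we get $f_{\vec{\theta}}(\vec{x}_j)=y_j$, including $0$ for zero-labeled examples, because no other neuron fires on $\vec{x}_j$. Hence $\loss(\vec{\theta})=0$ and the width is $K\le n$.

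\textbf{Smoothness and Hessian.} Every pre-activation $\vec{w}_k^\T\pi_j(\vec{x}_l)-b_k$ is bounded away from $0$, so the activation pattern is locally constant and $\loss$ is $C^2$ near $\vec{\theta}$. Because all residuals vanish, the second-order term drops out and
\[
\nabla^2\loss(\vec{\theta})=\frac1n\sum_{l=1}^n\nabla f_{\vec{\theta}}(\vec{x}_l)\nabla f_{\vec{\theta}}(\vec{x}_l)^\T=\frac1n G^\T G,
\]
where $G$ has rows $\nabla f_{\vec{\theta}}(\vec{x}_l)^\T$. Its nonzero eigenvalues coincide with those of the Gram matrix $\frac1n GG^\T$.

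The gradient at $\vec{x}_l$ has two parts:
\begin{itemize}
\item a $\beta$-component equal to $1$;
\item if $y_l\neq0$, components only in neuron $l$'s parameters: $\partial_{v_l}f=a_l/J$, $\partial_{\vec{w}_l}f=(v_l/J)\vec{p}_l$, and $\partial_{b_l}f=-v_l/J$.
\end{itemize}
Distinct examples activate disjoint neurons, so
\[
GG^\T=\mathbf{1}\mathbf{1}^\T+\mathrm{diag}(s_l),\qquad s_l=\frac{a_l^2}{J^2}+\frac{2y_l^2}{a_l^2},
\]
with $s_l=0$ when $y_l=0$.

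\textbf{Balancing the scale.} Choose $a_l=J|y_l|$, i.e.\ $c_l=J|y_l|/\varepsilon$. This gives $s_l=y_l^2+2/J^2\le D^2+2/J^2$. By Weyl's inequality,
\[
\lambda_{\max}(GG^\T)\le\lambda_{\max}(\mathbf{1}\mathbf{1}^\T)+\max_l s_l\le n+D^2+2/J^2 .
\]
Dividing by $n$ yields \eqref{eq:H_op_bound_lcws_gap}.

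\textbf{Main obstacle.} The delicate step is this scale balancing. A naive choice such as $c_l=1$ makes $\partial_{\vec{w}_l}f$ of size $|y_l|/\varepsilon$, which blows up as $\varepsilon\to0$. One must exploit the rescaling freedom of ReLU neurons, under which $(\vec{w},b,v)\mapsto(c\vec{w},cb,v/c)$ leaves $f$ unchanged. Trading the $v$-gradient $a/J$ against the $(\vec{w},b)$-gradient $\sqrt2|y|/a$ is what produces the specific constant $D^2+2/J^2$. The remaining ingredients are routine once this is in place:
\begin{itemize}
\item exact isolation of each $\vec{p}_i$ by a halfspace (strict Cauchy--Schwarz);
\item local smoothness at the constructed point;
\item the block structure of the Gram matrix.
\end{itemize}
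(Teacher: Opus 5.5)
Your proposal is correct and follows essentially the same route as the paper. The paper builds the same one-neuron-per-example spike network ($\vec{w}_k=\vec{p}_k$, $b_k\in(\rho_k,1)$, $v_k=Jy_k/(1-b_k)$) and then uses ReLU homogeneity to rescale to $|v_k|=1$ with activation value $J|y_k|$, which is exactly your balanced choice $a_l=J|y_l|$; it then bounds the Gauss--Newton form by $\frac{1}{n}\bigl(\max_i(y_i^2+2/J^2)+n\bigr)$, just as your Weyl step on $\mathbf{1}\mathbf{1}^{\T}+\mathrm{diag}(s_l)$ does.
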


Theorem~\ref{thm:flat_lcws_gap} shows that the BEoS condition alone does not prevent memorization when the patch multiset admits single-patch isolation. 
If each nonzero-labeled example contains a patch that one filter can isolate from all other training patches, the network can use rarely activated filters to fit the labels. 
Because each such filter fires on only one out of the $nJ$ training patches, its tangent feature is localized, so the Hessian can remain small even at interpolation, yielding \eqref{eq:H_op_bound_lcws_gap}. 
From the weighted path norm perspective of Theorem~\ref{thm:implicit_bias_CNN}, the same rare activations correspond to small values of $g_{\CD,\CS}$, making the stability-induced constraint weak. The detailed proof can be found in Appendix \ref{app:flat_interp_lcws_gap}.

An empirical demonstration is shown in Figure~\ref{fig:flat_interpolation}. 
The experiment uses random spherical patches (case B in Figure \ref{fig:toy_models_2_receptive_fields}). Since these patches are uniformly sampled from  $\Sph^{m-1}$, the uniqueness condition in Theorem~\ref{thm:flat_lcws_gap} holds generically. 
The SCN rapidly interpolates labels while the sharpness hovers near $2/\eta$, showing that memorization is compatible with the EoS regime. 
The path-norm--activation scatter provides a complementary view: many neurons have large path norms but activate on only a small fraction of patches, precisely where the weighted path-norm constraint is weak.

\begin{figure}[h!]
\centering
\includegraphics[width=\linewidth]{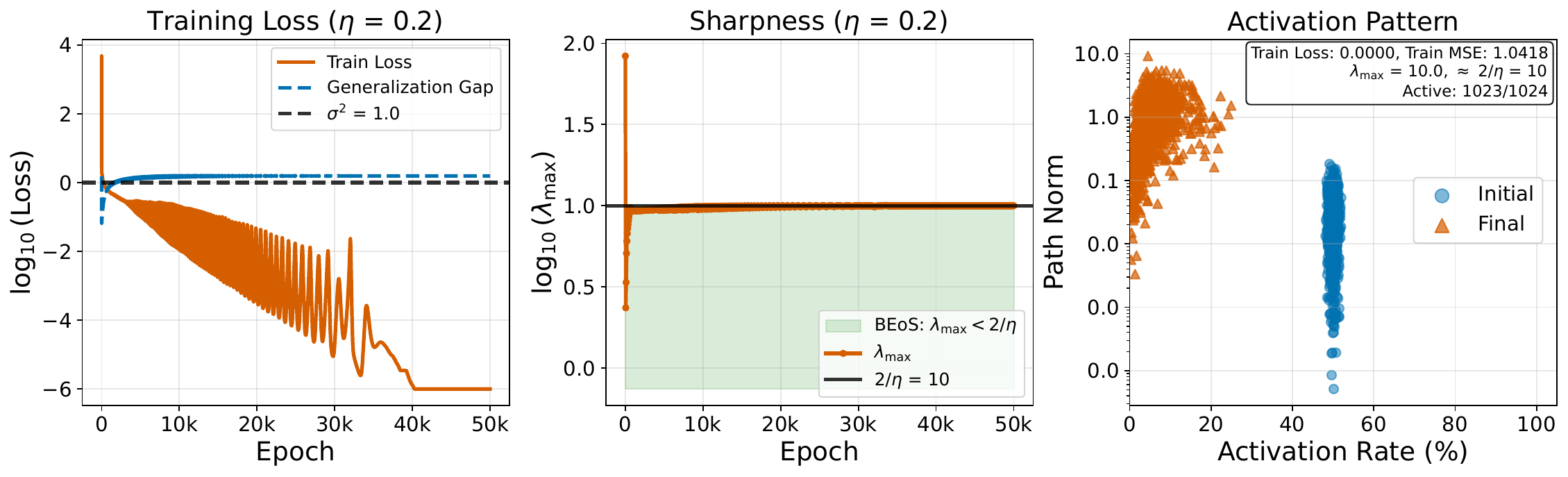}
    \caption{\textbf{Interpolation below the edge of stability.}
    \textbf{Left.} Training loss and generalization gap, showing that the SCN interpolates noisy labels in the random spherical-patch setting.
    \textbf{Middle.} The largest Hessian eigenvalue hovers near $2/\eta \approx 10$, indicating compatibility with the EoS regime.
    \textbf{Right.} Scatter plot of per-neuron path norm $|v_k|\,\|\vec{w}_k\|$ against activation rate
    $\frac{1}{nJ}\sum_{i,j}\mathds{1}\{\vec{w}_k^\T\pi_j(\vec{x}_i)>b_k\}$.
    Many neurons have large path norms while activating on only a small fraction of the patch multiset, consistent with the failure mode in Theorem~\ref{thm:flat_lcws_gap}: rare activations can fit labels while keeping the sharpness small.}
    \label{fig:flat_interpolation}
\end{figure}

Taken together, Theorem~\ref{thm:upper_bound} and Theorem~\ref{thm:flat_lcws_gap} formalize two regimes governed by the patch geometry induced by the receptive-field system $\CS$. 
When $\CS$ and the data produce a concentrated patch multiset that resists single-patch isolation, the BEoS condition yields an effective regularity constraint. 
When the induced patch multiset contains patches that can be isolated from the rest by ReLU half-spaces, the weighted constraint can be too weak to rule out memorization. 
For a fixed training dataset, the architectural design $\CS$ determines the induced patch multiset and hence which regime the network operates in.
\vspace{-1em}
\section{Patch Geometry of Natural Images}\label{sec:patch_geometry_prior}
\vspace{-0.5em}
We now examine which side of the dichotomy established in Section~\ref{sec:main_results} real image data falls on. We extract $3\times 3$ convolutional patches (stride $1$, no padding) from the CIFAR-10 training set, sample $10^7$ patches, and apply standard torchvision normalization. Figure~\ref{fig:patch_vs_image_geometry} compares the patch cloud with the cloud of full images. The patch cloud is strongly concentrated: $90\%$ of its variance lives in just $3$ directions of $\mathbb{R}^{27}$, while the full images in $\mathbb{R}^{3072}$ need over $100$ directions. Recalling the gradient view~\eqref{eq:filter_gradient}, where filter updates are linear combinations of activated patch vectors, such a low-dimensional patch cloud implies that the gradients themselves are constrained to a low-rank subspace---much like the one-dimensional $\CS_1$ example in Section~\ref{sec:main_results}, where filter updates were confined to a single direction. Following \citet{liang2025datageometry}, we further assess shatterability via half-space depth. The right panel reports the concentration curve $\Psi(T)$, the fraction of patches whose half-space depth exceeds $T$. A larger area under $\Psi$ indicates that typical patches lie deeper in the distribution, making them harder to isolate with a single hyperplane. The patch cloud exhibits substantially larger area than the image cloud.

\begin{figure}[h]
    \centering
    \includegraphics[width=0.85\linewidth]{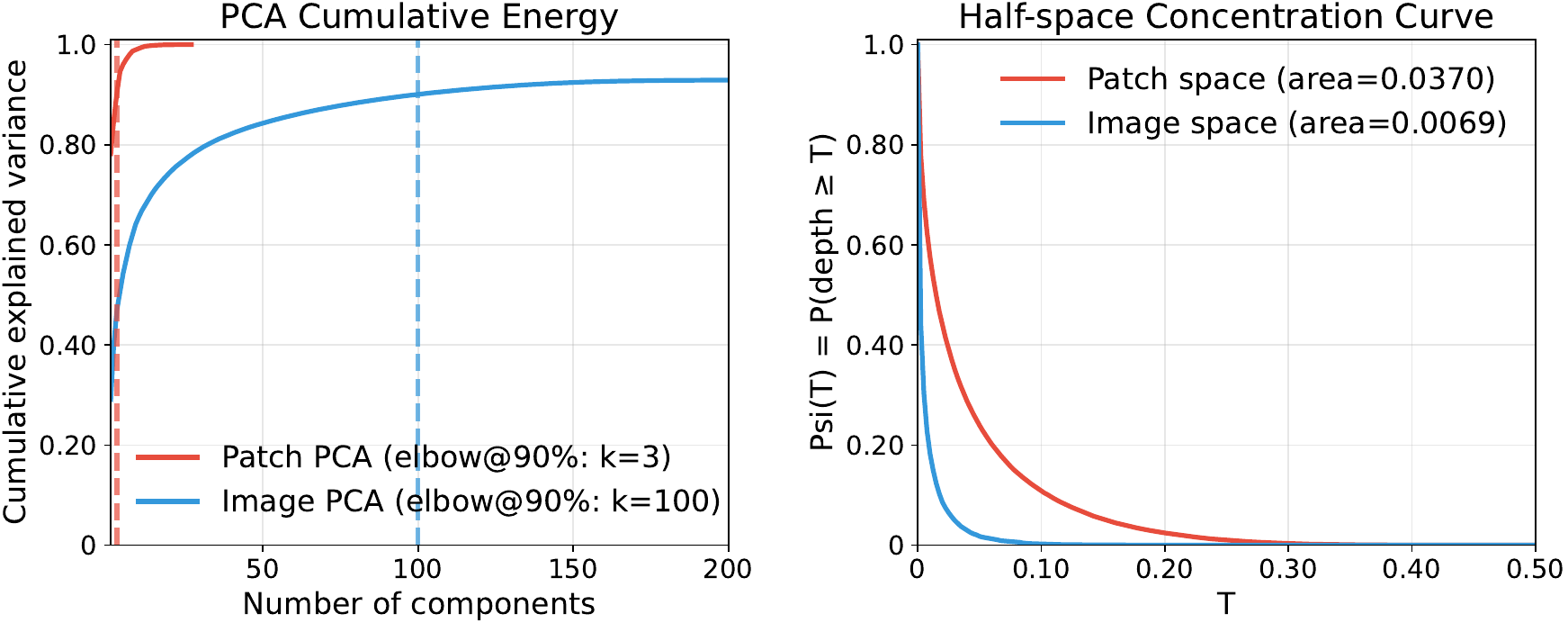}
    \caption{\textbf{Patch geometry vs.\ image geometry on CIFAR-10.}
    \textbf{Left:} PCA explained-variance ratio for the patch cloud ($\mathbb{R}^{27}$, solid) and the ambient image cloud ($\mathbb{R}^{3072}$, dashed). Patches concentrate $90\%$ variance in $3$ principal directions; images need over $100$.
    \textbf{Right:} Half-space concentration curves $\Psi(T)=\mathbb{P}(\operatorname{depth}(\mathbf{X})\ge T)$ for $T\in[0,0.5]$. Larger area means deeper points and fewer opportunities for near-isolating hyperplanes. The patch cloud shows markedly larger concentration.}
    \label{fig:patch_vs_image_geometry}
\end{figure}
\begin{figure}[h!]
    \centering
    \includegraphics[width=\linewidth]{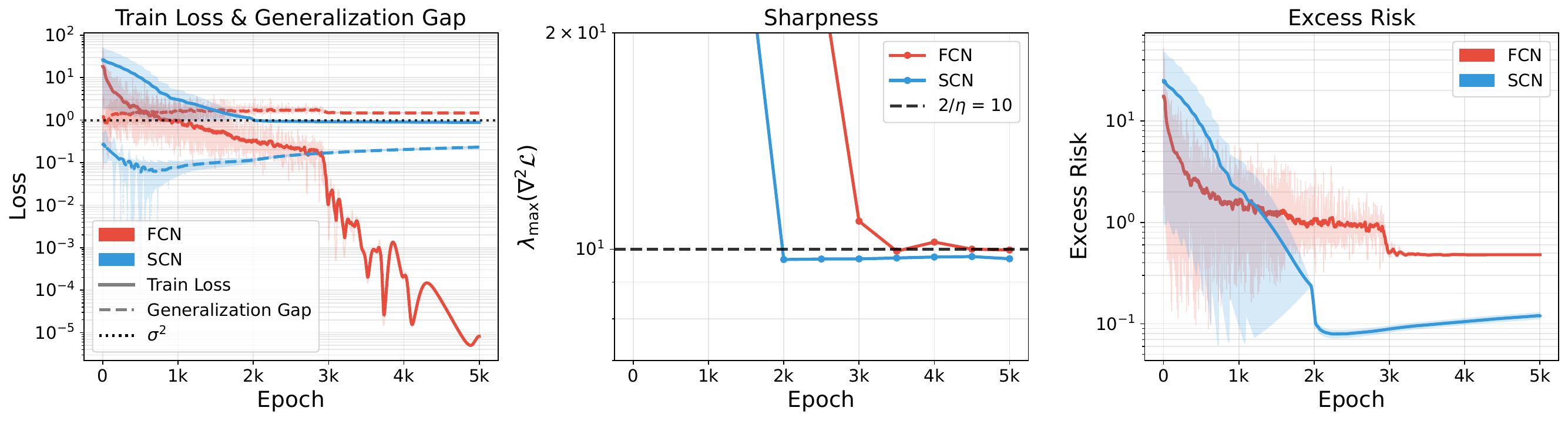}
    \caption{\textbf{Generalization on CIFAR-10 with noisy labels.} Two-layer SCN ($3\times3$ convolution, stride $1$, no padding) vs.\ FCN of the same width, both trained with full-batch GD ($\eta=0.2$).
    \textbf{Left:} Training loss (solid) and generalization gap (dashed) over epochs. The SCN stabilizes near the noise floor; the FCN interpolates.
    \textbf{Middle:} Sharpness $\lambda_{\max}(\nabla^2\mathcal{L})$. Both networks hover around $2/\eta\approx10$ after $\sim$3k epochs (before that the loss of FCN has not started to converge), confirming the BEoS regime.
    \textbf{Right:} Excess risk over training epochs. After $2$k epochs the SCN stabilizes at a much lower excess risk than the FCN.}
    \label{fig:cifar10_regression}
\end{figure}
These geometric properties place convolutional patches on CIFAR-10 on the structured side of the dichotomy. We verify the predicted generalization behavior with a nonparametric regression experiment on CIFAR-10 with noisy labels, comparing a two-layer SCN ($3\times3$ convolution, stride $1$, no padding) and a fully connected network of the same width. Figure~\ref{fig:cifar10_regression} reports the results. The SCN resists overfitting while the FCN interpolates noise (left panel). Both networks enter the BEoS regime after roughly $3000$ epochs, with the largest Hessian eigenvalue hovering around $2/\eta$ (middle panel). The right panel tracks excess risk throughout training: after roughly 2k epochs, the SCN maintains a substantially lower excess risk than the FCN. The contrast follows from the geometric measurements: the structured patch cloud makes the BEoS constraint effective for the SCN, whereas the unstructured image cloud provides no such constraint for the FCN.

\vspace{-1em}
\section{Discussion}\label{sec:discussion}
\vspace{-1em}
We set out to understand how sparse connectivity shapes the way gradient descent trained two-layer ReLU networks generalize, viewed through the lens of stability analysis.
We proved that the stability of gradient descent translates into an explicit constraint on the network whose strength is governed by the geometry of the patches that sparse connectivity produces.
We then proved that this constraint yields non-vacuous generalization bounds on high-dimensional spherical data, where the induced patch geometry is concentrated and low-dimensional, and where fully connected networks provably fail. Conversely, when the patch multiset contains patches that individual filters can isolate from the rest, we constructed networks that interpolate arbitrary labels while satisfying the same stability condition.
We further validated this picture on CIFAR-10, confirming that standard convolutional receptive fields produce exactly the kind of patch geometry that makes the constraint effective, giving a principled explanation for why CNNs generalize well.

This geometric perspective suggests a broader lens for thinking about architecture design. 
Sparse connectivity, the structure at the center of our analysis, is equally fundamental in deep networks: it appears in nearly every building block of modern CNNs and Vision Transformers.
In our two-layer setting, sparse connectivity processes patches of the raw input, and we showed that the effective dimension of these patches controls the strength of implicit regularization.
In a deep network, a local operator at an intermediate layer processes not raw pixels but a patch of the representation produced by earlier layers.
The geometry on which the stability constraint acts is therefore a dynamically evolving learned representation rather than a fixed data distribution.
\begin{table}[t]
\vspace{-3em}
\centering
\caption{The design trend of sparse connectivity in modern vision backbones.}
\label{tab:arch_trends}
\small
\begin{tabular}{@{}lll@{}}
\toprule
\textbf{Representative backbone} & \textbf{Core sparse connectivity pattern} & \textbf{Dimension of the processed unit} \\
\midrule
ResNet-50 \citep{he2016deep}         & $3\times3$ bottleneck convolution & $3\times3\times C_{\mathrm{in}}$ (up to $C_{\text{in}}=512$) \\
ResNeXt-50 \citep{xie2017aggregated}        & $3\times3$ grouped convolution   & $3\times3\times C_{\mathrm{in}}/g$ (up to $g=32$) \\
ConvNeXt-T \citep{liu2022convnet}        & $7\times7$ depthwise convolution & $7\times7=49$ \\
\midrule
ViT-B/16  \citep{dosovitskiy2021image}         & uniform Transformer       & $768$ for all blocks \\
Swin-T  \citep{liu2021swin}          & hierarchical Transformer  & $96\rightarrow192\rightarrow384\rightarrow768$ \\
\bottomrule
\end{tabular}
\vspace{-0.5em}
\end{table}

This shift recasts sparse connectivity as more than an efficiency tool. Through its receptive-field configuration, it also shapes the implicit regularization that gradient descent provides, with the vector dimension seen by each local operator as the relevant quantity. Table~\ref{tab:arch_trends} collects this perspective across architectures. From ResNet~\citep{he2016deep} to ResNeXt~\citep{xie2017aggregated} to ConvNeXt~\citep{liu2022convnet}, the per-operator dimension steadily shrinks through bottlenecks, grouping, and depthwise convolutions. ConvNeXt's kernel-size ablation finds that increasing the depthwise kernel from $3\times3$ to $7\times7$ improves performance, with gains saturating at larger sizes. RepLKNet~\citep{ding2022replknet} extends this direction further by showing that very large spatial kernels can be effective when the input to the spatial operator is restricted to $k^2$ rather than $k^2C$ through depthwise design. We suspect that the separation of spatial and channel mixing may itself produce more structured patch geometries, a question we leave for future work.

Vision Transformers exhibit the same pattern. From ViT~\citep{dosovitskiy2021image} to Swin~\citep{liu2021swin}, a uniform high-dimensional token space gives way to a hierarchy whose dimensions grow gradually, controlling the effective local operator dimension while recovering expressive power through later-stage mixing. Recent systematic patch scaling studies show that ViT performance on classification and dense prediction consistently improves as patch size decreases, even to $1\times1$ pixel tokenization~\citep{wang2025patchification}. In addition, self-supervised ViTs such as DINO also highlight the importance of small patches~\citep{caron2021emerging}. In our framework, reducing patch size changes the geometry of the patch matrix itself, potentially exposing the optimizer to more structured local vectors and thus strengthening the implicit regularization. Appendix~\ref{app:patch_size_local_operator_geometry} provides a more extensive discussion of these connections.

Our analysis began with a minimal toy model, yet the mechanism it reveals carries a broader lesson. Sparse connectivity is not merely a design choice for computational efficiency. It actively participates in the gradient dynamics that govern representation learning and implicit regularization. We believe this work is only a starting point. By offering a geometric lens on how architecture shapes optimization, it opens a direction toward principled system-algorithm co-design where architectural choices and training algorithms are understood as interacting through the geometry they jointly expose to gradient descent.





\clearpage

\bibliography{reference}

@article{fukushima1988neocognitron,
  title={Neocognitron: A hierarchical neural network capable of visual pattern recognition},
  author={Fukushima, Kunihiko},
  journal={Neural networks},
  volume={1},
  number={2},
  pages={119--130},
  year={1988},
  publisher={Elsevier}
}

@article{lecun1998gradient,
  title   = {Gradient-Based Learning Applied to Document Recognition},
  author  = {LeCun, Yann and Bottou, L{\'e}on and Bengio, Yoshua and Haffner, Patrick},
  journal = {Proceedings of the IEEE},
  volume  = {86},
  number  = {11},
  pages   = {2278--2324},
  year    = {1998}
}

@article{soudry2018implicit,
  title={The implicit bias of gradient descent on separable data},
  author={Soudry, Daniel and Hoffer, Elad and Nacson, Mor Shpigel and Gunasekar, Suriya and Srebro, Nathan},
  journal={Journal of Machine Learning Research},
  volume={19},
  number={70},
  pages={1--57},
  year={2018}
}

@inproceedings{gunasekar2018characterizing,
  title={Characterizing implicit bias in terms of optimization geometry},
  author={Gunasekar, Suriya and Lee, Jason and Soudry, Daniel and Srebro, Nathan},
  booktitle={International Conference on Machine Learning},
  pages={1832--1841},
  year={2018},
  organization={PMLR}
}

@article{gunasekar2018implicit,
  title={Implicit bias of gradient descent on linear convolutional networks},
  author={Gunasekar, Suriya and Lee, Jason D and Soudry, Daniel and Srebro, Nati},
  journal={Advances in neural information processing systems},
  volume={31},
  year={2018}
}

@article{arora2019exact,
  title={On exact computation with an infinitely wide neural net},
  author={Arora, Sanjeev and Du, Simon S and Hu, Wei and Li, Zhiyuan and Salakhutdinov, Russ R and Wang, Ruosong},
  journal={Advances in neural information processing systems},
  volume={32},
  year={2019}
}

@inproceedings{he2016deep,
  title={Deep residual learning for image recognition},
  author={He, Kaiming and Zhang, Xiangyu and Ren, Shaoqing and Sun, Jian},
  booktitle={Proceedings of the IEEE conference on computer vision and pattern recognition},
  pages={770--778},
  year={2016}
}

@inproceedings{ronneberger2015u,
  title={U-net: Convolutional networks for biomedical image segmentation},
  author={Ronneberger, Olaf and Fischer, Philipp and Brox, Thomas},
  booktitle={International Conference on Medical image computing and computer-assisted intervention},
  pages={234--241},
  year={2015},
  organization={Springer}
}

@article{lai2025principles,
  title={The principles of diffusion models},
  author={Lai, Chieh-Hsin and Song, Yang and Kim, Dongjun and Mitsufuji, Yuki and Ermon, Stefano},
  journal={arXiv preprint arXiv:2510.21890},
  year={2025}
}

@article{krizhevsky2012imagenet,
  title={Imagenet classification with deep convolutional neural networks},
  author={Krizhevsky, Alex and Sutskever, Ilya and Hinton, Geoffrey E},
  journal={Advances in neural information processing systems},
  volume={25},
  year={2012}
}

@inproceedings{oono2019approximation,
  title={Approximation and non-parametric estimation of ResNet-type convolutional neural networks},
  author={Oono, Kenta and Suzuki, Taiji},
  booktitle={International conference on machine learning},
  pages={4922--4931},
  year={2019},
  organization={PMLR}
}

@article{yarotsky2017error,
  title={Error bounds for approximations with deep ReLU networks},
  author={Yarotsky, Dmitry},
  journal={Neural networks},
  volume={94},
  pages={103--114},
  year={2017},
  publisher={Elsevier}
}

@inproceedings{liu2021besov,
  title={Besov function approximation and binary classification on low-dimensional manifolds using convolutional residual networks},
  author={Liu, Hao and Chen, Minshuo and Zhao, Tuo and Liao, Wenjing},
  booktitle={International Conference on Machine Learning},
  pages={6770--6780},
  year={2021},
  organization={PMLR}
}

@article{zhang2024nonparametric,
  title={Nonparametric classification on low dimensional manifolds using overparameterized convolutional residual networks},
  author={Zhang, Zixuan and Zhang, Kaiqi and Chen, Minshuo and Takeda, Yuma and Wang, Mengdi and Zhao, Tuo and Wang, Yu-Xiang},
  journal={Advances in Neural Information Processing Systems},
  volume={37},
  pages={65738--65764},
  year={2024}
}

@misc{dlmf,
  title        = {{NIST} Digital Library of Mathematical Functions},
  howpublished = {http://dlmf.nist.gov/},
  note         = {See \S5.6(i), Eq.~5.6.E4 for Gautschi's inequality (Gamma ratio bounds).}
}

@inproceedings{arpit2017memorization,
  author    = {Devansh Arpit and Stanislaw Jastrzebski and Nicolas Ballas and David Krueger and Emmanuel Bengio and Maxinder S. Kanwal and Tegan Maharaj and Asja Fischer and Aaron C. Courville and Yoshua Bengio and Simon Lacoste-Julien},
  title     = {A Closer Look at Memorization in Deep Networks},
  booktitle = {Proceedings of the 34th International Conference on Machine Learning (ICML)},
  pages     = {233--242},
  publisher = {PMLR},
  year      = {2017}
}

@article{haussler1992decision,
  title={Decision-theoretic generalizations of the PAC model for neural net and other learning applications},
  author={Haussler, David},
  journal={Information and Computation},
  volume={100},
  number={1},
  pages={78--150},
  year={1992}
}

@book{vanderVaartWellner1996,
  title={Weak Convergence and Empirical Processes},
  author={van der Vaart, Aad W. and Wellner, Jon A.},
  year={1996},
  publisher={Springer}
}

@inproceedings{liang2025_neural_shattering,
  title = {Stable Minima of {ReLU} Neural Networks Suffer from the Curse of Dimensionality: The Neural Shattering Phenomenon},
  author = {Liang, Tongtong and Qiao, Dan and Wang, Yu-Xiang and Parhi, Rahul},
  year = {2025},
  booktitle = {Advances in Neural Information Processing Systems (NeurIPS)},
}

@book{mohri2018foundations,
  title={Foundations of Machine Learning},
  author={Mohri, Mehryar and Rostamizadeh, Afshin and Talwalkar, Ameet},
  year={2018},
  publisher={MIT Press},
  edition={Second}
}

@inproceedings{liang2025datageometry,
  title = {Generalization Below the Edge of Stability: The Role of Data Geometry},
  author = {Liang, Tongtong and Cloninger, Alexander and Parhi, Rahul and Wang, Yu-Xiang},
  year = {2026},
  booktitle = {International Conference on Learning Representations (ICLR)},
}

@inproceedings{cohen2021gradient,
  title={Gradient descent on neural networks typically occurs at the edge of stability},
  author={Cohen, Jeremy and Kaur, Simran and Li, Yuanzhi and Kolter, J Zico and Talwalkar, Ameet},
  booktitle={International Conference on Learning Representations},
  year={2021}
}

@article{jacot2018neural,
  title={Neural tangent kernel: Convergence and generalization in neural networks},
  author={Jacot, Arthur and Gabriel, Franck and Hongler, Cl{\'e}ment},
  journal={Advances in neural information processing systems},
  volume={31},
  year={2018}
}

@article{parhi2023near,
  author    = {Rahul Parhi and Robert D. Nowak},
  title     = {Near-Minimax Optimal Estimation With Shallow {ReLU} Neural Networks},
  journal   = {IEEE Transactions on Information Theory},
  volume    = {69},
  number    = {2},
  pages     = {1125--1139},
  year      = {2023},
  publisher = {IEEE},
}

@inproceedings{wu2023implicit,
  title={The implicit regularization of dynamical stability in stochastic gradient descent},
  author={Wu, Lei and Su, Weijie J},
  booktitle={International Conference on Machine Learning},
  pages={37656--37684},
  year={2023},
  organization={PMLR}
}

@inproceedings{li2021convolutional,
  title={Why Are Convolutional Nets More Sample-Efficient than Fully-Connected Nets?},
  author={Li, Zhiyuan and Zhang, Yi and Arora, Sanjeev},
  booktitle={International Conference on Learning Representations},
year={2021}
}

@inproceedings{lahoti2024role,
  title={Role of Locality and Weight Sharing in Image-Based Tasks: A Sample Complexity Separation between CNNs, LCNs, and FCNs},
  author={Lahoti, Aakash and Karp, Stefani and Winston, Ezra and Singh, Aarti and Li, Yuanzhi},
  booktitle={The Twelfth International Conference on Learning Representations},
year={2024}
}

@article{wang2023theoretical,
  title={Theoretical analysis of the inductive biases in deep convolutional networks},
  author={Wang, Zihao and Wu, Lei},
  journal={Advances in Neural Information Processing Systems},
  volume={36},
  pages={74289--74338},
  year={2023}
}

@article{mulayoff2021implicit,
  title={The implicit bias of minima stability: A view from function space},
  author={Mulayoff, Rotem and Michaeli, Tomer and Soudry, Daniel},
  journal={Advances in Neural Information Processing Systems},
  volume={34},
  pages={17749--17761},
  year={2021}
}

@inproceedings{damian2023self,
  title={Self-Stabilization: The Implicit Bias of Gradient Descent at the Edge of Stability},
  author={Damian, Alex and Nichani, Eshaan and Lee, Jason D},
  booktitle={International Conference on Learning Representations},
  year={2023}
}

@inproceedings{nacson2022implicit,
  title={The Implicit Bias of Minima Stability in Multivariate Shallow {ReLU} Networks},
  author={Nacson, Mor Shpigel and Mulayoff, Rotem and Ongie, Greg and Michaeli, Tomer and Soudry, Daniel},
  booktitle={International Conference on Learning Representations},
  year={2023}
}

@article{wu2018sgd,
  title={How {SGD} selects the global minima in over-parameterized learning: A dynamical stability perspective},
  author={Wu, Lei and Ma, Chao and E, Weinan},
  journal={Advances in Neural Information Processing Systems},
  volume={31},
  year={2018}
}

@book{wainwright2019high,
  title={High-dimensional statistics: A non-asymptotic viewpoint},
  author={Wainwright, Martin J},
  volume={48},
  year={2019},
  publisher={Cambridge university press}
}

@inproceedings{
qiao2024stable,
title={Stable Minima Cannot Overfit in Univariate {ReLU} Networks: Generalization by Large Step Sizes},
author={Dan Qiao and Kaiqi Zhang and Esha Singh and Daniel Soudry and Yu-Xiang Wang},
booktitle={Advances in Neural Information Processing Systems},
	pages = {94163--94208},
	volume = {37},
year={2024},
}

@article{nar2018step,
  title={Step size matters in deep learning},
  author={Nar, Kamil and Sastry, Shankar},
  journal={Advances in Neural Information Processing Systems},
  volume={31},
  year={2018}
}

@article{ding2024flat,
  title={Flat minima generalize for low-rank matrix recovery},
  author={Ding, Lijun and Drusvyatskiy, Dmitriy and Fazel, Maryam and Harchaoui, Zaid},
  journal={Information and Inference: A Journal of the IMA},
  volume={13},
  number={2},
  pages={iaae009},
  year={2024},
  publisher={Oxford University Press}
}

@article{siegel2023characterization,
  title={Characterization of the variation spaces corresponding to shallow neural networks},
  author={Siegel, Jonathan W. and Xu, Jinchao},
  journal={Constructive Approximation},
  pages={1--24},
  year={2023},
  publisher={Springer}
}

@inproceedings{zhang2017rethinking,
  title     = {Understanding Deep Learning Requires Rethinking Generalization},
  author    = {Zhang, Chiyuan and Bengio, Samy and Hardt, Moritz and Recht, Benjamin and Vinyals, Oriol},
  booktitle = {International Conference on Learning Representations (ICLR)},
  year      = {2017},
  url       = {https://openreview.net/forum?id=Sy8gdB9xx},
  eprint    = {1611.03530},
  archivePrefix = {arXiv},
  primaryClass  = {cs.LG}
}

@book{AnthonyBartlett1999,
  author    = {Martin Anthony and Peter L. Bartlett},
  title     = {Neural Network Learning: Theoretical Foundations},
  publisher = {Cambridge University Press},
  year      = {1999}
}

@inproceedings{mhaskar2017and,
  title={When and why are deep networks better than shallow ones?},
  author={Mhaskar, Hrushikesh and Liao, Qianli and Poggio, Tomaso},
  booktitle={Proceedings of the AAAI conference on artificial intelligence},
  volume={31},
  number={1},
  year={2017}
}

@techreport{poggio2022foundations,
  title={Foundations of deep learning: Compositional sparsity of computable functions},
  author={Poggio, Tomaso},
  year={2022},
  institution={CBMM memo 138}
}

@article{poggio2024compositional,
  title={Compositional sparsity of learnable functions},
  author={Poggio, Tomaso and Fraser, Maia},
  journal={Bulletin of the American Mathematical Society},
  volume={61},
  number={3},
  pages={438--456},
  year={2024}
}

@article{zhou2020universality,
  title={Universality of deep convolutional neural networks},
  author={Zhou, Ding-Xuan},
  journal={Applied and computational harmonic analysis},
  volume={48},
  number={2},
  pages={787--794},
  year={2020},
  publisher={Elsevier}
}

@article{aharon2006k,
  title={K-SVD: An algorithm for designing overcomplete dictionaries for sparse representation},
  author={Aharon, Michal and Elad, Michael and Bruckstein, Alfred},
  journal={IEEE Transactions on signal processing},
  volume={54},
  number={11},
  pages={4311--4322},
  year={2006},
  publisher={IEEE}
}

@inproceedings{dosovitskiy2021image,
  title     = {An Image is Worth 16x16 Words: Transformers for Image Recognition at Scale},
  author    = {Dosovitskiy, Alexey and Beyer, Lucas and Kolesnikov, Alexander and Weissenborn, Dirk and Zhai, Xiaohua and Unterthiner, Thomas and Dehghani, Mostafa and Minderer, Matthias and Heigold, Georg and Gelly, Sylvain and Uszkoreit, Jakob and Houlsby, Neil},
  booktitle = {International Conference on Learning Representations},
  year      = {2021}
}

@article{paulin2017convolutional,
  title={Convolutional patch representations for image retrieval: an unsupervised approach},
  author={Paulin, Mattis and Mairal, Julien and Douze, Matthijs and Harchaoui, Zaid and Perronnin, Florent and Schmid, Cordelia},
  journal={International Journal of Computer Vision},
  volume={121},
  number={1},
  pages={149--168},
  year={2017},
  publisher={Springer}
}

@article{peyre2009manifold,
  title={Manifold models for signals and images},
  author={Peyr{\'e}, Gabriel},
  journal={Computer vision and image understanding},
  volume={113},
  number={2},
  pages={249--260},
  year={2009},
  publisher={Elsevier}
}

@article{mao2021theory,
  title={Theory of deep convolutional neural networks III: Approximating radial functions},
  author={Mao, Tong and Shi, Zhongjie and Zhou, Ding-Xuan},
  journal={Neural Networks},
  volume={144},
  pages={778--790},
  year={2021},
  publisher={Elsevier}
}

@article{shi2025approximation,
  title={Approximation and estimation capability of Vision Transformers for hierarchical compositional models},
  author={Shi, Zhongjie and Fang, Zhiying and Cao, Yuan},
  journal={Applied and Computational Harmonic Analysis},
  pages={101849},
  year={2025},
  publisher={Elsevier}
}

@inproceedings{brutzkus2022efficient,
  title={Efficient learning of cnns using patch based features},
  author={Brutzkus, Alon and Globerson, Amir and Malach, Eran and Netser, Alon Regev and Shalev-Schwartz, Shai},
  booktitle={International Conference on Machine Learning},
  pages={2336--2356},
  year={2022},
  organization={PMLR}
}

@inproceedings{coates2011analysis,
  title={An analysis of single-layer networks in unsupervised feature learning},
  author={Coates, Adam and Ng, Andrew and Lee, Honglak},
  booktitle={Proceedings of the fourteenth international conference on artificial intelligence and statistics},
  pages={215--223},
  year={2011},
  organization={JMLR Workshop and Conference Proceedings}
}

@article{trockman2022patches,
  title={Patches are all you need?},
  author={Trockman, Asher and Kolter, J Zico},
  journal={arXiv preprint arXiv:2201.09792},
  year={2022}
}

@inproceedings{thiry2021unreasonable,
  title     = {The Unreasonable Effectiveness of Patches in Deep Convolutional Kernels Methods},
  author    = {Thiry, Louis and Arbel, Michael and Belilovsky, Eugene and Oyallon, Edouard},
  booktitle = {International Conference on Learning Representations},
  year      = {2021}
}

@inproceedings{xie2017aggregated,
  title     = {Aggregated Residual Transformations for Deep Neural Networks},
  author    = {Xie, Saining and Girshick, Ross B. and Doll{\'a}r, Piotr and Tu, Zhuowen and He, Kaiming},
  booktitle = {IEEE Conference on Computer Vision and Pattern Recognition ({CVPR})},
  pages     = {5987--5995},
  year      = {2017}
}

@inproceedings{liu2022convnet,
  title     = {A {ConvNet} for the 2020s},
  author    = {Liu, Zhuang and Mao, Hanzi and Wu, Chao-Yuan and Feichtenhofer, Christoph and Darrell, Trevor and Xie, Saining},
  booktitle = {IEEE/CVF Conference on Computer Vision and Pattern Recognition ({CVPR})},
  pages     = {11966--11976},
  year      = {2022}
}

@inproceedings{liu2021swin,
  title     = {Swin Transformer: Hierarchical Vision Transformer Using Shifted Windows},
  author    = {Liu, Ze and Lin, Yutong and Cao, Yue and Hu, Han and Wei, Yixuan and Zhang, Zheng and Lin, Stephen and Guo, Baining},
  booktitle = {IEEE/CVF International Conference on Computer Vision ({ICCV})},
  pages     = {9992--10002},
  year      = {2021}
}

@inproceedings{yuan2021tokens,
  title     = {Tokens-to-Token ViT: Training Vision Transformers From Scratch on ImageNet},
  author    = {Yuan, Li and Chen, Yunpeng and Wang, Tao and Yu, Weihao and Shi, Yujun and Jiang, Zihang and Tay, Francis E. H. and Feng, Jiashi and Yan, Shuicheng},
  booktitle = {Proceedings of the IEEE/CVF International Conference on Computer Vision (ICCV)},
  pages     = {558--567},
  year      = {2021}
}

@inproceedings{wu2021cvt,
  title     = {CvT: Introducing Convolutions to Vision Transformers},
  author    = {Wu, Haiping and Xiao, Bin and Codella, Noel and Liu, Mengchen and Dai, Xiyang and Yuan, Lu and Zhang, Lei},
  booktitle = {Proceedings of the IEEE/CVF International Conference on Computer Vision (ICCV)},
  pages     = {22--31},
  year      = {2021}
}

@article{wang2022pvtv2,
  title   = {PVT v2: Improved Baselines with Pyramid Vision Transformer},
  author  = {Wang, Wenhai and Xie, Enze and Li, Xiang and Fan, Deng-Ping and Song, Kaitao and Liang, Ding and Lu, Tong and Luo, Ping and Shao, Ling},
  journal = {Computational Visual Media},
  volume  = {8},
  number  = {3},
  pages   = {415--424},
  year    = {2022}
}

@inproceedings{caron2021emerging,
  title     = {Emerging Properties in Self-Supervised Vision Transformers},
  author    = {Caron, Mathilde and Touvron, Hugo and Misra, Ishan and J{\'e}gou, Herv{\'e} and Mairal, Julien and Bojanowski, Piotr and Joulin, Armand},
  booktitle = {Proceedings of the IEEE/CVF International Conference on Computer Vision (ICCV)},
  pages     = {9650--9660},
  year      = {2021}
}

@inproceedings{xiao2021early,
  title     = {Early Convolutions Help Transformers See Better},
  author    = {Xiao, Tete and Singh, Mannat and Mintun, Eric and Darrell, Trevor and Doll{\'a}r, Piotr and Girshick, Ross},
  booktitle = {Advances in Neural Information Processing Systems},
  volume    = {34},
  pages     = {30392--30400},
  year      = {2021}
}

@inproceedings{ding2022replknet,
  title     = {Scaling Up Your Kernels to 31x31: Revisiting Large Kernel Design in CNNs},
  author    = {Ding, Xiaohan and Zhang, Xiangyu and Zhou, Yizhuang and Han, Jungong and Ding, Guiguang and Sun, Jian},
  booktitle = {Proceedings of the IEEE/CVF Conference on Computer Vision and Pattern Recognition (CVPR)},
  pages     = {11963--11975},
  year      = {2022}
}

@inproceedings{beyer2023flexivit,
  title     = {FlexiViT: One Model for All Patch Sizes},
  author    = {Beyer, Lucas and Izmailov, Pavel and Kolesnikov, Alexander and Caron, Mathilde and Kornblith, Simon and Zhai, Xiaohua and Minderer, Matthias and Tschannen, Michael and Alabdulmohsin, Ibrahim and Pavetic, Filip},
  booktitle = {Proceedings of the IEEE/CVF Conference on Computer Vision and Pattern Recognition (CVPR)},
  pages     = {14496--14506},
  year      = {2023}
}

@article{nguyen2024morethan16,
  title   = {An Image is Worth More Than 16x16 Patches: Exploring Transformers on Individual Pixels},
  author  = {Nguyen, Duy-Kien and Assran, Mahmoud and Jain, Unnat and Oswald, Martin R. and Snoek, Cees G. M. and Chen, Xinlei},
  journal = {arXiv preprint arXiv:2406.09415},
  year    = {2024}
}

@inproceedings{wang2025patchification,
  title     = {Scaling Laws in Patchification: An Image Is Worth 50,176 Tokens And More},
  author    = {Wang, Feng and Yu, Yaodong and Wei, Guoyizhe and Shao, Wei and Zhou, Yuyin and Yuille, Alan and Xie, Cihang},
  booktitle = {Proceedings of the 42nd International Conference on Machine Learning},
  series    = {Proceedings of Machine Learning Research},
  volume    = {267},
  pages     = {65278--65290},
  publisher = {PMLR},
  year      = {2025}
}

@inproceedings{touvron2021deit,
  title     = {Training Data-Efficient Image Transformers \& Distillation Through Attention},
  author    = {Touvron, Hugo and Cord, Matthieu and Douze, Matthijs and Massa, Francisco and Sablayrolles, Alexandre and J{\'e}gou, Herv{\'e}},
  booktitle = {Proceedings of the 38th International Conference on Machine Learning},
  series    = {Proceedings of Machine Learning Research},
  volume    = {139},
  pages     = {10347--10357},
  publisher = {PMLR},
  year      = {2021}
}

@InProceedings{Wang_2025_CVPR,
  author    = {Wang, Feng and Yang, Timing and Yu, Yaodong and Ren, Sucheng and Wei, Guoyizhe and Wang, Angtian and Shao, Wei and Zhou, Yuyin and Yuille, Alan and Xie, Cihang},
  title     = {Adventurer: Optimizing Vision Mamba Architecture Designs for Efficiency},
  booktitle = {Proceedings of the IEEE/CVF Conference on Computer Vision and Pattern Recognition (CVPR)},
  month     = {June},
  year      = {2025},
  pages     = {30157-30166}
}

@inproceedings{yao2020pyhessian,
  title={Pyhessian: Neural networks through the lens of the hessian},
  author={Yao, Zhewei and Gholami, Amir and Keutzer, Kurt and Mahoney, Michael W},
  booktitle={2020 IEEE international conference on big data (Big data)},
  year={2020},
}
\bibliographystyle{plainnat}

\clearpage
\appendix
\section{Broader (Informal) Discussion on Network Architecture Design: Bridge our Theoretical Insights to Empirical Observations }
\label{app:patch_size_local_operator_geometry}

This appendix expands on the architectural implications discussed in Section~\ref{sec:discussion}.
The purpose is not to claim that the stability analysis in the main text provides a complete theory of modern deep vision architectures.
Rather, the point is that stability gives a tractable lens through which one can see a more general object: the geometry of the local vectors exposed to gradient descent.
This object appears explicitly in the shallow model analyzed in this paper, and it has close analogues in the patchification and local-operator design choices used in modern CNNs and Vision Transformers.

\noindent\textbf{The patch matrix as the geometry seen by backpropagation.}
In the setting of this paper, consider the patch matrix
\[
\mathbf X\in\Rb^{(nJ)\times m},
\]
whose row indexed by $(i,j)$ equals $\pi_j(\vec{x}_i)^\T$.
Let $r_i:=f_{\vec{\theta}}(\vec{x}_i)-y_i$, and define the lifted residual vector $\bar{\mathbf r}\in\Rb^{nJ}$ by
\[
(\bar{\mathbf r})_{(i,j)}:=r_i.
\]
Let $\mathbf S(\vec{\theta})\in\{0,1\}^{(nJ)\times K}$ be the gating matrix
\[
\mathbf{S}(\vec{\theta})_{(i,j),k}
\;:=\;
\mathds{1}\bigl\{\vec{w}_k^\T \pi_j(\vec{x}_i) > b_k\bigr\},
\]
and let $\vec v:=(v_1,\ldots,v_K)^\T$.
Stack first-layer weights as
\[
\mathbf{W}:=[\vec{w}_1^\T;\ldots;\vec{w}_K^\T]\in\Rb^{K\times m}.
\]
A direct calculation gives a backprop-aligned factorization of the gradient.
Define the patch-level backprop signal
\[
\mathbf R(\vec{\theta})
\!:=
\frac{1}{nJ}\,\bar{\mathbf r}\,\vec v^\T
\in\Rb^{(nJ)\times K},
\quad
\mathbf G(\vec{\theta})
\!:=
\mathbf R(\vec{\theta})\odot \mathbf S(\vec{\theta}),
\]
where $\odot$ denotes entrywise multiplication.
Then
\begin{equation}\label{eq:grad_GtX_exact}
\nabla_{\mathbf W}\loss(\vec{\theta})
\;=\;
\mathbf G(\vec{\theta})^\T \mathbf X.
\end{equation}
Equivalently, writing $\mathbf V:=\mathrm{diag}(v_1,\ldots,v_K)$, one can express the same signal as
\[
\mathbf G(\vec{\theta})
=
\frac{1}{nJ}\,
\mathrm{diag}(\bar{\mathbf r})\,
\mathbf S(\vec{\theta})\,
\mathbf V.
\]

The factorization \eqref{eq:grad_GtX_exact} separates two roles.
The matrix $\mathbf G(\vec{\theta})$ aggregates the learning signal generated by residuals, readout weights, and gates.
The patch matrix $\mathbf X$ then maps this signal into parameter updates.
Thus the architecture does not only restrict the number of parameters, it determines the geometry through which backpropagated signals are converted into motion in parameter space.
In this sense, $\mathbf{X}$ acts as a signal rectifier in the sense that data priors and receptive-field structure jointly shape the directions along which gradient descent can effectively move.
For example, Figure~\ref{fig:patch_vs_image_geometry} shows that, for CIFAR-10, $90\%$ of the energy of the convolutional patch matrix is concentrated in only a few principal directions, suggesting a strong geometric constraint on the filter dynamics.

This viewpoint also clarifies why the stability condition in the main text is informative.
The terms appearing in the definition of $g_{\CD,\CS}$ are drawn from the same ingredients as \eqref{eq:grad_GtX_exact}: gate statistics induced by $\mathbf S(\vec{\theta})$ and geometric moments of the patch cloud encoded by $\mathbf X$.
Theorem~\ref{thm:upper_bound} can therefore be viewed as a static stability proxy for the backpropagation geometry in \eqref{eq:grad_GtX_exact}.
It does not say that stability alone explains generalization.
Indeed, Theorem~\ref{thm:flat_lcws_gap} shows that stable solutions may still memorize when the patch cloud admits isolating half-spaces. If corresponding backpropagation direction may enable a filter move to a location such that it only activates on a rare, nearly isolated patch, then the network may fit labels while paying little stability cost.
If the patch cloud is concentrated and resistant to such isolation, the same stability lens yields an effective regularity constraint.

\noindent\textbf{Patch size as local-operator dimension in Vision Transformers.}
Vision Transformers make the role of patch geometry especially explicit.
The original ViT converts an image into a sequence of non-overlapping patches and applies a shared patch embedding before global token mixing~\citep{dosovitskiy2021image}.
This patchification step is often introduced as an efficiency device, since larger patches shorten the token sequence and reduce the cost of attention.
The factorization \eqref{eq:grad_GtX_exact} suggests a second interpretation.
Patchification also chooses the first local geometry seen by training.
A larger patch presents each shared local operator with a higher-dimensional and more compressed local vector. A smaller patch reduces this local dimension and preserves finer spatial information, while increasing the number of tokens that later layers must mix.

Several empirical lines of work support the idea that patch size is a genuine training and generalization variable, not only an implementation detail.
FlexiViT trains a single ViT over a range of patch sizes and demonstrates that patch size controls an accuracy--compute tradeoff at deployment time~\citep{beyer2023flexivit}.
Self-supervised ViTs also reveal the importance of this axis: DINO identifies small patches as one ingredient behind the strong emergent properties of self-supervised ViTs~\citep{caron2021emerging}.
Early-convolution studies make the same point from the optimization side.
The standard ViT patchify stem is a large-kernel, large-stride convolution, and replacing it by a short stack of small-stride convolutions significantly changes optimization stability and improves performance under common training recipes~\citep{xiao2021early}.
These results are consistent with the idea that the first patch matrix is not an innocuous preprocessing choice: it shapes the geometry of the signals that subsequent layers and backpropagation operate on.

\begin{table}[t]
\centering
\caption{Selected patchification-scaling results adapted from ~\citep{wang2025patchification}.
Patch size denotes the side length of the square image patch used to form one token.
Panel A shows that reducing patch size improves ImageNet accuracy.
Panel B separates true patch-size reduction from only increasing the sequence length by interpolating coarse tokens. Here we only list their results on DeiT-B \citep{touvron2021deit} and Adventurer-B \citep{Wang_2025_CVPR}.}
\label{tab:wang_patchification_scaling}
\small
\setlength{\tabcolsep}{6pt}
\renewcommand{\arraystretch}{1.12}

\begin{tabular}{@{}lccccc@{}}
\toprule
\multicolumn{6}{@{}l}{\textbf{Panel A: ImageNet classification top-1 accuracy}} \\
\midrule
\textbf{Model and input} 
& \textbf{$16\times16$} 
& \textbf{$8\times8$} 
& \textbf{$4\times4$} 
& \textbf{$2\times2$} 
& \textbf{$1\times1$} \\
\midrule
DeiT-B, $64\times64$ input         
& 68.2 & 76.9 & 80.1 & 80.8 & 81.3 \\
DeiT-B, $128\times128$ input       
& 78.1 & 81.0 & 82.3 & 82.9 & -- \\
Adventurer-B, $224\times224$ input 
& 82.6 & 83.9 & 84.3 & 84.5 & 84.6 \\
\midrule[\heavyrulewidth]
\multicolumn{6}{@{}l}{\textbf{Panel B: Sequence-length ablation on ImageNet top-1 accuracy}} \\
\midrule
\textbf{Model and input} 
& \textbf{Seq. length} 
& \multicolumn{2}{c}{\textbf{Interpolated coarse tokens}} 
& \multicolumn{2}{c}{\textbf{True patch scaling}} \\
\midrule
DeiT-B, $128\times128$ input       
& 256    
& \multicolumn{2}{c}{78.2 $(+0.1)$} 
& \multicolumn{2}{c}{81.0 $(+2.9)$} \\
DeiT-B, $128\times128$ input       
& 1,024  
& \multicolumn{2}{c}{78.2 $(+0.1)$} 
& \multicolumn{2}{c}{82.3 $(+4.2)$} \\
Adventurer-B, $224\times224$ input 
& 784    
& \multicolumn{2}{c}{82.7 $(+0.1)$} 
& \multicolumn{2}{c}{83.9 $(+1.3)$} \\
Adventurer-B, $224\times224$ input 
& 3,136  
& \multicolumn{2}{c}{82.8 $(+0.2)$} 
& \multicolumn{2}{c}{84.3 $(+1.7)$} \\
Adventurer-B, $224\times224$ input 
& 12,544 
& \multicolumn{2}{c}{82.8 $(+0.2)$} 
& \multicolumn{2}{c}{84.5 $(+1.9)$} \\
\bottomrule
\end{tabular}
\end{table}

Recent patchification scaling studies push this observation further.
Wang et al.~\citep{wang2025patchification} systematically reduce the patch size from the standard coarse-token regime toward pixel-level tokenization and report smooth gains across classification, semantic segmentation, object detection, and instance segmentation.
Table~\ref{tab:wang_patchification_scaling} summarizes the results most relevant to our discussion.
The most informative part is their sequence-length ablation.
When coarse patch tokens are merely interpolated into a longer sequence, the gains remain small.
When genuinely smaller patches are extracted from the image, the gains are much larger.
This suggests that patch-size scaling changes the local visual primitives exposed to the model, not only the number of tokens.
In the notation of this paper, reducing patch size changes the initial patch matrix itself: it lowers spatial compression in each local vector and changes the multiset of local rows through which the network forms its updates.
This aligns with the gradient factorization $\nabla_{\mathbf W}\loss=\mathbf G^\T\mathbf X$: the relevant object is the geometry of the rows in the patch or token matrix, not only the nominal sequence length.
Related pixel-token studies push the same question to the extreme by asking how far visual models can go when patchification is removed almost entirely~\citep{nguyen2024morethan16}.

\noindent\textbf{Why small patches require hierarchy and mixing.}
Smaller patches improve the local geometry exposed to the model, but they also increase the burden of mixing information across locations.
This tradeoff helps explain a major trend in vision-transformer architecture design.
Hierarchical Transformers such as Swin do not keep a fixed high-dimensional token space throughout the network; they begin with finer local tokens, restrict early mixing to local windows, and then gradually merge tokens while increasing representation dimension~\citep{liu2021swin}.
PVTv2 follows a related logic through overlapping patch embeddings, linear-complexity attention, and convolutional feed-forward layers~\citep{wang2022pvtv2}.
CvT introduces convolutional token embeddings and convolutional projections inside Transformer blocks~\citep{wu2021cvt}.
Tokens-to-Token ViT replaces one-shot patchification by a progressive tokenization module that recursively aggregates local neighboring tokens~\citep{yuan2021tokens}.
These designs are not only remedies for a defective large-patch ViT.
They can be read as different ways of managing the same geometric tradeoff: expose early layers to controlled local vectors, then recover expressivity through hierarchy, overlap, convolutional projection, or stage-wise mixing.

In the notation of \eqref{eq:grad_GtX_exact}, a deep network replaces the fixed raw patch matrix $\mathbf X$ by a sequence of representation matrices.
At layer $\ell$, one should imagine a matrix
\[
\mathbf X_\ell
=
\bigl[
\vec z_{\ell,j}(\vec x_i)^\T
\bigr]_{(i,j)}
\]
whose rows are local tokens or local feature vectors produced by the preceding layers.
Backpropagation through a local operator at layer $\ell$ again has the schematic form
\[
\nabla_{\mathbf W_\ell}\loss
\approx
\mathbf G_\ell(\vec{\theta})^\T \mathbf X_\ell,
\]
up to the additional Jacobian factors introduced by normalization, residual connections, attention, and nonlinear mixing.
Thus the relevant geometry is no longer fixed by the raw data distribution.
It evolves during training.
Patch size, windowing, overlap, token merging, and convolutional stems can all be viewed as mechanisms for shaping the sequence
\[
\mathbf X_0,\mathbf X_1,\ldots,\mathbf X_L
\]
of optimization-facing geometries.

\noindent\textbf{Patch geometry in CNNs: from spatial-channel mixing to depthwise convolutional kernel.}
CNN architecture design provides a sequence of examples in which the vector seen by each local operator is progressively controlled.
In a residual bottleneck block, the $3\times3$ spatial convolution acts after a channel-reducing $1\times1$ projection, so the spatial operator does not see the full channel dimension of the block~\citep{he2016deep}.
ResNeXt makes this control explicit through grouped convolutions: each group applies a $3\times3$ operator to only a fraction of the channels, reducing the local vector from $3\times3\times C_{\mathrm{in}}$ to $3\times3\times C_{\mathrm{in}}/g$~\citep{xie2017aggregated}.
ConvNeXt pushes the same principle further by using depthwise spatial convolutions inside an inverted-bottleneck block, with expressive channel mixing supplied by pointwise layers~\citep{liu2022convnet}.
Thus, from ResNet to ResNeXt to ConvNeXt, the spatial operator is increasingly restricted to a structured local geometry, while expressivity is recovered through channel mixing and depth.

The kernel-size ablation in ConvNeXt gives a useful empirical instance of the trade-off suggested by our theory.
Increasing the depthwise kernel from $3\times3$ to $7\times7$ improves ImageNet accuracy, while increasing it further to $9\times9$ or $11\times11$ yields little additional gain~\citep{liu2022convnet}.
In the language of this paper, enlarging the spatial kernel increases the context available to each local operator, reducing the bias induced by overly local processing.
At the same time, a larger local vector exposes gradient descent to a richer patch geometry, which can weaken the regularizing effect of locality.
The saturation around $7\times7$ is consistent with a bias--variance trade-off for the geometry seen by local operators: enough spatial support improves representation, while excessive support brings less benefit once the local computation is already sufficiently expressive.

RepLKNet develops this direction further by treating very large kernels as a separate architectural scaling dimension~\citep{ding2022replknet}.
It shows that replacing ConvNeXt's $7\times7$ kernels with much larger stage-wise kernels can further improve performance, especially on dense prediction tasks.
The key point for our purposes is not only the larger spatial support, but the training-time parameterization used to make such kernels effective.
RepLKNet does not train a bare large kernel.
It introduces \emph{small-kernel structural re-parameterization}: during training, a large kernel branch is paired with a small kernel branch, such as a $3\times3$ or $5\times5$ branch. After training, the small kernel is padded to the center of the large kernel and algebraically merged, so inference uses a single large kernel.

This mechanism has a direct interpretation in our gradient-geometry framework.
Let $\vec{x}_L\in\Rb^{K^2}$ denote the large local patch and let $\vec{x}_s=\mC\vec{x}_L\in\Rb^{s^2}$ be the centered small patch selected from it.
Ignoring normalization for clarity, the training-time operator has the form
\[
h(\vec{x})= \vec{w}_L^\T\vec{x}_L+ \vec{w}_s^\T\vec{x}_s
     =(\vec{w}_L+\mC^\T \vec{w}_s)^\T\vec{x}_L,
\qquad
\vec{w}_{\mathrm{eff}}=\vec{w}_L+\mC^\T \vec{w}_s .
\]
At inference time this is just a large kernel with weight $\vec{w}_{\mathrm{eff}}$.
During training, however, the two branches expose different local geometries to gradient descent.
If $r$ denotes the backpropagated scalar signal, then
\[
\nabla_{\vec{w}_L}\loss=r\vec{x}_L,\qquad \nabla_{\vec{w}_s}\loss=r\mC\vec{x}_L,
\qquad
\Delta \vec{w}_{\mathrm{eff}}=-\eta r(\mI+\mC^\T \mC)\vec{x}_L ,
\]
up to the branch-dependent scaling introduced by normalization.
The small branch therefore gives the centered local subspace an additional optimization path.
It biases training toward small-scale local structure while the large branch captures broad spatial context.

This example sharpens the message of the patch-matrix factorization $\nabla_{\mathbf W}\loss=\mathbf G^\T\mathbf X$.
The geometry relevant to learning is not determined solely by the inference-time operator.
RepLKNet uses an inference-time large kernel, but its training-time architecture exposes gradient descent to both a large-patch geometry and a centered small-patch geometry.
Thus structural re-parameterization is a practical mechanism for modifying the optimization-facing patch geometry without changing the final operator class.
Together with the ConvNeXt ablation, this supports the broader view that CNN design balances spatial context against implicit regularization induced by controlled local geometry.

\section{Functional Analysis of Shallow ReLU Networks}

\subsection{Path-norm and Variation Semi-norm of ReLU Networks}
This subsection collects several basic facts from \citep{parhi2023near} and \citep{siegel2023characterization} that we will use later.

\begin{definition}
	Let $f_\vec{\theta}(\vec{x})=\sum_{k=1}^K v_k\,\phi(\vec{w}_k^\T \vec{x}-b_k)+\beta$ be a fully-connected two-layer neural network. The (unweighted) path-norm of $f_\vec{\theta}$ is defined to be
	\begin{equation}\label{eq: pathnorm}
		\|f_{\vec{\theta}}\|_{\mathrm{path}}\!:= \sum_{k=1}^K |v_k|\norm{\vec{w}_k}_2.
	\end{equation}
\end{definition}

\noindent\textbf{Dictionary representation of ReLU networks.}
Using the positive $1$-homogeneity of $\relu$, one may rescale each hidden unit while leaving the realized function unchanged:
\[
v_k\,\phi(\vec{w}_k^\T \vec{x}-b_k)
= a_k\,\phi(\vec{u}_k^\T \vec{x}-t_k),
\quad 
\vec{u}_k:=\frac{\vec{w}_k}{\norm{\vec{w}_k}_2}\in \Sph^{d-1},\;
t_k:=\frac{b_k}{\norm{\vec{w}_k}_2},\;
a_k:=v_k\norm{\vec{w}_k}_2.
\]
Consequently, $f_\theta$ can be written in the normalized finite-sum form
\begin{equation}\label{eq:reduced_form}
f(\vec{x})=\sum_{k=1}^{K'} a_k\,\phi(\vec{u}_k^\T \vec{x}-t_k)+\vec{c}^\T \vec{x}+c_0.
\end{equation}

Define the (ReLU) ridge dictionary as $\DictReLU:=\curly{\phi(\vec{u}^\T \cdot - t):\; \vec{u}\in \Sph^{d-1},\; t\in \Rb}$. We focus on the \emph{overparameterized, width-agnostic} collection obtained by taking the \emph{union over all finite widths}
\begin{equation}\label{eq:union-class}
\F_{\mathrm{fin}}
:= \bigcup_{K\ge 1}\curly{\sum_{k=1}^{K} a_k\,\phi(\vec{u}_k^\T \cdot - t_k)+\vec{c}^\T(\cdot)+c_0},
\end{equation}
and quantify complexity via the smallest path-norm among all realizations of $f$:
\[
\|f\|_{\mathrm{path},\min}:=\inf\curly{\|f_{\vec{\theta}}\|_{\mathrm{path}}:\; f_{\vec{\theta}}\equiv f\text{ of the form \eqref{eq:reduced_form}} }.
\]

\noindent\textbf{From finite sums to a \emph{width-agnostic} integral representation.}
Rather than fixing a particular width $K$, it is convenient to work with a convex, measure-theoretic formulation that \emph{captures the closure/convex hull of} \eqref{eq:union-class}. Concretely, let $\nu$ be a finite signed Radon measure on $\Sph^{d-1} \times [-R, R]$ and consider
\begin{equation}\label{eq:integral-rep}
f(\vec{x})=\int_{\Sph^{d-1} \times [-R, R]}\phi(\vec{u}^\T \vec{x}-t)\, \dd\nu(\vec{u},t)+c^\T \vec{x}+c_0.
\end{equation}

Every finite network of the form \eqref{eq:reduced_form} corresponds to an \emph{atomic} (hence sparse) measure $\nu=\sum_{k=1}^{K} a_k\,\delta_{(\vec{u}_k,t_k)}$, and conversely any such atomic $\nu$ yields a finite network. Therefore, \eqref{eq:integral-rep} should be viewed as a \emph{width-agnostic relaxation} aligned with \eqref{eq: pathnorm}, rather than as an assumption of an infinite-width limit.

\begin{definition}\label{def: unweighted variation space}
	The (unweighted) variation (semi)norm
\begin{equation}\label{eq:var-norm}
|f|_{\Variation}:=\inf\curly{\norm{\nu}_{\M}:\; f \text{ admits \eqref{eq:reduced_form} for some }(\nu,c,c_0)},
\end{equation}
where $\norm{\nu}_{\M}$ is the total variation of $\nu$.

    For the compact region $\Omega=\Bb_R^d$, we define the bounded variation function class as
\begin{equation}\label{eq: variation_space}
	\Variation_C(\Omega)\!:=\left\{f\!:\Omega\rightarrow \Rb\mid f=\int_{\Sph^{d-1} \times [-R, R]} \phi(\vec{u}^\T\vec{x} - t) \dd\nu(\vec{u}, t)+\vec{c}^{\T}\vec{x}+b,\,|f|_{\Variation}\leq C \right\}.
\end{equation}
\end{definition}
In particular, identifying \eqref{eq:reduced_form} with the atomic measure $\nu=\sum_k a_k\delta_{(\vec{u}_k,t_k)}$ yields
\[
|f|_{\Variation}\le \sum_{k}|a_k|=\|f_{\vec{\theta}}\|_{\mathrm{path}},
\quad\text{hence}\quad
|f|_{\Variation}\le \|f\|_{\mathrm{path},\min}.
\]
Moreover, the minimal total variation required to represent $f$ coincides with the minimal path-norm over all finite decompositions:
\begin{equation}\label{eq:path-eq-var}
\|f\|_{\mathrm{path},\min} = |f|_{\Variation}.
\end{equation}
Thus, \eqref{eq:var-norm} provides a \emph{nonparametric} analogue of the path-norm: it encodes the same complexity notion while \emph{not committing to a fixed width} $K$.

\begin{remark}[``Arbitrary width'' $\neq$ ``infinite width'']\label{rem:arbitrary-vs-infinite}
All statements here are about $\F_{\mathrm{fin}}$ in \eqref{eq:union-class}, namely networks of \emph{finite} (but unconstrained) width. The integral representation \eqref{eq:integral-rep} is introduced as a convenient convexification/closure of this union for analysis and regularization; it \emph{does not} posit an infinite-width limit. In particular, when training in the variational form with a total-variation penalty on $\nu$, first-order optimality implies that optimal measures are sparse (i.e., have finite support), which corresponds exactly to \emph{finite-width} networks. Therefore, our results hold for \emph{arbitrary (yet finite) width}; the continuum measure serves only as a tool to characterize and control $\|f\|_{\mathrm{path},\min}$.
\end{remark}

\subsection{The Metric Entropy of Variation Spaces}

Metric entropy is a standard way to describe how ``compact'' a subset $A$ is inside a metric space $(X,\rho_X)$. We recall the notions of covering numbers and metric entropy.

\begin{definition}[Covering Number and Entropy]
Let $A$ be a compact subset of a metric space $(X, \rho_X)$. For $t > 0$, the \emph{covering number} $N(A, t, \rho_X)$ is the smallest number of closed balls of radius $t$ whose union contains $A$:
\begin{equation}
	N(t, A, \rho_X) := \min \left\{ N \in \mathbb{N} : \exists\, x_1, \dots, x_N \in X \text{ s.t. } A \subset \bigcup_{i=1}^N \Bb(x_i, t) \right\},
\end{equation}
where $\Bb(x_i, t) = \{ y \in X : \rho_X(y,x_i) \le t \}$. The \emph{metric entropy} of $A$ at scale $t$ is then
\begin{equation}\label{defn: metric_entropy}
	H_{t}(A)_X := \log N(t, A, \rho_X).
\end{equation} 
\end{definition}

Covering/entropy bounds for bounded-variation-type classes have been established in the literature. In what follows, we will use the estimate stated below.
\begin{proposition}[{\citealt[Appendix D]{parhi2023near}}]\label{prop:metric_entropy_of_bounded_variation_space}
	The metric entropy of $\Variation_C(\Bb_R^d)$ (see Definition \ref{def: unweighted variation space}) with respect to the $L^\infty(\Bb_R^d)$-distance $\|\cdot \|_{\infty}$ satisfies 
	\begin{equation}
		\log N(t,\Variation_C(\Bb_R^d),\| \cdot\|_{\infty})\lessapprox_d \left(\frac{C}{t} \right)^{\frac{2d}{d+3}}.
	\end{equation}
	where $\lessapprox_d$ hides constants (which could depend on $d$) and logarithmic factors.
\end{proposition}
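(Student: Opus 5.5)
The plan is to reduce to the unit case and split off the affine part. The convex part of $\Variation_C(\Bb_R^d)$ is then covered by first approximating every element with a short ReLU sum at the sharp rate, and afterwards discretizing the parameters of that short sum. The exponent $\frac{2d}{d+3}$ should come out of balancing the approximation error against the number of atoms used.

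\textbf{Reductions.} First, rescale $\vec{x}\mapsto \vec{x}/R$ and $f\mapsto f/C$. Since $\phi$ is $1$-homogeneous, it suffices to treat $R=C=1$ at scale $t/C$, with $R$ absorbed into the $d$-dependent constants.

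Next, the affine part $\vec{c}^\T\vec{x}+c_0$ spans a $(d+1)$-dimensional space, so it only contributes $O(d\log(1/t))$ to the entropy. This needs it to range over a bounded set. I would assume, as in \citet{parhi2023near}, that the class is intersected with a sup-norm ball, or equivalently that $\|\vec{c}\|+|c_0|\lesssim C$. Since $\phi(s)-\phi(-s)=s$, linear terms can be traded for pairs of ReLU atoms at cost $O(1)$, which justifies this normalization. It therefore suffices to cover the symmetric convex hull
\[
\mathcal{K}=\overline{\mathrm{conv}}\{\pm\phi(\vec{u}^\T\cdot-s):(\vec{u},s)\in\Sph^{d-1}\times[-1,1]\}
\]
in $L^\infty(\Bb_1^d)$.

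\textbf{Step 1: sharp $N$-term approximation in $L^\infty$.} This step uses stratified sampling in the spirit of Makovoz, Siegel--Xu and Bach. Write $f=\int \phi(\vec{u}^\T\vec{x}-s)\dd\nu$ with $\|\nu\|_{\M}\le1$. Then:
\begin{itemize}
\item Partition the $d$-dimensional parameter manifold $\Sph^{d-1}\times[-1,1]$ into $N$ cells of diameter $\asymp N^{-1/d}$, each carrying at most $\asymp 1/N$ of $|\nu|$.
\item From each cell, draw one atom according to the normalized restriction of $\nu$.
\item For a fixed $\vec{x}$, the cell contributions are independent and mean zero. Since $(\vec{u},s)\mapsto\phi(\vec{u}^\T\vec{x}-s)$ is Lipschitz, each contribution has size $O(N^{-1}\cdot N^{-1/d})$.
\item The key gain is that the map is \emph{affine} in $(\vec{u},s)$ on every cell not crossed by the kink $\{\vec{u}^\T\vec{x}=s\}$. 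On those cells, a sampling scheme that matches first moments (or a center-of-mass correction) makes the error vanish to first order. Only the $\asymp N^{1-1/d}$ cells crossed by the kink contribute variance.
\item Summing the variances gives a per-point variance of about $N^{1-1/d}\cdot N^{-2-2/d}=N^{-1-3/d}$.
\end{itemize}
To pass from pointwise to uniform control, apply Hoeffding or Bernstein on a grid of $\mathrm{poly}(N)$ points in $\Bb_1^d$ and use Lipschitz continuity in $\vec{x}$ between grid points. A union bound then costs only a $\sqrt{\log N}$ factor. This yields $\|f-f_N\|_\infty\lessapprox N^{-\frac12-\frac{3}{2d}}$, where $f_N$ has $N$ atoms and coefficients of total mass at most $1$.

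\textbf{Step 2: counting.} Quantize each of the $N$ atoms' parameters $(\vec{u},s,a)$ to accuracy $t/N$. The resulting set of $N$-term networks has log-cardinality $O(dN\log(N/t))$ and is a $t$-net for the $N$-term class. Choosing $N$ so that $N^{-\frac{d+3}{2d}}\asymp t$ gives $N\asymp t^{-\frac{2d}{d+3}}$. Therefore $\log N(t,\mathcal{K},\|\cdot\|_\infty)\lessapprox_d t^{-\frac{2d}{d+3}}$, which after undoing the rescaling is the stated bound $(C/t)^{\frac{2d}{d+3}}$.

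\textbf{Main obstacle.} The hard part is Step 1: getting the improved exponent $\tfrac12+\tfrac{3}{2d}$ uniformly in $\vec{x}$ rather than in $L^2$. The first-order cancellation on kink-free cells must hold simultaneously for all $\vec{x}$, but which cells are crossed by the kink depends on $\vec{x}$. I would handle this by fixing the stratification first and bounding, for each grid point, the number of cells crossed by its kink hyperplane, which is uniformly $O(N^{1-1/d})$ by a volume argument. If that argument becomes too delicate, I would fall back on the discrepancy bounds for halfspaces used by \citet{parhi2023near}, accepting the extra logarithmic factors that $\lessapprox_d$ already hides.
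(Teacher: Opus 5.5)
Your skeleton is the right one: a sharp uniform $N$-term approximation, followed by quantizing the $O(dN)$ parameters. Step~2 and the reductions are fine, including your caveat about the affine part. As literally defined, $\Variation_C$ contains every affine function, so its $L^\infty$ entropy is infinite unless that part is bounded. The gap is Step~1, and Step~1 carries the whole exponent $\frac{2d}{d+3}$.

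First, the mechanism you describe does not give the cancellation you need. Write $\theta=(\vec{u},s)$ and $z=(\vec{x},-1)$, and let $c_\ell$ be the barycenter of the normalized cell measure. With one atom drawn per cell, as in your second bullet, the error on a kink-free cell where the neuron is active is $m_\ell(\theta_\ell-c_\ell)^\T z$. Its variance is generically of order $m_\ell^2\,\mathrm{diam}^2$, not zero. Summing over all cells then gives only $N^{-1/2-1/d}$, i.e.\ entropy exponent $\frac{2d}{d+2}$.

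Placing the atom deterministically at $c_\ell$ removes that term. But on each crossed cell it leaves a one-signed Jensen bias of size up to $m_\ell\,\mathrm{diam}$, and these add up to $N^{1-1/d}\cdot N^{-1}\cdot N^{-1/d}=N^{-2/d}$. You cannot switch rules cell by cell, because which cells are crossed depends on $\vec{x}$. This point is repairable: per cell, draw a random measure on at most $d+2$ points with exactly the cell's mass and barycenter, whose expectation is the cell measure. That is, however, a different scheme from the one you wrote.

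The more serious problem is the partition. Your volume argument for $O(N^{1-1/d})$ crossed cells uses a quasi-uniform partition into $N$ cells of diameter $N^{-1/d}$. Such cells carry mass $\lesssim 1/N$ only when $|\nu|$ has bounded density on $\Sph^{d-1}\times[-1,1]$, whereas the class contains arbitrary, e.g.\ singular, $\nu$.

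For a concrete failure, take $\nu$ uniform on a slab of width $\tau=N^{-1/(d-1)}$ around the kink surface $\{s=\vec{u}^\T\vec{x}_0\}$.
\begin{itemize}
\item Cells of diameter $N^{-1/d}$ meeting the slab carry mass about $N^{-(d-1)/d}\gg 1/N$, so your partition does not exist.
\item The natural mass-$1/N$ cells are $\tau$-cubes, and the kink of $\vec{x}_0$ crosses essentially all $N$ of them.
\item With one atom per cell, the variance at $\vec{x}_0$ is then about $N\cdot N^{-2}\tau^2=N^{-1-2/(d-1)}$, which exceeds $N^{-1-3/d}$ as soon as $d>3$.
\end{itemize}

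A $\nu$-adapted partition with low crossing number for every $\vec{x}$ simultaneously does exist. Producing it for arbitrary $\nu$ is exactly the combinatorial-geometry input (low-crossing partitions, discrepancy of halfspaces) behind Matou\v{s}ek's approximation of zonoids by zonotopes. That theorem is where the sharp uniform rate $N^{-\frac12-\frac{3}{2d}}$, up to logarithms, actually comes from, as used by Bach (2017). So you should import it as a black box rather than rely on the stratified-sampling sketch; your Step~2 then yields the stated bound. The paper itself gives no argument and simply cites \citet{parhi2023near}.
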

\subsection{Generalization Gap of Unweighted Variation Function Class}
As a middle step towards bounding the generalization gap of the weighted variation
function class, we bound the generalization gap of the unweighted variation function
class using chaining and Gaussian complexity, together with the $L^\infty$ metric entropy
bound in Proposition~\ref{prop:metric_entropy_of_bounded_variation_space}.

\begin{proposition}[{\citealt[Chapter~13]{wainwright2019high}}]
\label{prop:gauss-chaining}
Fix design points $\vec{x}_{1},\dots,\vec{x}_{n}$ and denote the empirical norm
\[
      \|f\|_{n}^{2}:=\frac{1}{n}\sum_{i=1}^{n}f(\vec{x}_{i})^2.
\]
Let $\mathcal{F}$ be a class of real-valued functions on $\{\vec{x}_i\}_{i=1}^n$, and define
\[
      \widehat{\GaussC}_{n}(\mathcal{F})
      :=\sup_{f\in\mathcal{F}}
          \frac{1}{n}\sum_{i=1}^{n}\varepsilon_{i}f(\vec{x}_{i}),
      \qquad
      \varepsilon_{1},\dots,\varepsilon_{n}\stackrel{\text{i.i.d.}}{\sim}\mathcal{N}(0,1),
      \quad
      \GaussC_{n}(\mathcal{F}):=\mathop{\mathbb{E}}\,\widehat{\GaussC}_{n}(\mathcal{F}).
\]
Then
\begin{equation}
      \GaussC_{n}(\mathcal{F})
      \;\lesssim\;
      \frac{16}{\sqrt{n}}
      \int_{0}^{\mathrm{diam}(\mathcal{F},\|\cdot\|_n)}
          \sqrt{\log N
            \bigl(t,\;\mathcal{F},\;\|\cdot\|_{n}\bigr)}
      \,\dd t,
      \label{eq:gauss-dudley}
\end{equation}
where $\mathrm{diam}(\mathcal{F},\|\cdot\|_n):=\sup_{f_1,f_2\in\mathcal{F}}\|f_1-f_2\|_n$.
Moreover, with probability at least $1-\delta$,
\begin{equation}
      \widehat{\GaussC}_{n}(\mathcal{F})
      \;\le\;
      \GaussC_{n}(\mathcal{F})
      + \mathrm{diam}(\mathcal{F},\|\cdot\|_{n})\,
        \frac{\sqrt{\log(1/\delta)}}{\sqrt{n}}
      \qquad(\delta>0).
      \label{eq:gauss-dudley-highprob}
\end{equation}
\end{proposition}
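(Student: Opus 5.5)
The plan is to treat $\vec{\varepsilon}\mapsto \frac{1}{n}\sum_i \varepsilon_i f(\vec{x}_i)$ as a Gaussian process indexed by the vectors $(f(\vec{x}_1),\dots,f(\vec{x}_n))\in\Rb^n$, with $f$ ranging over $\mathcal{F}$. Write $Z_f:=\frac{1}{n}\sum_{i}\varepsilon_i f(\vec{x}_i)$. Each increment $Z_f-Z_g$ is centered Gaussian with standard deviation $\|f-g\|_n/\sqrt{n}$. The process is therefore sub-Gaussian with respect to the metric $\|\cdot\|_n/\sqrt{n}$. Both claims will be reduced to this increment structure.

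\textbf{Step 1 (centering).} Fix any $f_0\in\mathcal{F}$. Since $\Eb Z_{f_0}=0$, we have $\GaussC_n(\mathcal{F})=\Eb\sup_{f}(Z_f-Z_{f_0})$. It therefore suffices to bound the expected supremum of the centered process $\{Z_f-Z_{f_0}\}_{f\in\mathcal{F}}$.

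\textbf{Step 2 (chaining).} Set $D:=\mathrm{diam}(\mathcal{F},\|\cdot\|_n)$ and $\delta_k:=D2^{-k}$. For each $k\ge 1$, choose a minimal $\delta_k$-net $\mathcal{N}_k$ of $\mathcal{F}$ in $\|\cdot\|_n$, and set $\mathcal{N}_0=\{f_0\}$, which is a $D$-net. Let $\pi_k f$ denote a nearest point of $f$ in $\mathcal{N}_k$.
\begin{itemize}
\item Write the telescoping decomposition $Z_f-Z_{f_0}=\sum_{k\ge1}(Z_{\pi_k f}-Z_{\pi_{k-1}f})$.
\item Each link has $\|\pi_kf-\pi_{k-1}f\|_n\le 3\delta_k$.
\item At level $k$ there are at most $|\mathcal{N}_k|^2$ distinct links.
\item The Gaussian maximal inequality then gives $\Eb\sup_f (Z_{\pi_kf}-Z_{\pi_{k-1}f})\le \frac{3\delta_k}{\sqrt n}\sqrt{2\log |\mathcal{N}_k|^2}$.
\end{itemize}
Summing over $k$ and using $\delta_k=2(\delta_k-\delta_{k+1})$ together with the monotonicity of $t\mapsto N(t,\mathcal{F},\|\cdot\|_n)$, compare the sum with $\frac{C}{\sqrt n}\int_0^{D}\sqrt{\log N(t,\mathcal{F},\|\cdot\|_n)}\,\dd t$. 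This yields \eqref{eq:gauss-dudley}; the numerical constant $16$ is absorbed into $\lesssim$.

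Passing to the limit in the chain needs a little care. Because $\mathcal{F}$ lives in the finite-dimensional space $\Rb^n$, the map $f\mapsto Z_f$ is almost surely continuous in $\|\cdot\|_n$. One can therefore restrict to a countable dense subset and let the chain depth tend to infinity by monotone convergence. If the entropy integral diverges, there is nothing to prove.

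\textbf{Step 3 (high-probability bound).} Apply Gaussian concentration to $F(\vec{\varepsilon}):=\sup_f (Z_f-Z_{f_0})$. This is a supremum of linear functionals $\vec{\varepsilon}\mapsto\frac1n\langle\vec{\varepsilon},f-f_0\rangle$, so it is Lipschitz in $\vec{\varepsilon}$ with constant $\sup_f\frac{1}{n}\|f-f_0\|_{\ell^2}=\sup_f\|f-f_0\|_n/\sqrt n\le D/\sqrt n$. The Tsirelson--Ibragimov--Sudakov (or Borell--TIS) inequality then gives $F\le \Eb F+ \frac{D}{\sqrt n}\sqrt{2\log(1/\delta)}$ with probability at least $1-\delta$. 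This is \eqref{eq:gauss-dudley-highprob}, up to the absolute constant.

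\textbf{Main obstacle.} The delicate point is the uncentered term $Z_{f_0}$ that separates $\widehat{\GaussC}_n(\mathcal{F})$ from the centered supremum $F$. Its fluctuation scales with $\|f_0\|_n$ rather than with the diameter. I would handle it in one of two ways:
\begin{itemize}
\item Note that in the intended applications $\mathcal{F}$ is symmetric or contains $0$. One may then take $f_0=0$, so $Z_{f_0}\equiv0$.
\item Alternatively, apply the result to the difference class $\{f-f_0\}$, which has the same diameter and covering numbers. This leaves the generalization argument unchanged.
\end{itemize}
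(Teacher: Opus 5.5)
Your argument is the standard one behind the cited result: Dudley chaining for the expected supremum, and Lipschitz Gaussian concentration (Borell--TIS) for the deviation. The paper gives no proof of its own beyond the citation, so there is nothing to reconcile on the route.

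The point you flag as the main obstacle is a defect of the statement, not of your proof. For a singleton class $\mathcal{F}=\{f_0\}$ with $\|f_0\|_n>0$, the diameter and $\GaussC_n(\mathcal{F})$ both vanish, while $\widehat{\GaussC}_n(\mathcal{F})=Z_{f_0}>0$ with probability $1/2$. Even when $0\in\mathcal{F}$, the constant $1$ in front of $\sqrt{\log(1/\delta)}$ cannot be kept under a literal $\le$, because Gaussian concentration is sharp for linear functionals (take $\mathcal{F}=\{0,f\}$ and $\delta=10^{-3}$). So your restriction is necessary, not merely convenient, and your ``up to the absolute constant'' is a genuine correction of the statement.

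The correct form is $\widehat{\GaussC}_n(\mathcal{F})\le\GaussC_n(\mathcal{F})+\sup_{f\in\mathcal{F}}\|f\|_n\sqrt{2\log(1/\delta)/n}$. This is exactly what your Step 3 gives with $f_0=0$, and it is bounded by the diameter version whenever $0\in\mathcal{F}$. For the paper's use on the squared-loss class the correction is harmless: there $\|\ell_f\|_n\le 4M^2$, so the corrected inequality applies directly, and neither recentering nor your difference-class device is needed.
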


\begin{lemma}
\label{lem:generalization‐RBV}
Let $\mathcal{F}_{M,C}= \{f\in \Variation_C(\Bb_R^d)\mid \|f\|_{\infty} \leq M\}$ with $M\geq D$,
and let $\CD=\{(\vec{x}_i,y_i)\}_{i=1}^n\sim \CP^{\otimes n}$ where $|Y|\le D$ a.s.
Then with probability at least $1-\delta$,
\begin{equation}
\label{eq:gap-gauss-chaining}
\sup_{f\in\mathcal{F}_{M,C}}
\bigl|R(f)-\widehat R_{\CD}(f)\bigr|
\;\lesssim_d\;
C^{\frac{d}{d+3}}
\,M^{\frac{d+6}{d+3}}
\,n^{-\frac{1}{2}}
\;+\;
M^2\left(\frac{\log(1/\delta)}{n}\right)^{\frac{1}{2}}.
\end{equation}
\end{lemma}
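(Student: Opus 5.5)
We prove Lemma~\ref{lem:generalization‐RBV} by symmetrization applied to the squared-loss class, followed by a contraction step and then Dudley chaining from Proposition~\ref{prop:gauss-chaining}.

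\textbf{Step 1: the loss class.} Set $\mathcal{L}_{M,C}:=\{(\vec{x},y)\mapsto (f(\vec{x})-y)^2 : f\in\mathcal{F}_{M,C}\}$. Since $\|f\|_\infty\le M$ and $|y|\le D\le M$, every element of $\mathcal{L}_{M,C}$ takes values in $[0,4M^2]$.

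\textbf{Step 2: symmetrization and concentration.} McDiarmid's inequality, applied to $\sup_{\ell\in\mathcal{L}_{M,C}}|\Eb\ell-\widehat{\Eb}_n\ell|$, has bounded differences $4M^2/n$. Combined with standard symmetrization, this gives, with probability at least $1-\delta$,
\[
\sup_{f}|R(f)-\widehat R_{\CD}(f)|\lesssim \Eb\,\mathfrak{R}_n(\mathcal{L}_{M,C})+M^2\sqrt{\log(1/\delta)/n}.
\]
Rademacher complexity is dominated by Gaussian complexity up to a constant factor, so we may pass to $\GaussC_n$.

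\textbf{Step 3: contraction.} The map $z\mapsto (z-y)^2$ is $4M$-Lipschitz on $[-M,M]$ for every $|y|\le M$. Apply Talagrand's contraction pointwise in $y_i$; if one insists on Gaussian multipliers, use Slepian/Sudakov–Fernique comparison instead. This yields
\[
\mathfrak{R}_n(\mathcal{L}_{M,C})\lesssim M\,\GaussC_n(\mathcal{F}_{M,C}).
\]
The residual data-dependent shift $y_i^2$ has zero Rademacher average and can be ignored.

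\textbf{Step 4: chaining.} We bound $\GaussC_n(\mathcal{F}_{M,C})$ using \eqref{eq:gauss-dudley}. The empirical norm satisfies $\|\cdot\|_n\le\|\cdot\|_\infty$, so $L^\infty$ covers are also $\|\cdot\|_n$ covers. The class $\mathcal{F}_{M,C}$ is a subset of $\Variation_C(\Bb_R^d)$. A cover of a superset can be converted into a cover of the subset at radius $2t$ by recentering, so Proposition~\ref{prop:metric_entropy_of_bounded_variation_space} gives $\log N(t)\lessapprox_d (C/t)^{2d/(d+3)}$. The diameter is at most $2M$. Hence
\[
\GaussC_n\lesssim \frac{1}{\sqrt n}\int_0^{2M}(C/t)^{\frac{d}{d+3}}\dd t.
\]
The exponent satisfies $d/(d+3)<1$, so the integral converges at $0$ and equals a constant times $C^{\frac{d}{d+3}}M^{\frac{3}{d+3}}$.

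\textbf{Step 5: combining.} Multiplying by $M$ from Step 3 gives
\[
C^{\frac{d}{d+3}}M^{1+\frac{3}{d+3}}n^{-1/2}=C^{\frac{d}{d+3}}M^{\frac{d+6}{d+3}}n^{-1/2}.
\]
Adding the concentration term from Step 2 yields \eqref{eq:gap-gauss-chaining}.

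\textbf{Main obstacle.} The delicate point is Step 3. The contraction must be carried out so that the constants depend only on $M$, and the Rademacher and Gaussian complexities must be matched cleanly; the comparison between the two costs only an absolute constant. A secondary concern is that the entropy bound hides logarithmic factors. These factors are absorbed by $\lesssim_d$ after integration, since the integrand is still integrable when multiplied by polylogarithmic terms in $1/t$.
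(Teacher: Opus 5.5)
Your proposal is correct and follows essentially the same route as the paper: symmetrization with a bounded-differences concentration term, the $4M$-Lipschitz property of the squared loss on $[-M,M]$, and Dudley chaining with the $L^\infty$ entropy bound for $\Variation_C(\Bb_R^d)$. The only difference is where the Lipschitz property enters. You use contraction (or Sudakov--Fernique) to reduce to $\GaussC_n(\mathcal{F}_{M,C})$ and chain over $\mathcal{F}_{M,C}$ with diameter $2M$. The paper instead transfers $t/(4M)$-covers of $\mathcal{F}_{M,C}$ to $t$-covers of the loss class $\mathcal{L}_{M,C}$ and chains there with diameter $8M^2$. The substitution $t=4Ms$ shows the two Dudley integrals are identical.
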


\begin{proof}
Let $\ell_f(\vec{x},y):=(y-f(\vec{x}))^2$ and $\mathcal{L}_{M,C}:=\{\ell_f: f\in\mathcal{F}_{M,C}\}$.
Since $|y|\le D\le M$ and $\|f\|_\infty\le M$, for any $f,g\in\mathcal{F}_{M,C}$ and any $(\vec{x},y)$,
\[
\bigl|\ell_f(\vec{x},y)-\ell_g(\vec{x},y)\bigr|
  =|f(\vec{x})-g(\vec{x})|\,|f(\vec{x})+g(\vec{x})-2y|
  \le 4M\,|f(\vec{x})-g(\vec{x})|.
\]
Therefore, for the empirical norm on $\{\vec{x}_i\}_{i=1}^n$,
\begin{equation}\label{eq:loss-lip-emp}
\|\ell_f-\ell_g\|_n \le 4M\,\|f-g\|_n \le 4M\,\|f-g\|_\infty,
\end{equation}
and $\mathrm{diam}(\mathcal{L}_{M,C},\|\cdot\|_n)\le 8M^2$.

A standard Gaussian symmetrization/concentration argument (see, e.g.,
\citealt[Chapter~5]{wainwright2019high}) yields that with probability at least $1-\delta$,
\begin{equation}\label{eq:symm-to-gauss}
\sup_{f\in\mathcal{F}_{M,C}}
\bigl|R(f)-\widehat R_{\CD}(f)\bigr|
\;\lesssim\;
\widehat{\GaussC}_{n}(\mathcal{L}_{M,C})
\;+\;
M^2\left(\frac{\log(1/\delta)}{n}\right)^{\frac{1}{2}}.
\end{equation}
It remains to bound $\widehat{\GaussC}_{n}(\mathcal{L}_{M,C})$ by chaining.
By \eqref{eq:loss-lip-emp}, any $t$-cover of $\mathcal{F}_{M,C}$ in $\|\cdot\|_n$ induces a
$(4Mt)$-cover of $\mathcal{L}_{M,C}$ in $\|\cdot\|_n$, hence
\[
\log N\bigl(t,\mathcal{L}_{M,C},\|\cdot\|_n\bigr)
\le
\log N\Bigl(\frac{t}{4M},\mathcal{F}_{M,C},\|\cdot\|_n\Bigr)
\le
\log N\Bigl(\frac{t}{4M},\Variation_C(\Bb_R^d),\|\cdot\|_\infty\Bigr),
\]
where we used $\|h\|_n\le \|h\|_\infty$. Proposition~\ref{prop:metric_entropy_of_bounded_variation_space}
then gives, up to logarithmic factors,
\[
\log N\bigl(t,\mathcal{L}_{M,C},\|\cdot\|_n\bigr)
\;\lessapprox_d\;
\Bigl(\frac{MC}{t}\Bigr)^{\frac{2d}{d+3}}.
\]
Applying Proposition~\ref{prop:gauss-chaining} with $\mathcal{G}=\mathcal{L}_{M,C}$ and
$\mathrm{diam}(\mathcal{L}_{M,C},\|\cdot\|_n)\le 8M^2$ yields
\begin{align*}
\widehat{\GaussC}_{n}(\mathcal{L}_{M,C})
&\;\lesssim_d\;
\frac{1}{\sqrt{n}}
\int_{0}^{8M^2}
\Bigl(\frac{MC}{t}\Bigr)^{\frac{d}{d+3}}
\dd t
\;+\;
M^2\left(\frac{\log(1/\delta)}{n}\right)^{\frac{1}{2}} \\
&\;\asymp_d\;
C^{\frac{d}{d+3}}\,M^{\frac{d+6}{d+3}}\,n^{-\frac{1}{2}}
\;+\;
M^2\left(\frac{\log(1/\delta)}{n}\right)^{\frac{1}{2}}.
\end{align*}
Combining with \eqref{eq:symm-to-gauss} proves \eqref{eq:gap-gauss-chaining}.
\end{proof}

\section{Proof of Theorem \ref{thm:implicit_bias_CNN}}
\label{app:dd_regularity}
Let $\iota_j: \mathbb{R}^m\rightarrow\mathbb{R}^d$ be the dual embedding such that $ \pi_j\circ \iota_j=\operatorname{id}_{\Rb^m}$. Then \eqref{eq:model_CNN} is equivalent to a fully connected neural network in a form of
\begin{align}
	f_{\vec{\theta}}(\vec{x}) &\;=\; \sum_{k=1}^{K}\frac{1}{J}\sum_{j=1}^J v_k\,\phi\big(\iota_j(\vec{w}_k)^\T  \vec{x}- b_k\big) + \beta\label{eq:model_CNN_2}.
\end{align}
Note that for any fixed $k$, $\norm{\iota_j(\vec{w}_k)}_2=\|\vec{w}_k\|_2$ for all $j=1,\cdots,J$. Therefore, the notion of path norm and variation norm (together with their weighted version) still make sense for the CNN model
\begin{equation}\label{eq:path_norm_cnn}
	\pathnorm{f_{\vec{\theta}}}=\frac{1}{J}\sum_{k=1}^K\sum_{j=1}^J|v_k|\norm{\iota_j(\vec{w}_k)}_2=\sum_{k=1}^K|v_k|\norm{\vec{w}_k}_2.
\end{equation}

By direct computation, the Hessian matrix of the loss function is expressed as
\begin{equation}\label{eq:Hessian_1}
	\nabla^2_{\vec{\theta}}\loss =\underbrace{\frac{1}{n}\sum_{i=1}^n\nabla_{\vec{\theta}}f(\vec{x}_i)\nabla_{\vec{\theta}}f(\vec{x}_i)^{\T}}_{\displaystyle \mT_{\CD} } +\underbrace{\frac{1}{n}\sum_{i=1}^n(f(\vec{x}_i)-y_i)\nabla^2_{\vec{\theta}}f(\vec{x}_i)}_{\displaystyle \mR_\CD}.
\end{equation}
\begin{definition}\label{def:weight_function_CNN_appendix}
	Given a dataset $\CD=\{(\vec{x}_i,y_i)\}_{i=1}^n$ and a set of local receptive fields $\CS=\{S_j\}_{j=1}^J$, we define a random vector $\vec{X}^{\CS}_{\CD}$ uniformly draw from $\{\pi_j(\vec{x}_i)\}^{n\times J}_{(i,j)}\subset \Rb^m$. For any $\vec{u}\in \Sph^{m-1}, t\in \Rb$, we define the weight function \begin{equation}\label{eq:weight_g_CNN2_app}
g_{\CD,\mathcal{S}}(\vec{u},t)=\min\Big\{\tilde g_{\CD,\mathcal{S}}(\vec{u},t),\ \tilde g_{\CD,\mathcal{S}}(-\vec{u},-t)\Big\}.
\end{equation}
where
	\begin{equation}\label{eq:weight_g_CNN_app}
	g_{\CD,\mathcal{S}}(\vec{u},t):=\Eb\left[\phi\left(\vec{u}^\T\vec{X}^{\CS}_{\CD} - t\right)\right]\,
	\sqrt{\Pb\left(\vec{u}^\T\vec{X}^{\CS}_{\CD} > t \right)^{2}+\left\|\Eb\left[\vec{X}^{\CS}_{\CD}\,\mathds{1}\Big\{\vec{u}^\T\vec{X}^{\CS}_{\CD} > t\Big\}\right]\right\|_2^{2}}.
	\end{equation}
	\end{definition}

\begin{proposition}\label{prop:CNN_stability_induced_regularity}
Given a data set $\CD=\curly{\vec{x}_i,y_i}_{i=1}^n$ and a network model \eqref{eq:model_CNN} with local receptive fields $\mathcal{S}$.
Then we have
\begin{equation}\label{eq:weighted_path_norm_CNN_TFM}
	\lambda_{\max}\left(\mT_\CD\right)\geq 1+2\sum_{k=1}^K|v_k|\norm{\vec{w}_k}\,
	g_{\CD,\mathcal{S}}\left(\frac{\vec{w}_k}{\|\vec{w}_k\|},\frac{b_k}{\|\vec{w}_k\|} \right).
\end{equation}
\end{proposition}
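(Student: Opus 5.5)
The plan is to lower-bound $\lambda_{\max}(\mT_\CD)$ by a Rayleigh quotient at one explicit test direction: the normalized mean gradient of the network output over the training inputs. Write $\vec{g}_i:=\nabla_{\vec{\theta}}f(\vec{x}_i)$ and $\bar{\vec{g}}:=\frac1n\sum_i\vec{g}_i$. For any unit vector $\vec{e}$, convexity of $s\mapsto s^2$ (Jensen) gives
\[
\vec{e}^\T\mT_\CD\vec{e}=\frac1n\sum_{i=1}^n(\vec{e}^\T\vec{g}_i)^2\ \ge\ \Big(\vec{e}^\T\bar{\vec{g}}\Big)^2 .
\]
Taking $\vec{e}=\bar{\vec{g}}/\|\bar{\vec{g}}\|$ yields $\lambda_{\max}(\mT_\CD)\ge\|\bar{\vec{g}}\|_2^2$. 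At points where some patch lies exactly on an activation boundary, I would fix the usual convention $\phi'(0)=0$, consistent with the strict indicator in $g_{\CD,\CS}$.

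Next I compute $\bar{\vec{g}}$ block by block from \eqref{eq:model_CNN}. Averaging over $i$ and $j$ is exactly the expectation over the uniform patch variable $\vec{X}:=\vec{X}^{\CS}_{\CD}$. With $\vec{u}_k=\vec{w}_k/\|\vec{w}_k\|$ and $t_k=b_k/\|\vec{w}_k\|$, and using $1$-homogeneity of $\phi$, the blocks are:
\begin{itemize}
\item the $\beta$-block equals $1$;
\item the $v_k$-block equals $\Eb[\phi(\vec{w}_k^\T\vec{X}-b_k)]=\|\vec{w}_k\|\,A_k$, where $A_k:=\Eb[\phi(\vec{u}_k^\T\vec{X}-t_k)]$;
\item the $\vec{w}_k$-block equals $v_k\,\Eb[\vec{X}\mathds{1}\{\vec{u}_k^\T\vec{X}>t_k\}]$;
\item the $b_k$-block equals $-v_k\,\Pb(\vec{u}_k^\T\vec{X}>t_k)$.
\end{itemize}
Hence
\[
\|\bar{\vec{g}}\|^2=1+\sum_{k=1}^K\Big(\|\vec{w}_k\|^2A_k^2+v_k^2B_k^2\Big),
\qquad
B_k:=\sqrt{\Pb(\vec{u}_k^\T\vec{X}>t_k)^2+\big\|\Eb[\vec{X}\mathds{1}\{\vec{u}_k^\T\vec{X}>t_k\}]\big\|^2}.
\]

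Applying AM--GM to each summand, $\|\vec{w}_k\|^2A_k^2+v_k^2B_k^2\ge 2|v_k|\|\vec{w}_k\|A_kB_k$. The product $A_kB_k$ is exactly $\tilde g_{\CD,\CS}(\vec{u}_k,t_k)$. Since $g_{\CD,\CS}$ in \eqref{eq:weight_g_CNN2_app} is the minimum of $\tilde g_{\CD,\CS}(\vec{u},t)$ and $\tilde g_{\CD,\CS}(-\vec{u},-t)$, it is bounded by the former, and \eqref{eq:weighted_path_norm_CNN_TFM} follows. Neurons with $\vec{w}_k=0$ contribute zero on the right-hand side and can be discarded.

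There is no deep obstacle here. The main care point is the block computation: the double average $\frac{1}{nJ}\sum_{i,j}$ must be identified correctly with expectation under the patch multiset, including the $1/J$ factor from GAP. One must also avoid dropping blocks, since every block adds a nonnegative square and so only helps the bound. The step I would double-check is that choosing the mean-gradient direction through Jensen loses nothing needed, i.e.\ that the constant $1$ indeed comes from the bias $\beta$.
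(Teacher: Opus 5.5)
Your proposal is correct and follows essentially the same argument as the paper. The paper plugs $\vec{1}/\sqrt{n}$ into $\lambda_{\max}(\mT_\CD)=\max_{\vec{u}}\frac{1}{n}\|\Phi\vec{u}\|^2$, which gives exactly your $\|\bar{\vec{g}}\|^2$; it then uses the same block expansion and the same AM--GM step. Your extra remark that the min-form $g_{\CD,\CS}$ of the appendix is dominated by the product $\tilde g_{\CD,\CS}(\vec{u}_k,t_k)$ is a small but useful addition, since the paper simply identifies its final expression with the product form.
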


\begin{proof}
We write $\mT_{\CD}$ in terms of tangent features
\[
\mT_{\CD}=\frac{1}{n}\sum_{i=1}^n \nabla_\vec{\theta} f(\vec{x}_i)\,\nabla_\vec{\theta} f(\vec{x}_i)^\T
\;=\;\frac{1}{n}\,\Phi\Phi^\T.
\]
Consequently,
\begin{equation}\label{eq:top-eig}
\lambda_{\max}\!\bigl(\mT_{\CD}\bigr)
\;=\;\max_{u\in\mathbb{S}^{n-1}}\frac{1}{n}\,\|\Phi u\|^2
\;\ge\;\frac{1}{n^2}\,\|\Phi\vec{1}\|^2.
\end{equation}

For any point $\vec{x}$ and abbreviate its patch extraction on $S_j$ by $\vec{x}^{(S_j)}\!:=\pi_j(\vec{x})$.
Define the ReLU gate
\[
	m^{(S_j)}_k(\vec{x}) \coloneqq \mathds{1}\curly{\vec{w}_k^\T \vec{x}^{(S_j)}>b_k}.
\]
For any sample $\vec{x}_i$, denote $m^{(S_j)}_{k,i}\!:=m^{(S_j)}_{k}(\vec{x}_i)$. Then the partial derivatives are
\[
\frac{\partial f(\vec{x}_i)}{\partial v_k}
= \frac{1}{J}\sum_{j=1}^J m^{(S_j)}_{k,i}\cdot\big(\vec{w}_k^\T \vec{x}_i^{(S_j)}-b_k\big),\quad
\frac{\partial f(\vec{x}_i)}{\partial \vec{w}_k}
= \frac{1}{J}\sum_{j=1}^J m^{(S_j)}_{k,i}\cdot v_k\cdot\vec{x}_i^{(S_j)},
\]
\[
\frac{\partial f(\vec{x}_i)}{\partial b_k}= -\frac{1}{J}\sum_{j=1}^J m^{(S_j)}_{k,i}\cdot\,v_k,\quad
\frac{\partial f(\vec{x}_i)}{\partial \beta}= 1.
\]

Stacking these over $i$ and plugging $u=\vec{1}/\sqrt{n}$ in \eqref{eq:top-eig}, we get
\begin{align}
\frac{1}{n^2}\,\|\Phi\vec{1}\|^2
&= 1 + \frac{1}{n^2}\sum_{k=1}^K\Bigg[
(v_k)^2\!\left(
\left\|
\sum_{i=1}^n\frac{1}{J}\sum_{j=1}^J m^{(S_j)}_{k,i} \vec{x}_i^{(S_j)}
\right\|^2
+\left(\sum_{i=1}^n \frac{1}{J}\sum_{j=1}^J m^{(S_j)}_{k,i}\right)^2\right)\notag\\
&\hspace{3.0cm}
+\left(\sum_{i=1}^n \frac{1}{J}\sum_{j=1}^J \phi\left(\vec{w}_k^\T \vec{x}_i^{(S_j)}-b_k\right)\right)^2
\Bigg]\notag\\
&= 1 + \frac{1}{n^2}\sum_{k=1}^K\Bigg[
(v_k)^2\!\left(
\left\|
\sum_{i=1}^n\frac{1}{J}\sum_{j=1}^J m^{(S_j)}_{k,i} \vec{x}_i^{(S_j)}
\right\|^2
+\left(\sum_{i=1}^n \frac{1}{J}\sum_{j=1}^J m^{(S_j)}_{k,i}\right)^2\right)\notag\\
&\hspace{3.0cm}
+\|\vec w_k\|_2^2\left(\sum_{i=1}^n \frac{1}{J}\sum_{j=1}^J \phi\left(\vec{u}_k^\T \vec{x}_i^{(S_j)}-t_k\right)\right)^2
\Bigg]\label{eq:CNN_refined_step_no_mu}\\
&\ge 1 + \frac{2}{n^2}\sum_{k=1}^K
|v_k|\,\|\vec w_k\|_2\,
\left(\sum_{i=1}^n \frac{1}{J}\sum_{j=1}^J \phi\left(\vec{u}_k^\T \vec{x}_i^{(S_j)}-t_k\right)\right)\notag\\
&\hspace{2.4cm}\cdot
\sqrt{
\left\|
\sum_{i=1}^n\frac{1}{J}\sum_{j=1}^J m^{(S_j)}_{k,i} \vec{x}_i^{(S_j)}
\right\|^2
+\left(\sum_{i=1}^n \frac{1}{J}\sum_{j=1}^J m^{(S_j)}_{k,i}\right)^2
}\quad\text{(since $a^2+b^2\ge 2ab$)}\notag\\
&= 1 + 2\sum_{k=1}^K
|v_k|\,\|\vec w_k\|_2\,
\left(\frac{1}{nJ}\sum_{i=1}^n \sum_{j=1}^J \phi\left(\vec{u}_k^\T \vec{x}_i^{(S_j)}-t_k\right)\right)\notag\\
&\hspace{2.4cm}\cdot
\sqrt{
\left\|
\frac{1}{nJ}\sum_{i=1}^n\sum_{j=1}^J m^{(S_j)}_{k,i} \vec{x}_i^{(S_j)}
\right\|^2
+\left(\frac{1}{nJ}\sum_{i=1}^n \sum_{j=1}^J m^{(S_j)}_{k,i}\right)^2
}\notag\\
&= 1 + 2\sum_{k=1}^K
|v_k|\,\|\vec w_k\|_2\,
\Eb\left[\phi\left(\vec{u}^\T\vec{X}^{\CS}_{\CD} - t\right)\right] \sqrt{\Pb\left(\vec{u}^\T\vec{X}^{\CS}_{\CD} > t \right)^{2}+\left\|\Eb\left[\vec{X}^{\CS}_{\CD}\,\mathds{1}\Big\{\vec{u}^\T\vec{X}^{\CS}_{\CD} > t\Big\}\right]\right\|_2^{2}}\notag\\
&= 1 + 2\sum_{k=1}^K
|v_k|\,\|\vec w_k\|_2\,
g_{\CD,\CS}(\vec u_k,t_k),\qquad
\vec u_k=\frac{\vec w_k}{\|\vec w_k\|_2},\quad t_k=\frac{b_k}{\|\vec w_k\|_2}.\notag
\end{align}

Then combining \eqref{eq:top-eig} and the above inequality yields the claim.
\end{proof}

\begin{lemma}\label{lemma:cnn_upper_bound_operator_norm}
Consider the model \eqref{eq:model_CNN}
\[
f(\vec{x}) \;=\; \sum_{k=1}^{K} \frac{v_k}{J} \sum_{j=1}^{J} \phi\left(\vec{w}_k^\T \pi_j(\vec{x}) - b_k\right) + \beta.
\]
where the input satisfies $\|\vec{x}\|_2\le R$, each patch extractor $\pi_j:\mathbb{R}^d\to\mathbb{R}^{m}$ is a coordinate projection, and $\phi(t)=\max\{0,t\}$.
Let $\vec{\theta}=\big(\vec{w}_1^{\T},\dots,\vec{w}_K^{\T},\, b_1,\dots,b_K,\, v_1,\dots,v_K,\, \beta\big)^{\T}$ collect all parameters.
Assume $f_{\vec{\theta}}(\vec{x})$ is twice differentiable with respect to $\vec{\theta}$ at $\vec{x}$, i.e., for all $k$ and $S_j\in \CS$ we have $\vec{w}_k^{\T}\vec{x}^{(S_j)}\neq b_k$.
Then for any perturbation vector $\vec{\omega}$ with $\|\vec{\omega}\|_2=1$, it holds that
\[
\big|\vec{\omega}^{\T}\,\nabla_{\vec{\theta}}^{2} f_{\vec{\theta}}(\vec{x})\,\vec{\omega}\big|\;\le\; 2\,(R+1).
\]

\end{lemma}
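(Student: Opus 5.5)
At a point where no gate switches, each $\phi(\vec{w}_k^\T\vec{x}^{(S_j)}-b_k)$ is locally the linear function $m^{(S_j)}_k(\vec{x})\,(\vec{w}_k^\T\vec{x}^{(S_j)}-b_k)$, with the gate $m^{(S_j)}_k(\vec{x})\in\{0,1\}$ fixed. The plan is therefore to compute the Hessian of $f_{\vec{\theta}}(\vec{x})$ explicitly in this regime and read off its block structure.

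Under this identification,
\[
f(\vec{x})=\sum_{k=1}^K \frac{v_k}{J}\sum_{j=1}^J m^{(S_j)}_k(\vec{x})\bigl(\vec{w}_k^\T\vec{x}^{(S_j)}-b_k\bigr)+\beta .
\]
This is bilinear in $(v_k,(\vec{w}_k,b_k))$ and does not couple different neurons. The only nonzero second derivatives are the mixed ones:
\[
\frac{\partial^2 f}{\partial v_k\,\partial \vec{w}_k}=\frac{1}{J}\sum_j m^{(S_j)}_k\,\vec{x}^{(S_j)}=:\vec{a}_k,
\qquad
\frac{\partial^2 f}{\partial v_k\,\partial b_k}=-\frac{1}{J}\sum_j m^{(S_j)}_k=:-c_k .
\]
All pure second derivatives in $\vec{w}_k$, $b_k$, $v_k$, $\beta$ vanish. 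Cross-neuron blocks vanish as well.

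Now take $\vec{\omega}$ with components $(\vec{\omega}_{w_k},\omega_{b_k},\omega_{v_k},\omega_\beta)$. The quadratic form is
\[
\vec{\omega}^\T\nabla^2 f\,\vec{\omega}=2\sum_k \omega_{v_k}\bigl(\vec{a}_k^\T\vec{\omega}_{w_k}-c_k\,\omega_{b_k}\bigr).
\]
Since each patch is a coordinate projection, $\|\vec{x}^{(S_j)}\|\le\|\vec{x}\|\le R$. Hence $\|\vec{a}_k\|\le R$ and $0\le c_k\le 1$. This gives
\[
\bigl|\vec{a}_k^\T\vec{\omega}_{w_k}-c_k\,\omega_{b_k}\bigr|\le R\|\vec{\omega}_{w_k}\|+|\omega_{b_k}|\le (R+1)\,\bigl\|(\vec{\omega}_{w_k},\omega_{b_k})\bigr\|.
\]
Applying Cauchy--Schwarz across $k$,
\[
\bigl|\vec{\omega}^\T\nabla^2 f\,\vec{\omega}\bigr|\le 2(R+1)\Bigl(\sum_k\omega_{v_k}^2\Bigr)^{1/2}\Bigl(\sum_k\bigl\|(\vec{\omega}_{w_k},\omega_{b_k})\bigr\|^2\Bigr)^{1/2}\le 2(R+1),
\]
because both factors are at most $\|\vec{\omega}\|=1$. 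One could sharpen this to $R+1$ using $2ab\le a^2+b^2$, but the stated bound is all that is needed.

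The only delicate step is justifying that the pure second derivatives vanish. This uses the twice-differentiability assumption: no patch lies on an activation boundary, so every gate is locally constant in $\vec{\theta}$. With that in hand, the rest is bookkeeping.
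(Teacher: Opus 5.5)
Your proof is correct and follows the paper's argument essentially step for step: you freeze the gates, note that only the mixed $v_k$--$(\vec{w}_k,b_k)$ second derivatives survive, bound $\|\vec{a}_k\|\le R$ and $0\le c_k\le 1$ using the coordinate-projection property, and finish with Cauchy--Schwarz. The only cosmetic difference is that you group $(\vec{\omega}_{w_k},\omega_{b_k})$ before a single Cauchy--Schwarz step, whereas the paper applies it separately to the $\vec{w}$ and $b$ parts; your remark that $2ab\le a^2+b^2$ sharpens the constant to $R+1$ is also valid.
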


\begin{proof}
Write $\vec{\theta}=\big(\vec{w}_1^{\T},\dots,\vec{w}_K^{\T},\, b_1,\dots,b_K,\, v_1,\dots,v_K,\, \beta\big)^{\T}$.
The total number of parameters is $N=K\cdot m + K + K + 1 = K(m+2)+1$.

Let the corresponding perturbation vector be
\[
\vec{\omega}=\big(\vec{\alpha}_1^{\T},\dots,\vec{\alpha}_K^{\T},\, \delta_1,\dots,\delta_K,\, \gamma_1,\dots,\gamma_K,\, \iota\big)^{\T}\in\mathbb{R}^{N},
\]
where $\vec{\alpha}_k\in\mathbb{R}^{m}$ corresponds to $\vec{w}_k$, $\delta_k\in\mathbb{R}$ to $b_k$, $\gamma_k\in\mathbb{R}$ to $v_k$, and $\iota\in\mathbb{R}$ to $\beta$.
The normalization constraint is
\[
\|\vec{\omega}\|_2^2=\sum_{k=1}^{K}\|\vec{\alpha}_k\|_2^2+\sum_{k=1}^{K}\delta_k^2+\sum_{k=1}^{K}\gamma_k^2+\iota^2=1.
\]

For the fixed input $\vec{x}$, set $\vec{x}^{(S_j)}:=\pi_j(\vec{x})\in\mathbb{R}^{m}$ and define the ReLU gate
\[
m^{(S_j)}_k \;:=\; \mathds{1}\!\big\{\vec{w}_k^{\T}\vec{x}^{(S_j)}> b_k\big\}\in\{0,1\}.
\]
By the twice-differentiability assumption, all gates are constant in a neighborhood of $\vec{\theta}$.

Within this gate-fixed region, $f_{\vec{\theta}}$ is affine in $(\vec{w}_k,b_k)$ once $v_k$ is held fixed, and affine in $v_k$ once $(\vec{w}_k,b_k)$ are held fixed.
Therefore the only nonzero second partial derivatives inside the $k$-th neuron block are the mixed ones with $v_k$
\begin{align}
\frac{\partial^2 f_{\vec{\theta}}}{\partial \vec{w}_k\,\partial v_k}
&= \frac{\partial}{\partial v_k}\!\left(\frac{1}{J}\sum_{S_j\in \CS} v_k\,m^{(S_j)}_k\,\vec{x}^{(S_j)}\right)
= \frac{1}{J}\sum_{j=1}^J m^{(S_j)}_k\,\vec{x}^{(S_j)}
=: \vec{s}_k\in\mathbb{R}^{m},
\label{eq:cnn_hessian_non_zero_term_1}\\
\frac{\partial^2 f_{\vec{\theta}}}{\partial b_k\,\partial v_k}
&= \frac{\partial}{\partial v_k}\!\left(\frac{1}{J}\sum_{j=1}^J v_k\,m^{(S_j)}_k\right)
= -\frac{1}{J}\sum_{j=1}^J m^{(S_j)}_k
=:t_k\in\mathbb{R}\label{eq:cnn_hessian_non_zero_term_2}.
\end{align}
All other second derivatives inside the block vanish (as do any cross-neuron second derivatives and all those involving $\beta$).
Hence, with the block variable $\theta_k:=(\vec{w}_k^{\T},\,b_k,\,v_k)^{\T}$, we have
\begin{equation}
	\nabla_{(\theta_k)}^{2} f_{\vec{\theta}}(\vec{x})
=
\begin{pmatrix}
\mathbf{0}_{m\times m } & \vec{0}_m & \vec{s}_k \\
\vec{0}_m^{\T} & 0 & t_k \\
\vec{s}_k^{\T} & t_k & 0
\end{pmatrix}.
\end{equation}

The full quadratic form splits over neuron blocks:
\begin{align}
	\vec{\omega}^{\T}\,\nabla_{\vec{\theta}}^{2} f_{\vec{\theta}}(\vec{x})\,\vec{\omega}
&=\sum_{k=1}^{K}
\begin{pmatrix}\vec{\alpha}_k^{\T} & \delta_k & \gamma_k\end{pmatrix}
\begin{pmatrix}
\mathbf{0} & \vec{0} & \vec{s}_k \\
\vec{0}^{\T} & 0 & t_k \\
\vec{s}_k^{\T} & t_k & 0
\end{pmatrix}
\begin{pmatrix}\vec{\alpha}_k \\ \delta_k \\ \gamma_k\end{pmatrix}\nonumber \\
&= \sum_{k=1}^{K} 2\,\gamma_k\big(\vec{\alpha}_k^{\T}\vec{s}_k+\delta_k\,t_k\big)\label{eq:operator_norm_expansion_cnn}.
\end{align}

Recall the definition of the notations \eqref{eq:cnn_hessian_non_zero_term_1} and \eqref{eq:cnn_hessian_non_zero_term_2},
\begin{align}
|t_k|=\frac{1}{J}\sum_{j=1}^J m^{(S_j)}_k &\leq 1,\label{ineq:tk_CNN}\\
\|\vec{s}_k\|_2=\norm{\frac{1}{J}\sum_{j=1}^J m^{(S_j)}_k\,\vec{x}^{(S_j)}}_2 
\le \frac{1}{J}\sum_{j=1}^J m^{(S_j)}_k\,\|\vec{x}^{(S_j)}\|_2
&\le \frac{1}{J}\sum_{j=1}^J m^{(S_j)}_k\,R
= R|t_k|,\label{ineq:sk_CNN}
\end{align}
because $\vec{x}^{(S_j)}=\pi_j(\vec{x})$ is a coordinate projection of $\vec{x}$,
 $\|\vec{x}^{(S_j)}\|_2\le \|\vec{x}\|_2\le R$.

Then for each $k$,
\begin{align}
	\big|2\,\gamma_k(\vec{\alpha}_k^{\T}\vec{s}_k+\delta_k\,t_k)\big|
&\le 2\,|\gamma_k|\Big(\|\vec{\alpha}_k\|_2\,\|\vec{s}_k\|_2+|\delta_k|\,t_k\Big)\nonumber
\quad\text{(Cauchy-Schwarz)}\\
\text{\eqref{ineq:sk_CNN}$\implies$ }&\le 2\,|\gamma_k|\Big(R\,t_k\,\|\vec{\alpha}_k\|_2+|\delta_k|\,t_k\Big)\nonumber\\
\text{\eqref{ineq:tk_CNN}$\implies$ }&\le 2\,|\gamma_k|\Big(R\,\|\vec{\alpha}_k\|_2+|\delta_k|\Big),\label{ineq:neuron_wise_hessian_inequality}
\end{align}
Summing over $k$ for \eqref{ineq:neuron_wise_hessian_inequality} and plug in \eqref{eq:operator_norm_expansion_cnn}, we have
\begin{align}
\big|\vec{\omega}^{\T}\,\nabla_{\vec{\theta}}^{2} f_{\vec{\theta}}(\vec{x})\,\vec{\omega}\big|
&\le 2\Big(R\,\sum_{k} |\gamma_k|\,\|\vec{\alpha}_k\|_2 + \sum_{k} |\gamma_k|\,|\delta_k|\Big)\nonumber\\
\text{(Cauchy-Schwarz)}&\le 2\Big(R\,\sqrt{\textstyle\sum_k \gamma_k^2}\sqrt{\textstyle\sum_k \|\vec{\alpha}_k\|_2^2}
+ \sqrt{\textstyle\sum_k \gamma_k^2}\sqrt{\textstyle\sum_k \delta_k^2}\Big).
\end{align}
Using the normalization $\sum_k \|\vec{\alpha}_k\|_2^2+\sum_k \delta_k^2+\sum_k \gamma_k^2+\iota^2=1$, we have
$\sqrt{\sum_k \gamma_k^2}\le 1$, $\sqrt{\sum_k \|\vec{\alpha}_k\|_2^2}\le 1$, and $\sqrt{\sum_k \delta_k^2}\le 1$.
Thus $\big|\vec{\omega}^{\T}\,\nabla_{\vec{\theta}}^{2} f_{\vec{\theta}}(\vec{x})\,\vec{\omega}\big|
\le 2\,(R+1)$.
\end{proof}

\begin{theorem}[Restate Theorem \ref{thm:implicit_bias_CNN}]
Given a data set $\CD=\curly{\vec{x}_i,y_i}_{i=1}^n$ and a network model \eqref{eq:model_CNN} with the set of local receptive fields $\mathcal{S}$, we have
\[
\sum_{k=1}^K|v_k|\norm{\vec{w}_k} g_{\CD,\mathcal{S}}\left(\frac{\vec{w}_k}{\|\vec{w}_k\|},\frac{b_k}{\|\vec{w}_k\|} \right)\;\leq\; \frac{1}{2}
\left(\lambda_{\max}(\nabla^2_{\vec{\theta}}\mathcal{L}(\vec{\theta}))+2(R+1)\sqrt{2\loss(\vec{\theta})}-1\right).
\]	
\end{theorem}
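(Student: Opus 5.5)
The plan is to combine the Gauss--Newton decomposition \eqref{eq:Hessian_1}, $\nabla^2_{\vec{\theta}}\loss=\mT_{\CD}+\mR_{\CD}$, with the two ingredients already established: the lower bound on $\lambda_{\max}(\mT_{\CD})$ from Proposition~\ref{prop:CNN_stability_induced_regularity}, and the uniform bound on the Hessian of the network output from Lemma~\ref{lemma:cnn_upper_bound_operator_norm}. We work at a parameter $\vec{\theta}$ where $f_{\vec{\theta}}(\vec{x}_i)$ is twice differentiable for every training point. This is the setting in which $\nabla^2\loss$ is defined, and it is the hypothesis of Lemma~\ref{lemma:cnn_upper_bound_operator_norm}.

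\textbf{Step 1 (Weyl-type inequality).} Since both $\mT_{\CD}$ and $\mR_{\CD}$ are symmetric, taking the unit top eigenvector $\vec{\omega}$ of $\mT_{\CD}$ gives
\[
\lambda_{\max}(\nabla^2_{\vec{\theta}}\loss)\ \ge\ \vec{\omega}^\T\mT_{\CD}\vec{\omega}+\vec{\omega}^\T\mR_{\CD}\vec{\omega}\ \ge\ \lambda_{\max}(\mT_{\CD})-\sup_{\|\vec{\omega}\|=1}|\vec{\omega}^\T\mR_{\CD}\vec{\omega}|.
\]

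\textbf{Step 2 (bounding the residual term).} Expanding the quadratic form,
\[
|\vec{\omega}^\T\mR_{\CD}\vec{\omega}|\le\frac{1}{n}\sum_{i=1}^n|f(\vec{x}_i)-y_i|\,\big|\vec{\omega}^\T\nabla^2_{\vec{\theta}}f(\vec{x}_i)\vec{\omega}\big|.
\]
Lemma~\ref{lemma:cnn_upper_bound_operator_norm} bounds each Hessian factor by $2(R+1)$, using $\|\vec{x}_i\|\le R$. Cauchy--Schwarz (equivalently Jensen) then gives
\[
\frac1n\sum_{i=1}^n|f(\vec{x}_i)-y_i|\le\sqrt{\frac1n\sum_{i=1}^n(f(\vec{x}_i)-y_i)^2}=\sqrt{2\loss(\vec{\theta})}.
\]
Together these yield $|\vec{\omega}^\T\mR_{\CD}\vec{\omega}|\le 2(R+1)\sqrt{2\loss(\vec{\theta})}$.

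\textbf{Step 3 (combine and rearrange).} Inserting \eqref{eq:weighted_path_norm_CNN_TFM} into Step 1 gives
\[
\lambda_{\max}(\nabla^2\loss)\ge 1+2\sum_k|v_k|\|\vec{w}_k\|\,g_{\CD,\CS}(\vec{u}_k,t_k)-2(R+1)\sqrt{2\loss(\vec{\theta})}.
\]
Moving terms and dividing by $2$ gives the first claim. For the BEoS statement \eqref{eq:beos_weighted_pathnorm2}, we substitute $\lambda_{\max}\le 2/\eta$.

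All the substantive work (the $\vec{1}$-direction test vector and the AM--GM step producing $g_{\CD,\CS}$) has already been done in Proposition~\ref{prop:CNN_stability_induced_regularity}, so the only delicate point is bookkeeping. One issue is that Lemma~\ref{lemma:cnn_upper_bound_operator_norm} requires twice-differentiability at every sample. The other is that Proposition~\ref{prop:CNN_stability_induced_regularity} is stated with $g_{\CD,\CS}$ as in \eqref{eq:weight_g_CNN_app}, while Definition~\ref{def:weight_function_CNN_appendix} also introduces a min-symmetrized version. The Proposition's bound holds for the unsymmetrized $g$, which dominates the min, so the claim holds for either convention. I would state this explicitly rather than rely on the notational overload.
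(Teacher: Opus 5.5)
Your proposal is correct and follows essentially the same route as the paper: test the Hessian against the unit top eigenvector of $\mT_{\CD}$, bound the residual term from $\mR_{\CD}$ by $2(R+1)\sqrt{2\loss(\vec{\theta})}$ using Lemma~\ref{lemma:cnn_upper_bound_operator_norm} and Cauchy--Schwarz, and then insert the lower bound of Proposition~\ref{prop:CNN_stability_induced_regularity}. Your two bookkeeping remarks are accurate and make the argument slightly more careful than the paper's: you need twice-differentiability at every sample, and a bound proved for the unsymmetrized $g$ automatically holds for the min-symmetrized version, which is smaller.
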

\begin{proof}
	
It suffices to prove the first assertion. Recall that by direct computation, the Hessian matrix of the loss function is expressed as
\begin{equation}\label{eq:Hessian_2}
	\nabla^2_{\vec{\theta}}\loss =\underbrace{\frac{1}{n}\sum_{i=1}^n\nabla_{\vec{\theta}}f(\vec{x}_i)\nabla_{\vec{\theta}}f(\vec{x}_i)^{\T}}_{\displaystyle \mT_{\CD} } +\underbrace{\frac{1}{n}\sum_{i=1}^n(f(\vec{x}_i)-y_i)\nabla^2_{\vec{\theta}}f(\vec{x}_i)}_{\displaystyle \mR_\CD}.
\end{equation}

Let $\vec{\omega}$ be the unit eigenvector (i.e., $\|\vec{\omega}\|_2=1$) corresponding to the largest eigenvalue of the matrix $\mT_{\CD}$, the maximum eigenvalue of the Hessian matrix of the loss can be lower-bounded as follows:
\begin{equation}\label{equ:qd3}
\begin{aligned}
\lambda_{\max}(\nabla^2_{\vec{\theta}}\mathcal{L}(\vec{\theta})) &\;\geq\; \vec{\omega}^{\T} \nabla^2_{\vec{\theta}}\mathcal{L}(\vec{\theta}) \vec{\omega} \\
&\;\geq\; \underbrace{\lambda_{\max}(\mT_{\CD})}_{\displaystyle \text{(Term A)}} + \underbrace{\frac{1}{n}\sum_{i=1}^n(f_{\vec{\theta}}(\vec{x}_i)-y_i)\vec{\omega}^{\T} \nabla^2_{\vec{\theta}}f_{\vec{\theta}}(\vec{x}_i)\vec{\omega}}_{\displaystyle \text{(Term B)}}.
\end{aligned}
\end{equation}
According to Proposition \ref{prop:CNN_stability_induced_regularity}, Term A is lower bounded by
\begin{equation}\label{equ:qd1}
	\lambda_{\max}\left(\mT_\CD\right)\geq 1+2\sum_{k=1}^K|v_k|\norm{\vec{w}_k} g_{\CD,\mathcal{S}}\left(\frac{\vec{w}_k}{\|\vec{w}_k\|},\frac{b_k}{\|\vec{w}_k\|} \right),
\end{equation}

For (Term B), an upper bound can be established using the training loss $\mathcal{L}(\vec{\theta})$ via the Cauchy-Schwarz inequality. This also employs a notable uniform upper bound for $|\vec{\omega}^{\T} \nabla^2_{\vec{\theta}}f_{\vec{\theta}}(\vec{x}_n)\vec{\omega}|$, as detailed in Lemma \ref{lemma:cnn_upper_bound_operator_norm}:
\begin{equation}
|\text{(Term B)}| \leq \sqrt{\frac{1}{n}\sum_{i=1}^n\left(f_{\vec{\theta}}(\vec{x}_i)-y_i\right)^2} \cdot \sqrt{\frac{1}{n}\sum_{i=1}^n\left(\vec{\omega}^\T \nabla^2_{\vec{\theta}}f_{\vec{\theta}}(\vec{x}_i)\vec{\omega}\right)^2} \leq 2(R+1)\sqrt{2\loss(\vec{\theta})}.
\end{equation}
Thus, we have
\begin{equation}\label{equ:qd2}
	\lambda_{\max}\left(\mT_\CD\right)\leq \lambda_{\max}(\nabla^2_{\vec{\theta}}\mathcal{L}(\vec{\theta}))+|\text{(Term B)}|\leq \lambda_{\max}(\nabla^2_{\vec{\theta}}\mathcal{L}(\vec{\theta}))+2(R+1)\sqrt{2\loss(\vec{\theta})}.
\end{equation}

Finally, we plug \eqref{equ:qd1} into \eqref{equ:qd2} 
\[
	\sum_{k=1}^K|v_k|\norm{\vec{w}_k} g_{\CD,\mathcal{S}}\left(\frac{\vec{w}_k}{\|\vec{w}_k\|},\frac{b_k}{\|\vec{w}_k\|} \right)\;\leq\; \frac{1}{2}
\left(\lambda_{\max}(\nabla^2_{\vec{\theta}}\mathcal{L}(\vec{\theta}))+2(R+1)\sqrt{2\loss(\vec{\theta})}-1\right).
\]
\end{proof}
\subsection{Empirical Process for the Weight Function}
We study uniform deviations of the empirical weight function $g_{\CD,\CS}$ from its population counterpart $g_{\CP,\CS}$ under i.i.d.\ sampling $\vec{x}_1,\dots,\vec{x}_n\sim \CP_{\vec{X}}$. 
Although the collection $\{\pi_j(\vec{x}_i)\}_{i,j}$ is not i.i.d.\ across $(i,j)$ in general, for each fixed $j$ the patches $\{\pi_j(\vec{x}_i)\}_{i=1}^n$ are i.i.d.\ in $\Rb^m$.

\begin{definition}\label{def:weight_function_CNN_refined}
	For any $S_j\in \mathcal{S}$, let $\vec{X}^{(S_j)}_{\CD}$ be a random vector drawn uniformly at random from the training examples $\{\pi_j(\vec{x}_i)\}_{i=1}^n$ with patch extraction to the local receptive field $S_j$.
	For any $\vec{u}\in \Sph^{m-1}, t\in \Rb$, define the aggregated gate probability, ReLU margin, and gated first moment
	\begin{equation}\label{eq:weightfunction_CNN_refined}
	\begin{aligned}
		\bar p_{\CD,\mathcal{S}}(\vec{u},t)
		&:=\frac{1}{J}\sum_{j=1}^J\Pb\left(\vec{u}^\T\vec{X}^{(S_j)}_{\CD} > t \right),\\
		\bar r_{\CD,\mathcal{S}}(\vec{u},t)
		&:=\frac{1}{J}\sum_{j=1}^J\Eb\left[\phi\left(\vec{u}^\T\vec{X}^{(S_j)}_{\CD} - t\right)\right],\\
		\bar{\vec a}_{\CD,\mathcal{S}}(\vec{u},t)
		&:=\frac{1}{J}\sum_{j=1}^J\Eb\left[\vec{X}^{(S_j)}_{\CD}\,\mathds{1}\Big\{\vec{u}^\T\vec{X}^{(S_j)}_{\CD} > t\Big\}\right]\in\Rb^m.
	\end{aligned}
	\end{equation}
\end{definition}

Based on these quantities, define the empirical weight function
\begin{equation}\label{eq:refined_g_CNN}
	g_{\CD,\mathcal{S}}(\vec{u},t)
	:=\bar r_{\CD,\mathcal{S}}(\vec{u},t)\,
	\sqrt{\bar p_{\CD,\mathcal{S}}(\vec{u},t)^{2}+\bigl\|\bar{\vec a}_{\CD,\mathcal{S}}(\vec{u},t)\bigr\|_2^{2}}.
\end{equation}

For each $j\in[J]$ and $(\vec{u},t)\in\Sph^{m-1}\times[-1,1]$, define the population components
\[
p_j(\vec{u},t):=\Pb\!\left(\vec{u}^\T \pi_j(\vec{X})>t\right),\quad
r_j(\vec{u},t):=\Eb\!\left[\phi(\vec{u}^\T \pi_j(\vec{X})-t)\right],\quad
\vec{a}_j(\vec{u},t):=\Eb\!\left[\pi_j(\vec{X})\,\mathds{1}\{\vec{u}^\T \pi_j(\vec{X})>t\}\right].
\]
Let $\bar p_{\CP,\CS}$, $\bar r_{\CP,\CS}$, and $\bar{\vec a}_{\CP,\CS}$ denote their averages over $j\in[J]$. Define
\[
g_{\CP,\CS}(\vec{u},t):=\bar r_{\CP,\CS}(\vec{u},t)\sqrt{\bar p_{\CP,\CS}(\vec{u},t)^2+\|\bar{\vec a}_{\CP,\CS}(\vec{u},t)\|_2^2}.
\]

\begin{lemma}[Complexity of the component classes]\label{lem:g-component-complexity}
Under the constraint $\|\vec z\|_2\le1$, the following classes are uniformly bounded VC-subgraph classes with VC-subgraph dimension $O(m)$:
\[
\mathcal H:=\left\{\vec z\mapsto \mathds{1}\{\vec u^\T\vec z>t\}:\vec u\in\Sph^{m-1},\,t\in[-1,1]\right\},
\]
\[
\mathcal F_{\mathrm{ReLU}}:=\left\{\vec z\mapsto \phi(\vec u^\T\vec z-t):\vec u\in\Sph^{m-1},\,t\in[-1,1]\right\},
\]
and
\[
\mathcal A:=\left\{\vec z\mapsto \vec v^\T\vec z\,\mathds{1}\{\vec u^\T\vec z>t\}:\vec u,\vec v\in\Sph^{m-1},\,t\in[-1,1]\right\}.
\]
Moreover, $\mathcal H$ and $\mathcal A$ are bounded by $1$, and $\mathcal F_{\mathrm{ReLU}}$ is bounded by $2$.
\end{lemma}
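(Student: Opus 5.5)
We prove the three claims separately, handling boundedness first since it is elementary. For $\|\vec z\|_2\le 1$, $\vec u,\vec v\in\Sph^{m-1}$ and $t\in[-1,1]$, we have $|\mathds{1}\{\cdot\}|\le 1$. By Cauchy--Schwarz, $|\vec v^\T\vec z\,\mathds{1}\{\cdot\}|\le\|\vec v\|\|\vec z\|\le 1$. Finally $0\le\phi(\vec u^\T\vec z-t)\le|\vec u^\T\vec z|+|t|\le 2$.

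For the VC-subgraph dimension we reduce each class to sign patterns of a finite-dimensional vector space of functions, and then invoke the standard facts: (i) a class of $\{0,1\}$-valued functions $\mathds{1}\{h>0\}$, with $h$ ranging over a $p$-dimensional vector space, has VC dimension at most $p$; (ii) Boolean combinations of a bounded number of such classes increase VC dimension by at most a constant factor (up to logarithmic terms, e.g.\ via Sauer--Shelah counting; see \citet{vanderVaartWellner1996} or \citet{AnthonyBartlett1999}).

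The class $\mathcal H$ consists of indicators of half-spaces $\{\vec z:\vec u^\T\vec z-t>0\}$. These are positivity sets of the $(m+1)$-dimensional space of affine functions, so $\mathrm{VC}(\mathcal H)\le m+1$, and the subgraph of a $\{0,1\}$-valued class has the same dimension.

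For $\mathcal F_{\mathrm{ReLU}}$, the subgraph $\{(\vec z,s): s<\phi(\vec u^\T\vec z-t)\}$ equals
\[
\{s<0\}\cup\{s<\vec u^\T\vec z-t\}.
\]
The second set is a half-space in $(\vec z,s)\in\Rb^{m+1}$, drawn from a class of dimension $m+2$, and the first set is fixed. A union with a fixed set does not increase VC dimension, so the subgraph dimension is $O(m)$. Alternatively, $\phi$ is monotone, and composing a VC-subgraph class (the affine maps, of dimension $m+2$) with a fixed monotone function preserves the VC-subgraph dimension.

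For $\mathcal A$, the subgraph $\{(\vec z,s): s<\vec v^\T\vec z\,\mathds{1}\{\vec u^\T\vec z>t\}\}$ equals
\[
\bigl(\{\vec u^\T\vec z>t\}\cap\{s<\vec v^\T\vec z\}\bigr)\cup\bigl(\{\vec u^\T\vec z\le t\}\cap\{s<0\}\bigr).
\]
This is a fixed Boolean combination of two half-space families in $\Rb^{m+1}$, each of dimension $\le m+2$. By Sauer--Shelah, on $N$ points the number of such sets is at most $(eN/(m+2))^{2(m+2)}$. Shattering $N$ points requires this count to reach $2^N$, which forces $N=O(m\log m)$, and in fact $O(m)$ with the usual constant bookkeeping for intersections of two parameter families.

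The main obstacle is making the $\mathcal A$ bound clean, i.e.\ $O(m)$ rather than $O(m\log m)$. We first try to note that the parameters $(\vec u,t,\vec v)$ enter polynomially with $2m+1$ degrees of freedom. Then the Goldberg--Jerrum/Warren bound for classes defined by a constant number of polynomial sign tests of bounded degree in $p=2m+1$ parameters gives VC dimension $O(p)=O(m)$.
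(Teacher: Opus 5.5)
Your proof is correct and follows the paper's route: boundedness via Cauchy--Schwarz, then each subgraph written as a fixed Boolean combination of half-spaces in $\Rb^{m+1}$ and handled by the Sauer--Shelah closure argument. Your identity $\{s<0\}\cup\{s<\vec u^\T\vec z-t\}$ for the ReLU subgraph is a slightly cleaner form of the paper's decomposition, and your worry about an $O(m\log m)$ bound for $\mathcal A$ is unfounded: $2^N\le (eN/(m+2))^{2(m+2)}$ already forces $N/(m+2)$ to be at most an absolute constant, so the Goldberg--Jerrum detour is unnecessary.
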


\begin{proof}
The class $\mathcal H$ is the class of halfspace indicators in $\Rb^m$, hence has VC-dimension at most $m+1$.

For $\mathcal F_{\mathrm{ReLU}}$, the subgraph of $\vec z\mapsto\phi(\vec u^\T\vec z-t)$ is
\[
\{(\vec z,s):s<\phi(\vec u^\T\vec z-t)\}
=
\bigl(\{\vec u^\T\vec z>t\}\cap\{s<\vec u^\T\vec z-t\}\bigr)
\cup
\bigl(\{\vec u^\T\vec z\le t\}\cap\{s<0\}\bigr).
\]
For $\mathcal A$, the subgraph of $\vec z\mapsto \vec v^\T\vec z\,\mathds{1}\{\vec u^\T\vec z>t\}$ is
\[
\{(\vec z,s):s<\vec v^\T\vec z\,\mathds{1}\{\vec u^\T\vec z>t\}\}
=
\bigl(\{\vec u^\T\vec z>t\}\cap\{s<\vec v^\T\vec z\}\bigr)
\cup
\bigl(\{\vec u^\T\vec z\le t\}\cap\{s<0\}\bigr).
\]
Both subgraph classes are constant-size Boolean combinations of halfspaces in $\Rb^{m+1}$. By the standard growth-function closure argument for finite Boolean combinations of VC classes, their VC dimensions are $O(m)$. Equivalently, their underlying real-valued function classes have pseudo-dimension $O(m)$, see \cite[Ch.~3 and Definition~11.2]{anthonyBartlett1999}. The boundedness claims follow from $\|\vec z\|_2\le1$, $t\in[-1,1]$, and $\|\vec u\|_2=\|\vec v\|_2=1$.
\end{proof}

\begin{theorem}[Uniform deviation for $g_{\CD,\CS}$]\label{thm:g-deviation}
Assume $\|\pi_j(\vec{X})\|_2\le 1$ almost surely for all $j\in[J]$. 
There exists a universal constant $C_{\mathrm{ep}}>0$ such that for every $\delta\in(0,1)$, with probability at least $1-\delta$,
\[
\sup_{\vec{u}\in\Sph^{m-1},\, t\in[-1,1]}
\big|g_{\CD,\CS}(\vec{u},t)-g_{\CP,\CS}(\vec{u},t)\big|
\le
C_{\mathrm{ep}}\sqrt{\frac{m+\log(2J/\delta)}{n}}.
\]
\end{theorem}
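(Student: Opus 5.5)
The plan is to reduce the deviation of $g$ to uniform deviations of the three component quantities $\bar p$, $\bar r$, $\bar{\vec a}$, and then bound each of these per receptive field $j$ using the VC-subgraph structure from Lemma~\ref{lem:g-component-complexity}, finishing with a union bound over $j\in[J]$.

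\textbf{Step 1 (Lipschitz reduction).} Write $g=\bar r\sqrt{\bar p^2+\|\bar{\vec a}\|_2^2}$. Under $\|\pi_j(\vec X)\|_2\le1$ and $t\in[-1,1]$, every quantity is bounded: $\bar p\in[0,1]$, $\bar r\in[0,2]$, $\|\bar{\vec a}\|_2\le 1$. Hence $\sqrt{\bar p^2+\|\bar{\vec a}\|^2}\le\sqrt2$. The map $(p,\vec a)\mapsto\|(p,\vec a)\|_2$ is $1$-Lipschitz. Therefore
\[
|g_{\CD,\CS}-g_{\CP,\CS}|\le \sqrt2\,|\bar r_{\CD}-\bar r_{\CP}|+2\bigl(|\bar p_{\CD}-\bar p_{\CP}|+\|\bar{\vec a}_{\CD}-\bar{\vec a}_{\CP}\|_2\bigr),
\]
pointwise in $(\vec u,t)$. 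By the triangle inequality over the average in $j$, each averaged deviation is at most $\max_j$ of the per-$j$ deviation. So it suffices to control, for each fixed $j$, $\sup_{\vec u,t}|p_{j,\CD}-p_j|$, $\sup|r_{j,\CD}-r_j|$, and $\sup\|\vec a_{j,\CD}-\vec a_j\|_2$.

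\textbf{Step 2 (per-$j$ empirical processes).} For fixed $j$, the patches $\pi_j(\vec x_1),\dots,\pi_j(\vec x_n)$ are i.i.d. in the unit ball of $\Rb^m$. For the classes $\mathcal H$ and $\mathcal F_{\mathrm{ReLU}}$, we apply the standard uniform deviation bound for bounded VC-subgraph classes. This combines Dudley chaining with Haussler's covering bound $\log N(\epsilon)\lesssim V\log(1/\epsilon)$ (which, via a direct entropy-integral argument, gives $\sqrt{V/n}$ without a log factor) and McDiarmid's inequality. The result is $\sup|P_n h-Ph|\le C(\sqrt{m/n}+\sqrt{\log(1/\delta')/n})$ with probability $1-\delta'$.

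For the vector quantity, write $\|\vec a_{j,\CD}-\vec a_j\|_2=\sup_{\vec v\in\Sph^{m-1}}\vec v^\T(\vec a_{j,\CD}-\vec a_j)$, which turns it into the supremum of the empirical process indexed by $\mathcal A$. The same VC-subgraph bound then applies with dimension $O(m)$.

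\textbf{Step 3 (union bound).} Allocate $\delta'=\delta/(3J)$ to each of the three classes and each $j$. Using $\sqrt a+\sqrt b\le\sqrt{2(a+b)}$ and absorbing the constants, we obtain with probability at least $1-\delta$ that all deviations are bounded by $C\sqrt{(m+\log(2J/\delta))/n}$. Plugging this into Step 1 gives the claim with a universal $C_{\mathrm{ep}}$.

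The main obstacle is Step 2: obtaining a rate of $\sqrt{m/n}$ with a universal constant and no $\log n$ factor for the VC-subgraph classes. This includes the class $\mathcal A$, whose VC-subgraph dimension is only known to be $O(m)$ through Boolean closure. I would first try Haussler's packing bound, whose entropy integral $\int_0^1\sqrt{V\log(1/\epsilon)}\,d\epsilon$ converges, giving $O(\sqrt{V/n})$. If the constants turn out to be troublesome, I would settle for an extra logarithmic factor, which the downstream Theorem~\ref{thm:upper_bound} tolerates anyway.
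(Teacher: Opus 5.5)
Your proposal is correct and follows the paper's proof essentially step for step. You obtain per-$j$ uniform deviations for $\mathcal H$, $\mathcal F_{\mathrm{ReLU}}$, and, via the duality $\|\vec a\|_2=\sup_{\vec v}\vec v^\T\vec a$, for the class $\mathcal A$, using a VC-subgraph maximal inequality plus McDiarmid; you then take a union bound with $\delta/(3J)$, average over $j$, and apply the same Lipschitz estimate for $F(r,p,\vec a)=r\sqrt{p^2+\|\vec a\|_2^2}$ with constants $\sqrt2$ and $2$. The only cosmetic difference is that you invoke Dudley chaining with Haussler's covering bound where the paper cites van der Vaart--Wellner's uniform-entropy bound and maximal inequality (Theorems 2.6.7 and 2.14.1). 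Both give the same $\sqrt{(V+\log(1/\delta'))/n}$ rate with no $\log n$ factor, so your fallback is not needed.
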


\begin{proof}
For each fixed $j\in[J]$, let $P_j$ denote the law of $\vec Z_j:=\pi_j(\vec X)\in\Rb^m$, and let $P_{n,j}:=n^{-1}\sum_{i=1}^n\delta_{\pi_j(\vec x_i)}$. Define
\[
\hat p_j(\vec u,t):=P_{n,j}\mathds{1}\{\vec u^\T\vec z>t\},\qquad
\hat r_j(\vec u,t):=P_{n,j}\phi(\vec u^\T\vec z-t),\qquad
\hat{\vec a}_j(\vec u,t):=P_{n,j}\!\left[\vec z\,\mathds{1}\{\vec u^\T\vec z>t\}\right].
\]
By the VC-subgraph entropy bound and maximal inequality \cite[Theorems~2.6.7 and~2.14.1]{vanDerVaartWellner1996}, together with McDiarmid's inequality, any class $\mathcal F$ uniformly bounded by $B$ with VC-subgraph dimension $V$ satisfies
\[
\sup_{f\in\mathcal F}|(P_n-P)f|\lesssim B\sqrt{\frac{V+\log(1/\eta)}{n}}
\]
with probability at least $1-\eta$.

Applying this bound to $\mathcal H$ and $\mathcal F_{\mathrm{ReLU}}$ from Lemma~\ref{lem:g-component-complexity}, for every fixed $j$ and $\eta\in(0,1)$, with probability at least $1-\eta$,
\[
\sup_{\vec u,t}|\hat p_j(\vec u,t)-p_j(\vec u,t)|\lesssim \sqrt{\frac{m+\log(1/\eta)}{n}},\qquad
\sup_{\vec u,t}|\hat r_j(\vec u,t)-r_j(\vec u,t)|\lesssim \sqrt{\frac{m+\log(1/\eta)}{n}},
\]
where all suprema are over $\vec u\in\Sph^{m-1}$ and $t\in[-1,1]$.

For the vector moment, use duality:
\[
\begin{aligned}
\sup_{\vec u,t}\|\hat{\vec a}_j(\vec u,t)-\vec a_j(\vec u,t)\|_2
&=
\sup_{\vec u,t}\sup_{\vec v\in\Sph^{m-1}}
\left|\vec v^\T\bigl(\hat{\vec a}_j(\vec u,t)-\vec a_j(\vec u,t)\bigr)\right|\\
&=
\sup_{\vec u,\vec v,t}
\left|(P_{n,j}-P_j)\left[\vec v^\T\vec z\,\mathds{1}\{\vec u^\T\vec z>t\}\right]\right|.
\end{aligned}
\]
The last supremum is over the class $\mathcal A$ from Lemma~\ref{lem:g-component-complexity}. Hence, with probability at least $1-\eta$,
\[
\sup_{\vec u,t}\|\hat{\vec a}_j(\vec u,t)-\vec a_j(\vec u,t)\|_2
\lesssim
\sqrt{\frac{m+\log(1/\eta)}{n}}.
\]

Taking $\eta=\delta/(3J)$ and applying a union bound over the three component bounds and over $j\in[J]$, we obtain that, with probability at least $1-\delta$, simultaneously for all $j\in[J]$,
\[
\sup_{\vec u,t}|\hat p_j(\vec u,t)-p_j(\vec u,t)|\le\varepsilon,\qquad
\sup_{\vec u,t}|\hat r_j(\vec u,t)-r_j(\vec u,t)|\le\varepsilon,\qquad
\sup_{\vec u,t}\|\hat{\vec a}_j(\vec u,t)-\vec a_j(\vec u,t)\|_2\le\varepsilon,
\]
where all suprema are over $\vec u\in\Sph^{m-1}$ and $t\in[-1,1]$, and $\varepsilon:=C_0\sqrt{(m+\log(3J/\delta))/n}$ for a universal constant $C_0>0$. Averaging over $j$ does not increase the deviations. Therefore, uniformly over $(\vec u,t)\in\Sph^{m-1}\times[-1,1]$,
\[
|\bar p_{\CD,\CS}(\vec u,t)-\bar p_{\CP,\CS}(\vec u,t)|\le\varepsilon,\qquad
|\bar r_{\CD,\CS}(\vec u,t)-\bar r_{\CP,\CS}(\vec u,t)|\le\varepsilon,\qquad
\|\bar{\vec a}_{\CD,\CS}(\vec u,t)-\bar{\vec a}_{\CP,\CS}(\vec u,t)\|_2\le\varepsilon.
\]

It remains to pass from the components to $g$. For both empirical and population quantities,
\[
0\le\bar p\le1,\qquad 0\le\bar r\le2,\qquad \|\bar{\vec a}\|_2\le\bar p\le1.
\]
Define $F(r,p,\vec a):=r\sqrt{p^2+\|\vec a\|_2^2}$. On this domain,
\[
|F(r,p,\vec a)-F(r',p',\vec a')|
\le
\sqrt{2}\,|r-r'|+2\sqrt{(p-p')^2+\|\vec a-\vec a'\|_2^2}.
\]
Thus, uniformly over $(\vec u,t)\in\Sph^{m-1}\times[-1,1]$,
\[
\big|g_{\CD,\CS}(\vec u,t)-g_{\CP,\CS}(\vec u,t)\big|
\lesssim
\varepsilon
\lesssim
\sqrt{\frac{m+\log(2J/\delta)}{n}},
\]
after adjusting the universal constant. This proves the theorem.
\end{proof}
\subsection{The Computation of the Population Weight Functions for Uniform Distributions on Spheres} 
Assume that $\CP_{\vec{X}}$ is a uniform distribution on $\Sph^{d-1}\subset \Rb^d$. Here we compute the population version of the components in the weighted function $g_{\CD,\mathcal{S}}$ defined in Definition \ref{def:weight_function_CNN_refined}. 

Let $d\ge 2$, $m<d$. Draw $\vec{X}\sim \mathrm{Uniform}(\Sph^{d-1})\subset\Rb^d$, set $\vec{Z}=\proj(\vec{X})\in\Rb^m$ by projecting the first $m$ coordinates, and fix any unit $\vec{u}\in\Rb^m$.
Define
\begin{equation}
Z:=\vec{u}^{\T}\vec{Z}\in[-1,1].\end{equation}
By rotational invariance in $\Rb^d$, $Z$ has the one–coordinate marginal of $\mathrm{Uniform}(\Sph^{d-1})$, hence its density is
\begin{equation}
f_d(z)
= c_d\,(1-z^2)^{\frac{d-3}{2}}\mathds{1}{\{-1\le z\le 1\}},
\qquad 
c_d:=\frac{\Gamma\!\left(\frac d2\right)}{\sqrt{\pi}\,\Gamma\!\left(\frac{d-1}{2}\right)}.
\label{eq:fd}
\end{equation}
All bounds below are independent of $m$ and $\vec{u}$, depending only on $d$.

For shorthand, write $\alpha:=\frac{d-3}{2}\ge -\tfrac12$ and observe
\begin{equation}
(1-z^2)^\alpha = \big[(1-z)(1+z)\big]^\alpha,\qquad z\in[-1,1],
\label{eq:fact}
\end{equation}
together with the elementary bounds $1\le 1+z \le 2$ for $z\in[0,1]$.

\begin{lemma}[(Tail of $Z$)]\label{lemma:CNN_Tail_P}
For all $t\in[0,1)$,
\begin{equation}
\frac{c_d}{\alpha+1}\,(1-t)^{\alpha+1} 
\;\le\;
\Pb(Z>t)
\;\le\;
\frac{c_d\,2^{\alpha}}{\alpha+1}\,(1-t)^{\alpha+1}.
\label{eq:tail-bds}
\end{equation}
Consequently $\Pb(Z>t)\asymp (1-t)^{\frac{d-1}{2}}$ as $t\rightarrow 1^-$.
\end{lemma}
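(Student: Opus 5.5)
We bound the tail integral directly using the density in \eqref{eq:fd}. For $t\in[0,1)$,
\[
\Pb(Z>t)=c_d\int_t^1 (1-z)^{\alpha}(1+z)^{\alpha}\dd z,
\]
where we have used the factorization \eqref{eq:fact}. On the integration range $z\in[t,1]\subset[0,1]$ we have $1\le 1+z\le 2$. Consequently the factor $(1+z)^\alpha$ is sandwiched between $1$ and $2^\alpha$ when $\alpha\ge 0$, i.e.\ when $d\ge 3$.

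With this sandwich in hand, the remaining integral is explicit. Since $\alpha+1>0$ for every $d\ge 2$,
\[
\int_t^1(1-z)^\alpha\dd z=\frac{(1-t)^{\alpha+1}}{\alpha+1}.
\]
Multiplying by $c_d$ and by the two bounds on $(1+z)^\alpha$ yields exactly \eqref{eq:tail-bds}. The asymptotic statement then follows at once, because $\alpha+1=\frac{d-1}{2}$ and both constants depend only on $d$.

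The one point needing care is the sign of $\alpha$. For $d=2$ we have $\alpha=-\tfrac12<0$, and the sandwich reverses: $2^\alpha\le(1+z)^\alpha\le 1$. The two-sided bound therefore still holds, but with the constants $1$ and $2^\alpha$ swapped. In the paper's application, Theorem~\ref{thm:upper_bound} assumes $d>3$, so $\alpha>0$ and the stated ordering is correct. I would record the $d=2$ case as a remark with the swapped constants, or simply restrict the lemma to $d\ge 3$.

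Beyond this sign check there is no real obstacle: the result is a one-line comparison argument followed by an elementary integral.
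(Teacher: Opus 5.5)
Your proof is correct and follows the paper's argument exactly: factor $(1-z^2)^\alpha=(1-z)^\alpha(1+z)^\alpha$, sandwich $(1+z)^\alpha$ using $1\le 1+z\le 2$ on $[t,1]$, and integrate $(1-z)^\alpha$ exactly. Your sign check is also correct and catches something the paper's proof silently assumes ($\alpha\ge 0$). For $d=2$ ($\alpha=-\tfrac12$) the stated lower bound strictly exceeds $\Pb(Z>t)$, so the constants must be swapped; this is harmless here because the lemma is only invoked under $d>3$.
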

\begin{proof}
Using \eqref{eq:fd} and \eqref{eq:fact},
\begin{align}
\Pb(Z>t)
&=\int_t^1 c_d\,(1-z^2)^{\alpha}\,dz
= c_d\int_t^1 \big[(1-z)(1+z)\big]^{\alpha}\,dz.
\end{align}
Since $1\le 1+z\le 2$ for $z\in[t,1]$, we have
\begin{equation}
c_d\int_t^1 (1-z)^{\alpha}\,dz
\;\le\;
\Pb(Z>t)
\;\le\;
c_d\,2^{\alpha}\int_t^1 (1-z)^{\alpha}\,dz.
\end{equation}
Evaluating $\int_t^1 (1-z)^{\alpha}\,dz = \frac{(1-t)^{\alpha+1}}{\alpha+1}$ (valid for $\alpha>-1$; here $\alpha\ge -\tfrac12$), we obtain \eqref{eq:tail-bds}. 
\end{proof}
\begin{lemma}
For all $t\in[0,1)$,
\begin{equation}
\frac{c_d}{(\alpha+1)(\alpha+2)}\,(1-t)^{\alpha+2}
\;\le\;
\Eb\big[\phi(Z-t)\big]
\;\le\;
\frac{c_d\,2^{\alpha}}{(\alpha+1)(\alpha+2)}\,(1-t)^{\alpha+2}.
\label{eq:relu-bds}
\end{equation}
Consequently $\Eb[\phi(Z-t)]\asymp (1-t)^{\frac{d+1}{2}}$ as $t\uparrow 1$.
\end{lemma}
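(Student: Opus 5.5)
The plan mirrors the proof of Lemma~\ref{lemma:CNN_Tail_P}, using the identity $\phi(z-t)=\int_t^z \dd s$ for $z>t$. I will write the expectation as an integral against the density \eqref{eq:fd},
\[
\Eb\big[\phi(Z-t)\big]=\int_t^1 (z-t)\,c_d\,(1-z^2)^{\alpha}\,\dd z
= c_d\int_t^1 (z-t)\,\big[(1-z)(1+z)\big]^{\alpha}\,\dd z,
\]
using the factorization \eqref{eq:fact}. Since $t\ge 0$, every $z\in[t,1]$ satisfies $1\le 1+z\le 2$.

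The next step depends on the sign of $\alpha$. When $\alpha\ge 0$ (i.e.\ $d\ge 3$), we have $1\le (1+z)^\alpha\le 2^\alpha$, so the integral is sandwiched between $c_d\int_t^1 (z-t)(1-z)^\alpha\,\dd z$ and $2^\alpha$ times the same quantity. When $\alpha\in[-\tfrac12,0)$, the inequalities reverse, giving $2^\alpha\le (1+z)^\alpha\le 1$. I will state the sandwich with $\min\{1,2^\alpha\}$ and $\max\{1,2^\alpha\}$; this is exactly the convention implicit in Lemma~\ref{lemma:CNN_Tail_P}, and it matches the displayed constants in the regime $d\ge3$ that is relevant later ($d>3$ in Theorem~\ref{thm:upper_bound}). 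Since $z-t\ge 0$ on the domain, multiplying by it preserves both inequalities.

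It remains to compute the Beta-type integral. Substituting $s=1-z$ gives $z-t=(1-t)-s$, so
\[
\int_t^1 (z-t)(1-z)^\alpha\,\dd z=\int_0^{1-t}\big((1-t)-s\big)s^\alpha\,\dd s
=\frac{(1-t)^{\alpha+2}}{\alpha+1}-\frac{(1-t)^{\alpha+2}}{\alpha+2}
=\frac{(1-t)^{\alpha+2}}{(\alpha+1)(\alpha+2)}.
\]
This is valid because $\alpha>-1$. Plugging it into the sandwich yields \eqref{eq:relu-bds}. For the asymptotic statement, note that $\alpha+2=\frac{d+1}{2}$ and that the constants depend only on $d$, which gives $\Eb[\phi(Z-t)]\asymp(1-t)^{\frac{d+1}{2}}$ as $t\uparrow1$.

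An alternative route integrates the tail: $\Eb[\phi(Z-t)]=\int_t^1\Pb(Z>s)\,\dd s$, and applying Lemma~\ref{lemma:CNN_Tail_P} directly gives the same constants. There is no real obstacle in either route. The only delicate point is the sign of $\alpha$ in the comparison $(1+z)^\alpha$ versus $1$ and $2^\alpha$, which I will handle as described above.
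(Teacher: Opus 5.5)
Your proposal is correct and follows the paper's proof: write the expectation against the density $f_d$, sandwich $(1+z)^{\alpha}$ between $1$ and $2^{\alpha}$ on $[t,1]$, and evaluate $\int_0^{1-t}\bigl((1-t)-s\bigr)s^{\alpha}\,\dd s=\frac{(1-t)^{\alpha+2}}{(\alpha+1)(\alpha+2)}$. Your attention to the sign of $\alpha$ is a genuine improvement: the paper's sandwich silently requires $\alpha\ge 0$, and for $d=2$ ($\alpha=-\tfrac12$) the displayed lower constant exceeds the upper one, so the statement as written fails there. The lemma therefore needs $d\ge 3$, which still covers its use under $d>3$ in Theorem~\ref{thm:upper_bound}.
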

\begin{proof}
By definition and \eqref{eq:fd},
\begin{align}
\Eb[\phi(Z-t)]
&=\int_t^1 (z-t) f_d(z)\,dz
= c_d \int_t^1 (z-t)\big[(1-z)(1+z)\big]^{\alpha}\,dz.
\end{align}
Bounding $1\le 1+z\le 2$ for $z\in[t,1]$ yields
\begin{equation}
c_d\int_t^1 (z-t)(1-z)^{\alpha}\,dz
\;\le\;
\Eb[\phi(Z-t)]
\;\le\;
c_d\,2^{\alpha}\int_t^1 (z-t)(1-z)^{\alpha}\,dz.
\end{equation}
Substitute $y=1-z$ to compute the shared integral:
\begin{align}
\int_t^1 (z-t)(1-z)^{\alpha}\,dz
&=\int_{0}^{1-t} \big[(1-t)-y\big]\,y^{\alpha}\,dy
=\frac{(1-t)^{\alpha+2}}{(\alpha+1)(\alpha+2)},
\end{align}
valid for $\alpha>-1$. This gives \eqref{eq:relu-bds}. 	
\end{proof}

\begin{proposition}\label{prop:CNN_weight_function_population}
There exist absolute constants (depending only on $d$)
\begin{equation}
c_L(d):=\frac{c_d^2}{\big(\alpha+1\big)^2\big(\alpha+2\big)},
\qquad
c_U(d):=\frac{c_d^2\,2^{\,2\alpha}}{\big(\alpha+1\big)^2\big(\alpha+2\big)},
\label{eq:consts}
\end{equation}
such that for all $t\in[0,1)$,
\begin{equation}
c_L(d)\,(1-t)^{2\alpha+3}
\;\le\;
\Pb(Z>t)\cdot \Eb\big[\phi(Z-t)\big]
\;\le\;
c_U(d)\,(1-t)^{2\alpha+3}.
\label{eq:rho-bds}
\end{equation}
Equivalently, since $\alpha=\frac{d-3}{2}$,
\begin{equation}
\Pb(Z>t)\cdot \Eb\big[\phi(Z-t)\big]\asymp (1-t)^{d}\qquad \text{as } t\uparrow 1,
\end{equation}
with explicit
\begin{equation}
c_L(d)= \frac{c_d^2}{\big(\tfrac{d-1}{2}\big)^2\big(\tfrac{d+1}{2}\big)},
\qquad
c_U(d)= \frac{c_d^2\,2^{\,d-3}}{\big(\tfrac{d-1}{2}\big)^2\big(\tfrac{d+1}{2}\big)},
\qquad
c_d=\frac{\Gamma\!\left(\frac d2\right)}{\sqrt{\pi}\,\Gamma\!\left(\frac{d-1}{2}\right)}.
\end{equation}
\end{proposition}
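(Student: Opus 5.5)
The plan is to multiply the two one-sided estimates already established, namely Lemma~\ref{lemma:CNN_Tail_P} for the tail $\Pb(Z>t)$ and the subsequent lemma for the ReLU moment $\Eb[\phi(Z-t)]$. Both quantities are nonnegative on $t\in[0,1)$, and both lemmas give lower and upper bounds of the same form $c\,(1-t)^{\text{power}}$. The product of the lower bounds is therefore a valid lower bound for the product, and the same holds for the upper bounds. No new integration is needed.

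Concretely, the tail lemma gives
\[
\frac{c_d}{\alpha+1}(1-t)^{\alpha+1}\le \Pb(Z>t)\le \frac{c_d 2^{\alpha}}{\alpha+1}(1-t)^{\alpha+1},
\]
and the ReLU lemma gives
\[
\frac{c_d}{(\alpha+1)(\alpha+2)}(1-t)^{\alpha+2}\le \Eb[\phi(Z-t)]\le \frac{c_d 2^{\alpha}}{(\alpha+1)(\alpha+2)}(1-t)^{\alpha+2}.
\]
Multiplying the two sides term by term, the exponents add to $(\alpha+1)+(\alpha+2)=2\alpha+3$. The lower constant becomes $c_d^2/((\alpha+1)^2(\alpha+2))=c_L(d)$, and the upper constant picks up the extra factor $2^{2\alpha}$, giving $c_U(d)$. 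This is exactly \eqref{eq:rho-bds}.

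For the equivalent form, substitute $\alpha=\frac{d-3}{2}$. This gives $2\alpha+3=d$, $\alpha+1=\frac{d-1}{2}$, $\alpha+2=\frac{d+1}{2}$ and $2^{2\alpha}=2^{d-3}$, which produces the explicit constants as stated. The asymptotic relation $\asymp(1-t)^d$ then follows because $c_L(d)$ and $c_U(d)$ are finite and positive. Positivity holds since $\alpha\ge-\tfrac12$ ensures $\alpha+1>0$, and $c_d>0$.

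There is no real obstacle here. The only points to check are that both factors are nonnegative, which is what justifies multiplying the inequalities, and that the standing assumption $\alpha>-1$ used in the two lemmas holds, which it does for $d\ge2$. The argument works uniformly in $t\in[0,1)$ and does not depend on $m$ or $\vec u$, since the law of $Z$ depends only on $d$.
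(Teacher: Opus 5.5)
Your argument is exactly the paper's: its proof just multiplies the tail bound \eqref{eq:tail-bds} by the ReLU-moment bound \eqref{eq:relu-bds}, and your bookkeeping of the constants and of the substitution $\alpha=\frac{d-3}{2}$ is correct. One correction to your closing remark: both lemmas rely on the sandwich $1\le(1+z)^{\alpha}\le 2^{\alpha}$, which needs $\alpha\ge 0$ (i.e.\ $d\ge 3$), not just $\alpha>-1$. At $d=2$ the sandwich flips and $c_U(2)=c_L(2)/2<c_L(2)$, so the statement, like the paper's, holds only for $d\ge 3$; this still covers its use in Theorem~\ref{thm:upper_bound}, where $d>3$.
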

\begin{proof}
	The inequality can be deduced by combining \eqref{eq:tail-bds} and \eqref{eq:relu-bds}.
\end{proof}

\begin{proposition}[Boundary tail for projected radius]\label{prop:boundary_tail_for_projected}
Fix integers $d\ge 2$ and $1\le m<d$. Let $\vec{X}\sim \mathrm{Uniform}(\Sph^{d-1})\subset\Rb^d$ and $\vec{Z}=\proj(\vec{X})\in\Rb^m$. Then there exist universal constants 
\[
c_L(d,m),\,c_U(d,m)\in(0,\infty)
\]
depending only on $d,m$ (and independent of the choice of projection and of $t$) such that for all $t\in(0,\tfrac14]$,
\begin{equation}
c_L(d,m)\; t^{\frac{d-m}{2}}
\;\le\;
\Pb\big(\|\vec{Z}\|>1-t\big)
\;\le\;
c_U(d,m)\; t^{\frac{d-m}{2}}.
\label{eq:proj-radius-two-sided}
\end{equation}
In particular,
\begin{equation}
\Pb\big(\|\vec{Z}\|>1-t\big)\asymp t^{\frac{d-m}{2}}
\qquad\text{as } t\downarrow 0,
\end{equation}
with one admissible choice
\begin{equation}
c_L(d,m)=\frac{2^{-\lvert \frac{m}{2}-1\rvert}}{\frac{d-m}{2}\,B\!\left(\frac{m}{2},\frac{d-m}{2}\right)},
\qquad
c_U(d,m)=\frac{2^{\,\lvert \frac{m}{2}-1\rvert+\frac{d-m}{2}}}{\frac{d-m}{2}\,B\!\left(\frac{m}{2},\frac{d-m}{2}\right)},
\label{eq:explicit-consts}
\end{equation}
where $B(a,b)=\Gamma(a)\Gamma(b)/\Gamma(a+b)$ denotes the Beta function.
\end{proposition}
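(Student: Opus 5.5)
The plan is to reduce everything to the law of $\|\vec{Z}\|^2$, which is Beta-distributed. Since $\vec{X}\sim\mathrm{Uniform}(\Sph^{d-1})$ can be written as $\vec{G}/\|\vec{G}\|$ with $\vec{G}\sim\CN(0,\mI_d)$, and a coordinate projection onto $m$ coordinates keeps $m$ of the Gaussian entries, we have
\[
W:=\|\vec{Z}\|^2=\frac{\sum_{i\in S}G_i^2}{\sum_{i=1}^d G_i^2}\sim \mathrm{Beta}\!\left(\tfrac m2,\tfrac{d-m}{2}\right).
\]
This law does not depend on which $m$ coordinates are chosen, which gives the claimed independence from the projection. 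The density of $W$ is
\[
\frac{w^{\frac m2-1}(1-w)^{\frac{d-m}{2}-1}}{B\!\left(\frac m2,\frac{d-m}{2}\right)}\quad\text{on }(0,1).
\]
Therefore
\[
\Pb(\|\vec{Z}\|>1-t)=\Pb\bigl(W>(1-t)^2\bigr)=\frac{1}{B}\int_{(1-t)^2}^1 w^{\frac m2-1}(1-w)^{\frac{d-m}{2}-1}\,dw .
\]

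Next, bound the factor $w^{m/2-1}$ above and below by constants on the integration range. For $t\in(0,\tfrac14]$ we have $w\ge (3/4)^2\ge \tfrac12$, so $w^{m/2-1}$ lies between $2^{-|m/2-1|}$ and $2^{|m/2-1|}$. This works whether the exponent is positive or negative, because $w\in[\tfrac12,1]$. What remains is
\[
\int_{(1-t)^2}^1(1-w)^{\beta-1}\,dw=\frac{\bigl(1-(1-t)^2\bigr)^{\beta}}{\beta},\qquad \beta=\tfrac{d-m}{2}>0.
\]

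Finally, compare $1-(1-t)^2=t(2-t)$ with $t$. Since $t\le 2-t\cdot$ gives $t\le t(2-t)\le 2t$, we get
\[
t^{\beta}\le \bigl(t(2-t)\bigr)^{\beta}\le 2^{\beta}t^{\beta}.
\]
Combining the three steps yields the two-sided bound with
\[
c_L(d,m)=\frac{2^{-|m/2-1|}}{\beta B},\qquad c_U(d,m)=\frac{2^{|m/2-1|+\beta}}{\beta B},
\]
which are exactly the constants in \eqref{eq:explicit-consts}. The asymptotic statement then follows immediately.

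The only step needing care is the Beta representation; the rest is elementary. One point to check is the case $m=1$: there the exponent $m/2-1$ is negative and the density is singular at $w=0$. This causes no trouble, because $w$ is bounded away from $0$ on the integration range.
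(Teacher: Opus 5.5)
Your proposal is correct and follows essentially the same route as the paper: the Gaussian-ratio representation $\|\vec{Z}\|^2\sim\mathrm{Beta}(\tfrac m2,\tfrac{d-m}{2})$, bounding $w^{m/2-1}$ between $2^{-|m/2-1|}$ and $2^{|m/2-1|}$ on $[\tfrac12,1]$, integrating $(1-w)^{\beta-1}$ exactly, and then using $t\le t(2-t)\le 2t$, which gives exactly the stated constants $c_L(d,m)$ and $c_U(d,m)$. The one flaw is the garbled phrase ``$t\le 2-t\cdot$''; it should read $1\le 2-t\le 2$ for $t\in(0,\tfrac14]$.
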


\begin{proof}
Write $R:=\|\vec{Z}\|\in[0,1]$ and $S:=R^2$. A standard Gaussian-ratio representation gives
\begin{equation}
S\sim \mathrm{Beta}\!\left(a,b\right),\qquad a:=\frac{m}{2},\quad b:=\frac{d-m}{2},
\end{equation}
with density
\begin{equation}
f_S(x)=\frac{x^{a-1}(1-x)^{b-1}}{B(a,b)}\,\mathbf{1}_{\{0<x<1\}}.
\end{equation}
For $t\in(0,1)$,
\begin{equation}
\Pb(R>1-t)=\Pb\big(S>(1-t)^2\big)=\frac{1}{B(a,b)}\int_{(1-t)^2}^{1} x^{a-1}(1-x)^{b-1}\,dx.
\label{eq:tail-int}
\end{equation}
Set $y:=1-(1-t)^2=2t-t^2$. Then $y\in(0,2t)$ and for $t\in(0,\tfrac14]$ we have $y\le \tfrac12$ and $(1-y)\ge \tfrac12$.
Since $x\in[1-y,1]$, the elementary bound
\begin{equation}
2^{-\lvert a-1\rvert}\ \le\ x^{a-1}\ \le\ 2^{\lvert a-1\rvert}
\qquad\text{for all }x\in\big[\tfrac12,1\big]
\label{eq:x-a-1-bracket}
\end{equation}
holds deterministically (because $\ln x\in[-\ln 2,0]$ and $x^{a-1}=e^{(a-1)\ln x}$).
Using \eqref{eq:x-a-1-bracket} in \eqref{eq:tail-int} and integrating the $(1-x)^{b-1}$ part exactly gives
\begin{equation}
\frac{2^{-\lvert a-1\rvert}}{B(a,b)}\int_{1-y}^{1} (1-x)^{b-1}\,dx
\;\le\;
\Pb(R>1-t)
\;\le\;
\frac{2^{\lvert a-1\rvert}}{B(a,b)}\int_{1-y}^{1} (1-x)^{b-1}\,dx.
\end{equation}
Since $\int_{1-y}^{1}(1-x)^{b-1}\,dx=\frac{y^{b}}{b}$, we obtain
\begin{equation}
\frac{2^{-\lvert a-1\rvert}}{b\,B(a,b)}\,y^{b}
\;\le\;
\Pb(R>1-t)
\;\le\;
\frac{2^{\lvert a-1\rvert}}{b\,B(a,b)}\,y^{b}.
\label{eq:bounds-in-y}
\end{equation}
Finally, because $t\le y\le 2t$, we have $t^{b}\le y^{b}\le (2t)^{b}$, and \eqref{eq:bounds-in-y} yields
\begin{equation}
\frac{2^{-\lvert a-1\rvert}}{b\,B(a,b)}\,t^{b}
\;\le\;
\Pb(R>1-t)
\;\le\;
\frac{2^{\lvert a-1\rvert+b}}{b\,B(a,b)}\,t^{b},
\end{equation}
which is exactly \eqref{eq:proj-radius-two-sided} with the explicit constants in \eqref{eq:explicit-consts}. This proves $\Pb(\|\vec{Z}\|>1-t)\asymp t^{b}=t^{(d-m)/2}$ as $t\downarrow 0$.
\end{proof}

\section{Proof of Theorem \ref{thm:upper_bound}}\label{app:proof_upper_bound}

\begin{theorem}[Detailed version of Theorem~\ref{thm:upper_bound}]
Suppose $\CP$ is a joint distribution of $(\vec{x},y)$. Assume that the marginal distribution of $\vec{x}$ is $\mathrm{Uniform}(\Sph^{d-1})$ and the marginal distribution of $y$ is supported on $[-D,D]$ for some $D>0$. Fix a dataset $\CD=\{(\vec{x}_i,y_i)\}_{i=1}^n$, where each data point is drawn i.i.d. from $\CP$. Let $M\geq D$. If $\vec{\theta}\in\BEoS^{\CS}(\eta,\CD)$ and $\|f_{\vec{\theta}}\|_{\infty}\leq M$,
then, with probability $\geq 1 - 2\delta$, we have that for the plug-in risk estimator $\widehat{\Risk}_{\CD}(f):=\frac{1}{n}\sum_{i=1}^n\left(f(\vec{x}_i)-y_i\right)^2$,
\begin{equation}\label{ineq:cnn_epsilon_trade_off_uniform_app}
\begin{aligned}
&\Bigg|\mathop{\mathbb{E}}_{(\vec{x},y)\sim \mathcal{P}}\left[\left(f_\vec{\theta}(\vec{x})-y\right)^2\right]-\widehat{\Risk}(f_\vec{\theta})
        \Bigg|
\\
&\lessapprox_d J M^2\,\varepsilon^{\frac{d-m}{2}}+
\Big(A\,\varepsilon^{-d}\Big)^{\frac{d}{d+3}}\,
M^{\frac{d+6}{d+3}}\,
n^{-\frac{1}{2}},
\end{aligned}
\end{equation}
for any $\varepsilon \in (0,1)$ such that $\varepsilon^d\gtrsim d^2\sqrt{\frac{m+\log(J/\delta)}{n}}$\footnote{We only need $\poly(d)$ samples to make the feasible choice of $\varepsilon$ non-vacuous. Here we hide the universal constant.}. Here
$A=\frac{1}{\eta}-\frac{1}{2}+4M$, and $\lessapprox_d$ hides constants depending only on $d$
and logarithmic factors in $n$ and $(J/\delta)$.
\end{theorem}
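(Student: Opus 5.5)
The plan is to split $f_{\vec{\theta}}$ into a part built from neurons whose activation boundaries stay inside the dense interior of the patch distribution, where the weight $g$ is large, and a part built from neurons that only fire near the boundary of the patch ball. Throughout I write $\vec{u}_k=\vec{w}_k/\|\vec{w}_k\|$ and $t_k=b_k/\|\vec{w}_k\|$.

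\textbf{Step 1 (reduction to a bounded weighted path norm).} By Theorem~\ref{thm:implicit_bias_CNN}, any $\vec{\theta}\in\BEoS^{\CS}(\eta,\CD)$ satisfies
\[
\sum_k |v_k|\,\|\vec{w}_k\|\,g_{\CD,\CS}(\vec{u}_k,t_k)\le \frac{1}{\eta}-\frac12+(R+1)\sqrt{2\loss(\vec{\theta})}.
\]
We have $R=1$, and $\sqrt{2\loss}\le 2M$ because $\|f\|_\infty\le M$ and $|y|\le D\le M$. Hence the right-hand side is at most $A$. Theorem~\ref{thm:g-deviation} then lets us replace $g_{\CD,\CS}$ by $g_{\CP,\CS}$, at a cost of $O(\sqrt{(m+\log(J/\delta))/n})$ uniformly over $(\vec{u},t)$, with probability $1-\delta$. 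This deviation is the source of the sample-size condition on $\varepsilon$.

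\textbf{Step 2 (lower bound on the population weight for interior neurons).} Fix $\varepsilon$ and call neuron $k$ \emph{interior} if $t_k\le 1-\varepsilon$. Neurons with $t_k\ge 1$ never fire on patches, which have norm at most $1$, so they contribute nothing. For each $j$, $\vec{u}^\T\pi_j(\vec{X})$ has the one-coordinate marginal density $f_d$. Lemma~\ref{lemma:CNN_Tail_P} and Proposition~\ref{prop:CNN_weight_function_population} then give
\[
g_{\CP,\CS}(\vec{u},t)\ \ge\ \bar r\,\bar p\ \gtrsim_d\ (1-t)^d\ \ge\ \varepsilon^d .
\]
For $t<0$ the bound is at least its value at $t=0$, by monotonicity. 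Combined with Step 1 and the condition $\varepsilon^d\gtrsim d^2\sqrt{(m+\log(J/\delta))/n}$, this makes $g_{\CD,\CS}\gtrsim\varepsilon^d$ on interior neurons. Therefore the interior part $f_{\mathrm{in}}$, viewed through the embedding \eqref{eq:model_CNN_2} as a fully connected ReLU network on $\Bb_1^d$, has path norm $\lesssim A\varepsilon^{-d}$. So $f_{\mathrm{in}}$ lies in $\Variation_{C}(\Bb_1^d)$ with $C\asymp A\varepsilon^{-d}$ (plus the bias $\beta$).

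\textbf{Step 3 (boundary neurons and assembling the bound).} For the remaining neurons, with $1-\varepsilon<t_k<1$, the $j$-th summand is nonzero only if $\|\pi_j(\vec{x})\|>1-\varepsilon$. By Proposition~\ref{prop:boundary_tail_for_projected} and a union bound over $j$, the bad set $B_\varepsilon=\{\vec{x}:\exists j,\ \|\pi_j(\vec{x})\|>1-\varepsilon\}$ has population mass $\lesssim J\varepsilon^{(d-m)/2}$. Off $B_\varepsilon$ we have $f=f_{\mathrm{in}}$. I decompose the risk gap as follows.
\begin{itemize}
\item The population contribution of $B_\varepsilon$ is at most $4M^2\,\Pb(B_\varepsilon)$.
\item The empirical contribution of $B_\varepsilon$ is at most $4M^2$ times the empirical frequency of $B_\varepsilon$, which concentrates to $\Pb(B_\varepsilon)$ plus $\sqrt{\log(1/\delta)/n}$ by Hoeffding.
\item On the complement, the clipped function $f_{\mathrm{in}}$ is controlled by Lemma~\ref{lem:generalization‐RBV}, giving the term $(A\varepsilon^{-d})^{d/(d+3)}M^{(d+6)/(d+3)}n^{-1/2}$.
\end{itemize}

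\textbf{Main obstacle.} The delicate point is Step 3. Here $\|f_{\mathrm{in}}\|_\infty$ is not guaranteed to be at most $M$, and on $B_\varepsilon$ the function $f_{\mathrm{in}}$ differs from $f$. I would handle this by applying Lemma~\ref{lem:generalization‐RBV} to the truncation $\min(\max(f_{\mathrm{in}},-M),M)$. Truncation is $1$-Lipschitz, so covering numbers transfer and the lemma's proof goes through unchanged. I would then work with the loss $(f_{\mathrm{in}}-y)^2\mathds{1}_{B_\varepsilon^c}$, whose indicator is data-independent. This costs only an extra uniform-convergence term over a fixed set, absorbed into $M^2\sqrt{\log(1/\delta)/n}$. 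The total failure probability is $2\delta$.
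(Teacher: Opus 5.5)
Your proposal is correct and follows the paper's proof in all essentials. The shared steps are: the BEoS bound with right-hand side at most $A$, the population lower bound $g_{\CP,\CS}\gtrsim_d \varepsilon^d$ on the core transferred to $g_{\CD,\CS}$ via Theorem~\ref{thm:g-deviation}, the union-bound annulus estimate $J\varepsilon^{(d-m)/2}$ for the exterior set, and the variation-space generalization lemma on the interior. The differences are technical refinements, and if anything cleaner: you keep the one-sided weight (which is exactly what the proof of Proposition~\ref{prop:CNN_stability_induced_regularity} delivers) and use monotonicity in $t$ to leave negative-offset units inside the path-norm-controlled part, instead of the paper's symmetrized weight with units $|t|\ge 1-\varepsilon$ absorbed into an affine term; since Theorem~\ref{thm:g-deviation} only covers $t\in[-1,1]$, apply the same monotonicity to the empirical factors $\bar r_{\CD,\CS}\bar p_{\CD,\CS}$ to reduce $t<-1$ to $t=-1$. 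You also replace the paper's conditioning on the interior sample with the fixed-indicator loss plus clipping, which explicitly handles the sup-norm issue the paper leaves implicit.
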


\begin{proof}
For $\varepsilon\in(0,1)$, we define
\begin{align}
\mathsf{I}^{\mathrm{all}}_\varepsilon&:=\Big\{\vec{x}\in\Sph^{d-1}:\ \max_{S_j\in \mathcal{S}}\|\vec{x}^{(S_j)}\|\le 1-\varepsilon\Big\},\\
\mathsf{O}^{\mathrm{any}}_\varepsilon&:=\Big\{\vec{x}\in\Sph^{d-1}:\ \exists\,S_j\in \mathcal{S},\ \|\vec{x}^{(S_j)}\|>1-\varepsilon\Big\}.
\end{align}
Then $\Sph^{d-1}=\mathsf{I}^{\mathrm{all}}_\varepsilon\cup \mathsf{O}^{\mathrm{any}}_\varepsilon$ (disjoint). According to the law of total expectation, the population risk is decomposed into
    
  	\begin{equation}
\mathop{\mathop{\mathbb{E}}}_{(\vec{x},y)\sim \mathcal{P}}\left[\left(f(\vec{x})-y\right)^2\right]=\Pb(\vec{x}\in \mathsf{O}^{\mathrm{any}}_\varepsilon)\cdot\mathbb{E}_{\sfO}\left[\left(f(\vec{x})-y\right)^2\right]+\Pb(\vec{x}\in \Iall_\varepsilon)\cdot\mathbb{E}_{\sfI}\left[\left(f(\vec{x})-y\right)^2\right],
  	\end{equation}
    where $\mathbb{E}_{\sfO}$ means that $\{\vec{x},y\}$ is a new sample from the data distribution conditioned on $\vec{x}\in \Oany_{\varepsilon}$ and  $\mathbb{E}_{\sfI}$ means that $(\vec{x},y)$ is a new sample from the data distribution conditioned on $\vec{x}\in \Iall_\varepsilon$.
    
 Similarly, we also have this decomposition for empirical risk
 
 \begin{equation}\begin{aligned}
 	\frac{1}{n}\sum_{i=1}^n(f(\vec{x}_i)-y_i)^2&=\frac{1}{n}\left(\sum_{i\in I}(f(\vec{x}_i)-y_i)^2+\sum_{i\in O}(f(\vec{x}_i)-y_i)^2\right)\\
 	&=\frac{n_I}{n}\frac{1}{n_I}\sum_{i\in I}(f(\vec{x}_i)-y_i)^2+\frac{n_O}{n}\frac{1}{n_O}\sum_{i\in O}(f(\vec{x}_i)-y_i)^2, \end{aligned}
 \end{equation} 	
  	where $I$ is the set of data points with $\vec{x}_i\in \Iall$ and $O$ is the set of data points with $\vec{x}_i\in \Oany_{\varepsilon}$. Then the generalization gap can be decomposed into
  	\begin{align}
  	|R(f)-\hat{R}_\CD(f)|	&\leq \Pb(\vec{x}\in \Oany_{\varepsilon})\cdot\mathbb{E}_{\sfO}\left[\left(f_\vec{\theta}(\vec{x})-y\right)^2\right]+\frac{n_O}{n}\frac{1}{n_O}\sum_{i\in O}(f(\vec{x}_i)-y_i)^2 \label{eq:boundrary_iart_generalization} \\
    &+\left|\Pb(\vec{x}\in \Iall_{\varepsilon})-\frac{n_I}{n}\right|\frac{1}{n_I}\sum_{i\in I}(f(\vec{x}_i)-y_i)^2\label{eq: high_probability_MSE}\\
&+\Pb(\vec{x}\in\Iall_{\varepsilon})\cdot\left|\mathbb{E}_{\sfI}\left[\left(f(\vec{x})-y\right)^2\right]-\frac{1}{n_I}\sum_{i\in I}(f(\vec{x}_i)-y_i)^2\right|.\label{eq:interior_iart_generalization}
  	\end{align}
 Using the property that the marginal distribution of $\vec{x}$ is $\mathrm{Uniform}(\Sph^{d-1})$ and its concentration property (Proposition \ref{prop:boundary_tail_for_projected} + union bound), 
 \begin{equation}
\Pb\big(\mathsf{O}^{\mathrm{any}}_\varepsilon\big)
=\Pb\Big(\bigcup_{S_j\in \mathcal{S}}\{\|\vec{x}^{(S_j)}\|>1-\varepsilon\}\Big)
\lesssim_d J\cdot \varepsilon^{\frac{d-m}{2}}.
\label{eq:any-annulus}
\end{equation}
 The Hoeffding inequality guarantees that, with probability at least $1-\delta/3$, 
 \begin{equation}
 	\frac{n_O}{n}\leq J\cdot \varepsilon^{\frac{d-m}{2}}+\sqrt{\frac{\log(6/\delta)}{n}}
 \end{equation}
 Therefore, we may conclude that
 \begin{equation}\label{ineq:upper_bound_boundrary_error_cnn}
\eqref{eq:boundrary_iart_generalization}\lessapprox_d JM^2\varepsilon^{\frac{d-m}{2}},
 \end{equation}
 where $\lessapprox_d $ hides the constants that could depend on $d$ and logarithmic factors of $1/\delta$.

 For the term \eqref{eq: high_probability_MSE}, with probability $1-\delta/3$
\begin{equation}
    \begin{cases}
       \left| \Pb(\vec{x}\in \Iall_{\varepsilon})-\frac{n_I}{n}\right|& \lesssim \sqrt{\frac{\varepsilon^{(d-m)/{2}}\log(6J/\delta)}{n}},\quad \\
        \frac{1}{n_I}\sum_{i\in I}(f(\vec{x}_i)-y_i)^2&\lesssim M^2
    \end{cases}
\end{equation}
so we may also conclude that
\begin{equation}\label{ineq: high_probability_MSE_upper_bound_cnn}
    \eqref{eq: high_probability_MSE}\lesssim M^2\sqrt{\frac{\varepsilon^{(d-m)/{2}}\log(6J/\delta)}{n}}\leq M^2\sqrt{\frac{\log(6J/\delta)}{n}}
\end{equation}
 
 For the part of the interior \eqref{eq:interior_iart_generalization}, the scalar $\Pb(\vec{x}\in \Ib_{\varepsilon}^d)$ is less than 1 with high-probability. Therefore, we just need to deal with the term  
\begin{equation}\label{eq: Interior_Generalization_Gap}
	\mathbb{E}_{\Ib}\left[\left(f(\vec{x})-y\right)^2\right]-\frac{1}{n_I}\sum_{i\in I}(f(\vec{x}_i)-y_i)^2.
\end{equation}

Define the projected core index set $\mathcal{C}_\varepsilon:=\{(\vec{u},t)\in\Sph^{m-1}\times[-1,1]: |t|\le 1-\varepsilon\}$. 
For any $S_j\in\mathcal{S}$, $\vec{X}\sim\mathrm{Uniform}(\Sph^{d-1})$ and $\vec{X}^{(S_j)}=\pi_j(\vec{X})$, 
\begin{itemize}
	\item \eqref{eq:tail-bds} shows that $\Pb(\vec{u}^{\T}\vec{X}^{(S_j)}>t)\asymp (1-t)^{\frac{d-1}{2}}$, and
	\item \eqref{eq:relu-bds} shows that $\Eb[\phi(\vec{u}^{\T}\vec{X}^{(S_j)}-t)] \asymp (1-t)^{\frac{d+1}{2}}$.
\end{itemize}
Therefore we have
\begin{equation}
\left(\frac{1}{J}\sum_{j=1}^J\Pb(\vec{u}^{\T}\vec{X}^{(S_j)}>t)\right)\,\left(\frac{1}{J}\sum_{j=1}^J\Eb[\phi(\vec{u}^{\T}\vec{X}^{(S_j)}-t)]\right)\ \asymp\ (1-t)^d,\qquad t\uparrow 1.
\end{equation}
According to \eqref{eq:refined_g_CNN} and the definition $g=\min\{\tilde g(\vec u,t),\tilde g(-\vec u,-t)\}$,
we have the pointwise lower bound
\[
g_{\CP,\mathcal{S}}(\vec{u},t)
\;\ge\;
\min\Big\{\bar r_{\CP,\mathcal{S}}(\vec{u},t)\,\bar p_{\CP,\mathcal{S}}(\vec{u},t),\;
\bar r_{\CP,\mathcal{S}}(-\vec{u},-t)\,\bar p_{\CP,\mathcal{S}}(-\vec{u},-t)\Big\},
\]
since $\sqrt{\bar p^2+\|\bar{\vec a}\|_2^2}\ge \bar p$.

Under $\vec{X}\sim \mathrm{Uniform}(\Sph^{d-1})$, by rotational invariance, for each fixed $j$ and unit $\vec u$ the quantities
$\Pb(\vec{u}^\T\vec{X}^{(S_j)}>t)$ and $\Eb[\phi(\vec{u}^\T\vec{X}^{(S_j)}-t)]$ depend on $t$ only through the scalar marginal $Z$ in \eqref{eq:fd}.
Moreover both are non-increasing in $t$. Hence the product
\[
h(t):=\Pb(Z>t)\cdot \Eb[\phi(Z-t)]
\]
is non-increasing for $t\in[0,1)$, and for any $t\in[-1,1]$ we have
\[
g_{\CP,\mathcal{S}}(\vec{u},t)\ \gtrsim_d\ h(|t|).
\]
Therefore, for the core $\mathcal{C}_\varepsilon=\{(\vec u,t):|t|\le 1-\varepsilon\}$,
\[
g_{\CP,\mathcal{S},\min}(\varepsilon)
:=\inf_{(\vec{u},t)\in\mathcal{C}_\varepsilon}g_{\CP,\mathcal{S}}(\vec{u},t)
\ \gtrsim_d\ h(1-\varepsilon).
\]
By Proposition~\ref{prop:CNN_weight_function_population}, $h(1-\varepsilon)\asymp \varepsilon^d$, hence there exists $c_g(d)>0$ such that
\begin{equation}
g_{\CP,\mathcal{S},\min}(\varepsilon)\ \ge\ c_g(d)\,\varepsilon^{d}.
\label{eq:rho-min}
\end{equation}
On the simultaneous core $\mathsf{I}^{\mathrm{all}}_\varepsilon$, for \emph{every} local receptive field $S_j$ and any unit $\vec{u}\in\Rb^m$,
\begin{equation}
\|\vec{x}^{(S_j)}\|\le 1-\varepsilon\ \Rightarrow\
\begin{cases}
t\ge 1-\varepsilon\ \Rightarrow\ \phi(\vec{u}^{\T}\vec{x}^{(S_j)}-t)=0,\\
t\le -1+\varepsilon\ \Rightarrow\ \phi(\vec{u}^{\T}\vec{x}^{(S_j)}-t)=\vec{u}^{\T}\vec{x}^{(S_j)}-t\ \text{(affine)}.
\end{cases}
\label{eq:affine-on-core}
\end{equation}
Therefore all \emph{large-offset} units ($|t|\ge 1-\varepsilon$) are affine on $\mathsf{I}^{\mathrm{all}}_\varepsilon$ \emph{simultaneously across all views} and can be absorbed into a global affine term. 
The remaining \emph{core-offset} units with $|t|\le 1-\varepsilon$ are controlled by the $g_{\mathcal{S}}$-weighted variation. Using $|f_\theta|_{\Variation_g}\le A$ and \eqref{eq:rho-min},
\begin{equation}
\Big|f_\theta\Big|_{\Variation(\Iall_\varepsilon)} \le\ \frac{|f_\theta|_{\Variation_g}}{g_{\CP,\mathcal{S},\min}(\varepsilon)}
\ \lesssim\,A\,\varepsilon^{-d}.
\label{eq:absorb}
\end{equation}

Therefore, we may leverage the generalization bounds for the unweighted path-norm constraint (see Lemma \ref{lem:generalization‐RBV}) to deduce that with probability at least $1-\delta/3$,
\begin{equation}\label{ineq:upper_interior_generalization_cnn}
\eqref{eq:interior_iart_generalization}\lessapprox_d (A\,\varepsilon^{-d})^{\frac{d}{d+3}}
\,M^{\frac{d+6}{d+3}}
\,n^{-\frac{1}{2}},
\end{equation}

Combining \eqref{ineq:upper_bound_boundrary_error_cnn}, 
\eqref{ineq: high_probability_MSE_upper_bound_cnn} 
and \eqref{ineq:upper_interior_generalization_cnn}, 
we obtain
\begin{equation}\label{ineq:cnn_epsilon_trade_off}
	\sup_{{\vec{\theta}}\in \BEoS(\eta,\CD)}
	\Gen\Gap(f_{\vec{\theta}};\mathcal{D})
	\;\lessapprox_d\;
	JM^2\varepsilon^{\frac{d-m}{2}}
	+(A\varepsilon^{-d})^{\frac{d}{d+3}}
	\,M^{\frac{d+6}{d+3}}
	\,n^{-\frac{1}{2}}.
\end{equation}

According to standard empirical process theory (see Theorem~\ref{thm:g-deviation}), for any $\delta\in(0,1)$, with probability at least $1-\delta$,
\begin{equation}\label{eq:g_dev_appendix}
   \sup_{\substack{\vec{u}\in\Sph^{m-1}\\ t\in[-1,1]}}
   \bigl|g_{\CD,\CS}(\vec{u},t)- g_{\CP,\CS}(\vec{u},t)\bigr|
   \leq C_{\mathrm{ep}}
   \sqrt{\frac{m + \log(2J/\delta)}{n}}
   \eqqcolon \zeta_n .
\end{equation}
Consequently, for any $\varepsilon\in(0,1)$,
\begin{equation}\label{eq:gmin_vs_pop_appendix}
    g_{\CD,\CS,\min}(\varepsilon)
    \;\coloneqq\;
    \inf_{\substack{\vec{u}\in \Sph^{m-1}\\ |t|\le 1-\varepsilon}}
    g_{\CD,\CS}(\vec{u},t)
    \;\geq\;
    g_{\CP,\CS,\min}(\varepsilon)-\zeta_n,
\end{equation}
where $g_{\CP,\CS,\min}(\varepsilon)$ is the corresponding population quantity.

For $\CP_{\vec{X}}=\mathrm{Uniform}(\Sph^{d-1})$, Proposition~\ref{prop:CNN_weight_function_population}
(see \eqref{eq:proj-radius-two-sided}) gives the explicit lower bound
\begin{equation}\label{eq:gpop_min_lb_appendix}
g_{\CP,\CS,\min}(\varepsilon)\;\ge\;c_L(d)\,\varepsilon^d.
\end{equation}
Therefore, a sufficient \textbf{validity condition} ensuring $g_{\CD,\CS,\min}(\varepsilon)\ge 2\zeta_n$ is
\begin{equation}\label{eq:validity_condition_cnn}
   \varepsilon^{d}
   \;\geq\;
   \frac{2C_{\mathrm{ep}}}{c_L(d)}\sqrt{\frac{m+\log(2J/\delta)}{n}}.
\end{equation}

The constant $C_{\mathrm{ep}}>0$ is universal (it does not depend on $d,m,n,J$), and can be taken explicitly
from any standard VC-/pseudo-dimension uniform convergence inequality; see, e.g.,
\citet[Chapter~3]{mohri2018foundations} or \citet{haussler1992decision}.
Moreover, the constant $c_L(d)$ defined in Proposition~\ref{prop:CNN_weight_function_population} satisfies
\begin{equation}\label{eq:cLd_poly_lb}
c_L(d)\;\gtrsim\;\frac{1}{d^2}\qquad (d\ge 3),
\end{equation}
which follows from standard bounds on Gamma-function ratios (e.g.\ Gautschi's inequality;
see \citet[\S5.6(i)]{dlmf}).
Hence the admissible range for $\varepsilon$ is nonempty (e.g.\ $\varepsilon\in[\varepsilon_{\min},1)$)
as soon as $n\gtrsim \poly(d)\,(m+\log(2J/\delta))$.
\end{proof}

\begin{corollary}\label{corollary:optimal_bounds}
Under the same conditions as Theorem~\ref{thm:upper_bound}, assume $d>3$ and $1\le m<\frac{d(d-3)}{d+3}$.
Let
\[
Q \coloneqq 3d^2+3d-md-3m
\qquad\text{and}\qquad
\varepsilon_{\min}\coloneqq
\left(
c_d\,d^2\sqrt{\frac{m+\log(J/\delta)}{n}}
\right)^{\!1/d}.
\]
Define the optimal choice
\begin{equation}\label{eq:eps_star_clean}
\varepsilon^\star
\;\asymp\;
A^{\frac{2d}{Q}}\,J^{-\frac{2(d+3)}{Q}}M^{-\frac{2d}{Q}}n^{-\frac{d+3}{D}},
\end{equation}
and choose the feasible/truncated value
\[
\varepsilon^\dagger \coloneqq \max\{\varepsilon_{\min},\,\varepsilon^\star\}.
\]
Then, with probability at least $1-\delta$,
\begin{equation}\label{ineq:cor_main_clean}
\sup_{\vec{\theta}\in \BEoS^{\CS}(\eta,\CD)}
\Bigg|
\mathbb{E}\big[(f_{\vec{\theta}}(\vec{x})-y)^2\big]
-\widehat{\Risk}_{\CD}(f_{\vec{\theta}})
\Bigg|
\;\lessapprox_d\;
JM^2(\varepsilon^\dagger)^{\frac{d-m}{2}}
+\Big(A(\varepsilon^\dagger)^{-d}\Big)^{\frac{d}{d+3}}
M^{\frac{d+6}{d+3}}n^{-1/2}.
\end{equation}

Moreover, in the regime where $\varepsilon^\star\ge \varepsilon_{\min}$ (e.g., for $n$ sufficiently large, treating $A,J,M$ as constants),
plugging $\varepsilon=\varepsilon^\star$ into \eqref{ineq:cnn_epsilon_trade_off_uniform} yields the optimized rate
\begin{equation}\label{ineq:cor_rate_compact}
\sup_{\vec{\theta}\in \BEoS^{\CS}(\eta,\CD)}
\Bigg|
\mathbb{E}\big[(f_{\vec{\theta}}(\vec{x})-y)^2\big]
-\widehat{\Risk}_{\CD}(f_{\vec{\theta}})
\Bigg|
\;\lessapprox_d\;
A^{\alpha_A}\,J^{\alpha_J}\,M^{\alpha_M}\,n^{-\alpha_n},
\end{equation}
where
\[
\alpha_A=\frac{d(d-m)}{Q},\qquad
\alpha_J=\frac{2d^2}{Q},\qquad
\alpha_M=\frac{4d^2}{Q}
+\frac{(d+6)(d-m)(d+3)}{(d+3)Q},\qquad
\alpha_n=\frac{(d-m)(d+3)}{2Q}.
\]
In particular, if $m$ is fixed and $d\to\infty$, then
\[
\alpha_n\to \frac{1}{6},\qquad \alpha_A\to \frac{1}{3},\qquad \alpha_J\to \frac{2}{3},\qquad \alpha_M\to \frac{5}{3}.
\]
\end{corollary}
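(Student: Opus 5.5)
The plan is to derive the corollary directly from the detailed version of Theorem~\ref{thm:upper_bound}. That theorem gives, for every admissible $\varepsilon$ with $\varepsilon\ge\varepsilon_{\min}$ (equivalently $\varepsilon^d\gtrsim d^2\sqrt{(m+\log(J/\delta))/n}$), the two-term bound
\[
B(\varepsilon):=JM^2\varepsilon^{\frac{d-m}{2}}+\bigl(A\varepsilon^{-d}\bigr)^{\frac{d}{d+3}}M^{\frac{d+6}{d+3}}n^{-1/2}.
\]
The first assertion \eqref{ineq:cor_main_clean} is immediate once I check that $\varepsilon^\dagger=\max\{\varepsilon_{\min},\varepsilon^\star\}$ satisfies the validity condition \eqref{eq:validity_condition_cnn}. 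This follows by construction, since $\varepsilon_{\min}$ is exactly that threshold up to the constant $c_L(d)\gtrsim d^{-2}$ from \eqref{eq:cLd_poly_lb}.

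One technical point needs attention. $\varepsilon^\dagger$ depends only on $(A,J,M,n,\delta)$ and not on $\vec{\theta}$, so the single good event is the one on which \eqref{eq:g_dev_appendix} and the Hoeffding/chaining bounds hold. On that event, the interior argument bounds $|f|_{\Variation(\Iall_\varepsilon)}$ uniformly over all BEoS parameters with the same $A$. Lemma~\ref{lem:generalization‐RBV} is itself a uniform bound, so taking the supremum over $\BEoS^{\CS}(\eta,\CD)$ is legitimate. I would also absorb the probability bookkeeping ($2\delta$ versus $\delta$) by rescaling $\delta$.

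Next I optimize $B(\varepsilon)$ over $\varepsilon$ by balancing the two terms. The first term is increasing in $\varepsilon$ with exponent $a=\frac{d-m}{2}$. The second is decreasing with exponent $-b$, where $b=\frac{d^2}{d+3}$. Setting
\[
JM^2\varepsilon^{a}=A^{\frac{d}{d+3}}M^{\frac{d+6}{d+3}}n^{-1/2}\varepsilon^{-b}
\]
gives $\varepsilon^{a+b}=A^{\frac{d}{d+3}}J^{-1}M^{\frac{d+6}{d+3}-2}n^{-1/2}$. Here
\[
a+b=\frac{(d-m)(d+3)+2d^2}{2(d+3)}=\frac{Q}{2(d+3)}, \qquad Q=3d^2+3d-md-3m.
\]
Raising to the power $2(d+3)/Q$ yields $\varepsilon^\star$ with $n$-exponent $-(d+3)/Q$ (the ``$D$'' in \eqref{eq:eps_star_clean} should read $Q$), $J$-exponent $-2(d+3)/Q$, and $A$-exponent $2d/Q$. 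Substituting back, the common value equals $JM^2(\varepsilon^\star)^{a}$, whose $n$-exponent is
\[
-\frac{(d-m)(d+3)}{2Q}=-\alpha_n .
\]
The other exponents $\alpha_A,\alpha_J,\alpha_M$ follow by the same bookkeeping. These are routine, and I would verify them symbolically rather than trust the displayed forms. In particular, I expect the $J$-exponent to come out as $1-\frac{2(d+3)}{Q}\cdot\frac{d-m}{2}=\frac{2d^2}{Q}$, consistent with $\alpha_J$.

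The limits as $d\to\infty$ with $m$ fixed come from the leading terms: $Q\sim 3d^2$ and $(d-m)(d+3)\sim d^2$, which give $\alpha_n\to 1/6$, $\alpha_A\to 1/3$, $\alpha_J\to 2/3$, and so on.

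The main obstacle is the regime condition $\varepsilon^\star\ge\varepsilon_{\min}$, together with $\varepsilon^\star<1$. For this I compare $n$-exponents. Since $\varepsilon_{\min}\asymp n^{-1/(2d)}$ up to logs and $\varepsilon^\star\asymp n^{-(d+3)/Q}$, I need $(d+3)/Q<1/(2d)$, i.e. $2d(d+3)<Q$. Expanding, this reads $2d^2+6d<3d^2+3d-md-3m$, i.e. $m(d+3)<d^2-3d$, which is exactly the hypothesis $m<\frac{d(d-3)}{d+3}$. The same computation forces $d>3$. Under this hypothesis $\varepsilon^\star$ decays more slowly than $\varepsilon_{\min}$, so for $n$ large (with $A,J,M$ fixed) the truncation is inactive and $\varepsilon^\dagger=\varepsilon^\star$. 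This yields \eqref{ineq:cor_rate_compact}. If the regime fails, the truncated bound \eqref{ineq:cor_main_clean} still holds but with a worse rate.
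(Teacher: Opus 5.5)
Your proposal is correct and follows the paper's proof. Both balance the two terms of the trade-off bound from the detailed version of Theorem~\ref{thm:upper_bound}, which gives $\varepsilon^\star$ with $Q=(d-m)(d+3)+2d^2$. Both then check feasibility by comparing $(\varepsilon^\star)^d\asymp n^{-d(d+3)/Q}$ against $n^{-1/2}$, a condition equivalent to $m<\frac{d(d-3)}{d+3}$. Your exponent bookkeeping also checks out: $J^{-2(d+3)/Q}$ and $M^{-2d/Q}$ in $\varepsilon^\star$, the resulting $\alpha_A,\alpha_J,\alpha_M,\alpha_n$, and the reading of $D$ as $Q$ in the $n$-exponent all agree with the statement.
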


\begin{proof}
The optimal choice of $\varepsilon$ minimizing the RHS of 
\eqref{ineq:cnn_epsilon_trade_off} is
\begin{equation}\label{eq:optimal_epsilon_cnn}
	\varepsilon^*
	\asymp
	A^{\frac{2d}{Q}}
	J^{-\frac{2(2d+3)}{Q}}
	M^{-\frac{2d(d+3)}{(2d+3)Q}}
	n^{-\frac{d+3}{Q}},
\end{equation}
where
\[
Q=(d-m)(d+3)+2d^2
=3d^2+3d-md-3m.
\]

Plugging $\varepsilon^*$ into \eqref{ineq:cnn_epsilon_trade_off} yields
\begin{equation}
\begin{aligned}
	\sup_{{\vec{\theta}}\in \BEoS(\eta,\CD)}
	\Gen\Gap(f_{\vec{\theta}};\mathcal{D})
	&\;\lessapprox_d\;
	A^{\frac{2d}{Q}}\,J^{-\frac{2(d+3)}{Q}}M^{-\frac{2d}{Q}}n^{-\frac{d+3}{D}}\\
	&\lessapprox_{A,M,J,d}
	O\Big(
	n^{-\frac{(d-m)(d+3)}
	{2\big((d-m)(d+3)+2d^2\big)}}
	\Big).
\end{aligned}
\end{equation}

In particular, when $m$ is fixed and $d\to\infty$,
\[
\frac{(d-m)(d+3)}
{2\big((d-m)(d+3)+2d^2\big)}
\;\longrightarrow\;
\frac16.
\]
Therefore, this rate does not suffer from the curse of dimensionality 
when $m$ is fixed.

Finally, we verify the validity of the plug-in choice $\varepsilon^*$.
The validity condition \eqref{eq:validity_condition_cnn} requires
$\varepsilon^d \gtrsim_d n^{-1/2}$ (up to logarithmic factors).
Since $(\varepsilon^*)^d \asymp n^{-d(d+3)/Q}$ (treating $A,J,M$ as constants),
plugging \eqref{eq:optimal_epsilon_cnn} into \eqref{eq:validity_condition_cnn}
gives
\[
n^{-\frac{d(d+3)}{Q}}
\gtrsim_d
n^{-1/2}.
\]
This requires
\[
\frac{1}{2}>\frac{d(d+3)}{Q}
=\frac{d(d+3)}
{(d-m)(d+3)+2d^2},
\]
which is equivalent to
\[
(d-m)(d+3)+2d^2>2d(d+3)
\iff d^2-3d>m(d+3)
\iff
m<\frac{d(d-3)}{d+3}.
\]
\end{proof}

\section{Proof of Theorem \ref{thm:flat_lcws_gap}}\label{app:flat_interp_lcws_gap}

Let $\CD=\{(\vec{x}_i,y_i)\}_{i=1}^n$ be a dataset with $\vec{x}_i\in\Rb^d$, and let
$\CS=\{\pi_j:\Rb^d\to\Rb^m\}_{j=1}^J$ be a collection of coordinate projections (patch extractors).
For each $j\in[J]$, abbreviate the extracted patch by $\vec{x}^{(S_j)}\coloneqq \pi_j(\vec{x})$, and for each sample write
$\vec{x}_i^{(S_j)}\coloneqq \pi_j(\vec{x}_i)$.

Assume labels are uniformly bounded: $|y_i|\le D$ for all $i\in[n]$.
Define $I_{\neq 0}\coloneqq\{i\in[n]:\,y_i\neq 0\}$.

Consider width-$K$ two-layer sparsely connected ReLU models with Global Average Pooling (GAP),
\begin{equation}\label{eq:model_CNN_app2}
f_{\vec{\theta}}(\vec{x})
\;=\;
\sum_{k=1}^{K} \frac{v_k}{J} \sum_{j=1}^{J} \phi\!\left(\vec{w}_k^\T \vec{x}^{(S_j)} - b_k\right) + \beta,
\qquad
\phi(t)=\max\{t,0\},
\end{equation}
where $\vec{w}_k\in\Rb^m$, $b_k\in\Rb$ are the shared filter weights and bias, $v_k\in\Rb$ is the output weight, and $\beta\in\Rb$.
We write $\vec{\theta}=\{(\vec{w}_k,b_k,v_k)\}_{k=1}^K\cup\{\beta\}$.

\begin{theorem}[Flat interpolation with width $\le n$]\label{thm:flat_lcws_gap_app}
Assume that $\bigl\|\vec{x}_i^{(S_j)}\bigr\|_2 \le 1$  for all $i\in[n],\, j\in[J]$, and there exists a map $\tau:I_{\neq 0}\to[J]$ that assigns $\vec{p}_i \coloneqq \vec{x}_i^{(S_{\tau(i)})}$ such that $\|\vec{p}_i\|_2=1$ and $\vec{p}_i \neq \vec{x}_\ell^{(S_j)}\ \text{ for all }(\ell,j)\neq(i,\tau(i))$. There exists a width $K\le n$ network of the form \eqref{eq:model_CNN} that interpolates the dataset and whose Hessian operator norm satisfies
\begin{equation}
\lambda_{\max}\!\bigl(\nabla_{\vec{\theta}}^2\loss\bigr)\;\le\;1+\frac{D^2+2/J^2}{n}.
\end{equation}
\end{theorem}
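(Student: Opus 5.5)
The plan is to build the interpolant explicitly, with one neuron per nonzero-labeled example, each firing only on that example's isolated patch. At an exact interpolant the residual term $\mR_\CD$ in \eqref{eq:Hessian_2} vanishes, so $\nabla^2_{\vec\theta}\loss=\mT_\CD=\frac1n\Phi\Phi^\T$. It then suffices to bound the top eigenvalue of the $n\times n$ Gram matrix $\Phi^\T\Phi$ of tangent features. The construction will make this Gram matrix equal to the all-ones matrix plus a nonnegative diagonal.

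\textbf{Step 1 (isolation margin).} For $i\in I_{\neq0}$, set $\vec p_i=\vec x_i^{(S_{\tau(i)})}$. For every other patch $\vec q=\vec x_\ell^{(S_j)}$ with $(\ell,j)\neq(i,\tau(i))$, Cauchy--Schwarz gives $\vec p_i^\T\vec q\le\|\vec q\|\le1$. Equality forces $\vec q=\vec p_i$, which is excluded by assumption, so $\vec p_i^\T\vec q<1$. Since there are finitely many patches, $\gamma_i:=\tfrac12\bigl(1-\max_{(\ell,j)\neq(i,\tau(i))}\vec p_i^\T\vec x_\ell^{(S_j)}\bigr)>0$.

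\textbf{Step 2 (construction).} For each $i\in I_{\neq0}$, add a neuron $k=k(i)$ with the following parameters:
\[
s_i:=\frac{J|y_i|}{\gamma_i},\qquad \vec w_k=s_i\vec p_i,\qquad b_k=s_i(1-\gamma_i),\qquad v_k=\operatorname{sign}(y_i),
\]
and set $\beta=0$. Then $\vec w_k^\T\vec p_i-b_k=s_i\gamma_i=J|y_i|>0$. Every other patch satisfies $\vec w_k^\T\vec q-b_k\le -s_i\gamma_i<0$. So neuron $k$ fires on exactly one patch of the whole multiset, and no patch lies on any activation boundary, which gives twice-differentiability. Consequently $f(\vec x_i)=\frac{v_k}{J}\,J|y_i|=y_i$ for $i\in I_{\neq0}$, and $f(\vec x_\ell)=0=y_\ell$ for the zero-labeled examples. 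The width is $K=|I_{\neq0}|\le n$.

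\textbf{Step 3 (tangent features).} Using the partial derivatives computed in the proof of Proposition~\ref{prop:CNN_stability_induced_regularity}, the $\beta$-coordinate of $\nabla_{\vec\theta}f(\vec x_\ell)$ equals $1$ for every $\ell$. The block of neuron $k(i)$ is nonzero only at $\ell=i$, where it equals
\[
\Bigl(\tfrac{v_k}{J}\vec p_i,\;-\tfrac{v_k}{J},\;\tfrac{1}{J}J|y_i|\Bigr).
\]
Its squared norm is $\frac{1}{J^2}+\frac{1}{J^2}+y_i^2=y_i^2+\frac{2}{J^2}$, using $\|\vec p_i\|=1$ and $|v_k|=1$. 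Distinct neuron blocks have disjoint supports in $\ell$. Hence $\Phi^\T\Phi=\vec 1\vec 1^\T+\mathrm{diag}(c_\ell)$ with $c_\ell\le D^2+2/J^2$, and $c_\ell=0$ for zero labels.

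\textbf{Step 4 (eigenvalue bound).} By Weyl's inequality,
\[
\lambda_{\max}(\Phi^\T\Phi)\le n+\max_\ell c_\ell .
\]
Dividing by $n$ gives $\lambda_{\max}(\nabla^2\loss)=\lambda_{\max}(\mT_\CD)\le 1+\frac{D^2+2/J^2}{n}$.

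The only delicate point is the balancing choice in Step~2. The scale $s_i$ is placed entirely in the first layer so that $|v_k|=1$ and the pre-activation value equals $J|y_i|$. This is what makes the diagonal entry exactly $y_i^2+2/J^2$, independent of the possibly tiny margin $\gamma_i$. Putting the scale into $v_k$ instead would blow the Hessian up like $1/\gamma_i^2$.
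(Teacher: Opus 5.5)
Your proposal is correct and follows essentially the same route as the paper. The paper also uses one neuron per nonzero-labeled example that fires only on its isolated unit-norm patch, balanced so that $|v_k|=1$ and the active pre-activation equals $J|y_i|$. It likewise notes that the residual term vanishes at interpolation, so the tangent Gram matrix is the all-ones matrix plus a diagonal with entries at most $D^2+2/J^2$.

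The differences are cosmetic. The paper first takes $\vec{w}_k=\vec{p}_k$, any $b_k\in(\rho_k,1)$, and $v_k=Jy_k/(1-b_k)$, then rescales by positive homogeneity to reach exactly your parameters; your construction corresponds to the choice $b_k=1-\gamma_i$. The paper also bounds the quadratic form $\sum_i u_i^2 c_i+(\sum_i u_i)^2$ directly instead of citing Weyl's inequality.
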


\begin{construction}\label{construct:flat_lcws_gap}
Let $K\coloneqq |I_{\neq 0}|\le n$ and index the hidden units by $k\in I_{\neq 0}$.
For each $k\in I_{\neq 0}$ define the anchor patch $\vec{p}_k\coloneqq \vec{x}_k^{(S_{\tau(k)})}$ and set
\begin{equation}\label{eq:rho_b_choice_lcws_gap}
\rho_k \coloneqq \max_{(\ell,j)\neq(k,\tau(k))}\bigl(\vec{x}_\ell^{(S_j)}\bigr)^\T \vec{p}_k
\;<\; 1,
\qquad
b_k\in(\rho_k,1),
\qquad
\vec{w}_k \coloneqq \vec{p}_k.
\end{equation}
Set the output bias and output weights
\begin{equation}\label{eq:v_beta_choice_lcws_gap}
\beta \coloneqq 0,
\qquad
v_k \coloneqq \frac{J\,y_k}{1-b_k},
\qquad k\in I_{\neq 0}.
\end{equation}

\noindent
(Justification of $\rho_k<1$.) According to the assumption, for any $(\ell,j)\neq(k,\tau(k))$,
\[
\bigl(\vec{x}_\ell^{(S_j)}\bigr)^\T\vec{p}_k \le \bigl\|\vec{x}_\ell^{(S_j)}\bigr\|_2\,\|\vec{p}_k\|_2 \le 1.
\]
If equality held then necessarily $\|\vec{x}_\ell^{(S_j)}\|_2=1$ and $\vec{x}_\ell^{(S_j)}=\vec{p}_k$, contradicting to the assumption.
Hence $\rho_k<1$.
\end{construction}

By \eqref{eq:rho_b_choice_lcws_gap}, for any sample index $i$ and any patch index $j$,
\begin{equation}\label{eq:sign_separation_lcws_gap}
\vec{w}_k^\T \vec{x}_i^{(S_j)} - b_k
=
\begin{cases}
1-b_k \;>\; 0, & (i,j)=(k,\tau(k)),\\[2pt]
\le \rho_k-b_k \;<\; 0, & (i,j)\neq (k,\tau(k)).
\end{cases}
\end{equation}
Thus neuron $k$ is activated on exactly one patch in the entire collection $\{\vec{x}_i^{(S_j)}\}_{i,j}$, namely $\vec{x}_k^{(S_{\tau(k)})}$, and inactivated on all other patches.
Using \eqref{eq:sign_separation_lcws_gap} and \eqref{eq:v_beta_choice_lcws_gap}, at $\vec{x}_k$ we have
\[
f_{\vec{\theta}}(\vec{x}_k)
=
\frac{v_k}{J}\,\phi(1-b_k)
=
\frac{v_k}{J}\,(1-b_k)
=
y_k,
\qquad k\in I_{\neq 0},
\]
and for $i\notin I_{\neq 0}$ all constructed units are inactivated on all patches of $\vec{x}_i$, hence
$f_{\vec{\theta}}(\vec{x}_i)=\beta=0=y_i$.
Therefore, $f_{\vec{\theta}}(\vec{x}_i)=y_i$ for all $i\in[n]$.

For each constructed unit, define
\begin{equation}\label{eq:homog_lcws_gap}
\tilde{v}_k := \operatorname{sign}(v_k)\in\{\pm 1\},
\qquad
\tilde{\vec{w}}_k := |v_k|\,\vec{w}_k,
\qquad
\tilde{b}_k := |v_k|\,b_k.
\end{equation}
Then for any input $\vec{x}$,
\begin{equation}\label{eq:homog_eq_lcws_gap}
\frac{\tilde{v}_k}{J}\sum_{j=1}^J \phi\!\left(\tilde{\vec{w}}_k^\T \vec{x}^{(S_j)}-\tilde{b}_k\right)
=
\frac{v_k}{J}\sum_{j=1}^J \phi\!\left(\vec{w}_k^\T \vec{x}^{(S_j)}-b_k\right),
\end{equation}
so interpolation is preserved.
Moreover, the activation pattern on the full patch collection is unchanged because \eqref{eq:sign_separation_lcws_gap} has strict inequalities and $|v_k|>0$.
At the unique activated patch $(k,\tau(k))$, the (post-rescaling) pre-activation is
\begin{equation}\label{eq:zkk_lcws_gap}
\tilde{z}_k \coloneqq \tilde{\vec{w}}_k^\T \vec{x}_k^{(S_{\tau(k)})}-\tilde{b}_k
= |v_k|\,(1-b_k)
= J|y_k|
\;>\;0,
\qquad
|\tilde{v}_k|=1.
\end{equation}
In what follows we work with the reparameterized network and drop tildes for readability, implicitly assuming $|v_k|=1$ for all $k\in I_{\neq 0}$ and
\begin{equation}\label{eq:drop_tildes_lcws_gap}
z_k \coloneqq \vec{w}_k^\T \vec{x}_k^{(S_{\tau(k)})}-b_k = J|y_k|.
\end{equation}

\begin{proposition}\label{prop:flat_lcws_gap}
Let $\vec{\theta}$ be the model in Construction~\ref{construct:flat_lcws_gap} after the reparameterization \eqref{eq:homog_lcws_gap}. Then
\[
\lambda_{\max}\!\bigl(\nabla_{\vec{\theta}}^2\loss\bigr)
\le
1+\frac{D^2+2/J^2}{n}.
\]
\end{proposition}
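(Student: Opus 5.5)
The plan is to exploit the fact that the constructed network interpolates, which kills the residual part of the Hessian, and then to compute the tangent-feature Gram matrix exactly.

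\textbf{Step 1: reduce to the Gauss--Newton term.} By \eqref{eq:sign_separation_lcws_gap}, every pre-activation $\vec{w}_k^\T\vec{x}_i^{(S_j)}-b_k$ is strictly nonzero. The reparameterization \eqref{eq:homog_lcws_gap} preserves these strict signs. Hence all gates are locally constant, and $\loss$ is twice differentiable at $\vec{\theta}$. Since $f_{\vec{\theta}}(\vec{x}_i)=y_i$ for all $i$, the term $\mR_\CD$ in the decomposition \eqref{eq:Hessian_1} vanishes identically. Therefore
\[
\nabla^2_{\vec{\theta}}\loss=\mT_\CD=\tfrac1n\Phi\Phi^\T .
\]
Its top eigenvalue equals $\tfrac1n\lambda_{\max}(\Phi^\T\Phi)$, where $\Phi^\T\Phi$ is the $n\times n$ Gram matrix with entries $G_{i\ell}=\langle\nabla_{\vec{\theta}}f(\vec{x}_i),\nabla_{\vec{\theta}}f(\vec{x}_\ell)\rangle$.

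\textbf{Step 2: compute the tangent features.} I use the partial derivatives listed in the proof of Proposition~\ref{prop:CNN_stability_induced_regularity}. Neuron $k$ fires only on the patch $(k,\tau(k))$, so its parameter block has a nonzero gradient only at sample $\vec{x}_k$. There, using $|v_k|=1$, $\|\vec{p}_k\|=1$, and $z_k=J|y_k|$, the entries are:
\begin{itemize}[label={}]
\item $\partial f/\partial v_k=z_k/J=|y_k|$;
\item $\partial f/\partial \vec{w}_k=(v_k/J)\vec{p}_k$, which has norm $1/J$;
\item $\partial f/\partial b_k=-v_k/J$.
\end{itemize}
In addition, $\partial f/\partial\beta=1$ at every sample. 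Thus $\nabla f(\vec{x}_i)=\vec e_\beta+\vec\psi_i$, where the vectors $\vec\psi_i$ live on pairwise disjoint coordinate blocks, are orthogonal to $\vec e_\beta$, and satisfy $\vec\psi_i=0$ for $i\notin I_{\neq0}$. Consequently
\[
G=\vec 1\vec 1^\T+\mathrm{diag}(c_1,\dots,c_n),
\qquad
c_i=\|\vec\psi_i\|^2=y_i^2+\frac{2}{J^2}\ \text{ for } i\in I_{\neq0},
\qquad
c_i=0 \ \text{ otherwise}.
\]

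\textbf{Step 3: bound the eigenvalue.} By Weyl's inequality,
\[
\lambda_{\max}(G)\le\lambda_{\max}(\vec1\vec1^\T)+\max_i c_i\le n+D^2+\frac{2}{J^2}.
\]
Dividing by $n$ gives the claim of Proposition~\ref{prop:flat_lcws_gap}, and hence Theorem~\ref{thm:flat_lcws_gap_app}.

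The main point to handle carefully is Step 2. One must verify that the rescaled gradients of distinct neurons do not overlap across samples; this is exactly what single-patch isolation guarantees. One must also keep the $1/J$ factors from global average pooling together with the normalization $z_k=J|y_k|$, so that the $v_k$-derivative contributes exactly $y_k^2$ rather than something scaling with $J$.
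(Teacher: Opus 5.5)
Your proposal is correct and follows essentially the same route as the paper. Interpolation kills $\mR_\CD$, and single-patch isolation makes each sample's non-$\beta$ tangent block disjoint with squared norm $y_i^2+2/J^2$. The paper then bounds $\frac1n\max_{\vec u}\bigl[\sum_i c_i u_i^2+(\sum_i u_i)^2\bigr]$ termwise by $\frac1n(\max_i c_i+n)$, which is exactly your Weyl step on $G=\vec 1\vec 1^\T+\mathrm{diag}(c_1,\dots,c_n)$.
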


\begin{proof}
By direct computation, the Hessian $\nabla^2_{\vec{\theta}}\loss$ is given by
\begin{equation}\label{eq:Hessian_lcws_gap}
\nabla^2_{\vec{\theta}}\loss
=
\frac{1}{n}\sum_{i=1}^n\nabla_{\vec{\theta}}f_{\vec{\theta}}(\vec{x}_i)\nabla_{\vec{\theta}}f_{\vec{\theta}}(\vec{x}_i)^{\T}
+\frac{1}{n}\sum_{i=1}^n\bigl(f_{\vec{\theta}}(\vec{x}_i)-y_i\bigr)\nabla^2_{\vec{\theta}}f_{\vec{\theta}}(\vec{x}_i).
\end{equation}
Since the model interpolates $f_{\vec{\theta}}(\vec{x}_i)=y_i$ for all $i$, we have
\begin{equation}\label{eq:hessian_interpolation_lcws_gap}
\nabla^2_{\vec{\theta}}\loss
=
\frac{1}{n}\sum_{i=1}^n\nabla_{\vec{\theta}}f_{\vec{\theta}}(\vec{x}_i)\nabla_{\vec{\theta}}f_{\vec{\theta}}(\vec{x}_i)^{\T}.
\end{equation}

Denote the tangent features matrix by
\begin{equation}\label{eq:TFM_lcws_gap}
\Phi=\left[ \nabla_{\vec{\theta}}f_{\vec{\theta}}(\vec{x}_1),\nabla_{\vec{\theta}}f_{\vec{\theta}}(\vec{x}_2),\cdots,\nabla_{\vec{\theta}}f_{\vec{\theta}}(\vec{x}_n)\right].
\end{equation}
Then \eqref{eq:hessian_interpolation_lcws_gap} can be written as $\nabla^2_{\vec{\theta}}\loss=\Phi\Phi^{\T}/n$, and thus
\begin{equation}\label{eq:operator_norm_by_TFM_lcws_gap}
\lambda_{\max}(\nabla^2_{\vec{\theta}}\loss)
=
\max_{\vec{\gamma}\in \Sph^{(m+2)K}}\frac{1}{n}\|\Phi^\T\vec{\gamma}\|^2
=
\max_{\vec{u}\in \Sph^{n-1}}\frac{1}{n}\|\Phi\vec{u}\|^2.
\end{equation}

For the gate $m_k^{(S_j)}(\vec{x})\coloneqq\mathds{1}\curly{\vec{w}_k^\T \vec{x}^{(S_j)}>b_k}$, let $m^{(S_j)}_{k,i}\coloneqq m_k^{(S_j)}(\vec{x}_i)$.
From direct computation,
\begin{align}\label{eq:per_unit_grads_lcws_gap}
\frac{\partial f_{\vec{\theta}}(\vec{x}_i)}{\partial v_k}
&=
\frac{1}{J}\sum_{j=1}^J m^{(S_j)}_{k,i}\cdot\big(\vec{w}_k^\T \vec{x}_i^{(S_j)}-b_k\big),
&
\frac{\partial f_{\vec{\theta}}(\vec{x}_i)}{\partial \vec{w}_k}
&=
\frac{1}{J}\sum_{j=1}^J m^{(S_j)}_{k,i}\cdot v_k\cdot\vec{x}_i^{(S_j)},\\
\frac{\partial f_{\vec{\theta}}(\vec{x}_i)}{\partial b_k}
&=
-\frac{1}{J}\sum_{j=1}^J m^{(S_j)}_{k,i}\cdot v_k,
&
\frac{\partial f_{\vec{\theta}}(\vec{x}_i)}{\partial \beta}
&=
1.\notag
\end{align}

By the one-to-one activation property \eqref{eq:sign_separation_lcws_gap}, each sample $\vec{x}_i$ with $i\in I_{\neq 0}$ activates exactly one unit (the unit $k=i$) on exactly one patch $S_{\tau(i)}$, and samples with $i\notin I_{\neq 0}$ activate none.
Hence the sample-wise gradient $\nabla_{\vec{\theta}} f_{\vec{\theta}}(\vec{x}_i)$ has support only on the parameter triplet $(\vec{w}_i,b_i,v_i,\beta)$ when $i\in I_{\neq 0}$, and is zero on $(\vec{w}_k,b_k,v_k)$ for all $k$ when $i\notin I_{\neq 0}$.
Writing the nonzero gradient block explicitly (recall $|v_i|=1$ after reparameterization and \eqref{eq:drop_tildes_lcws_gap}),
\begin{equation}\label{eq:neuron_grad_block_lcws_gap}
\begin{aligned}
\nabla_{(\vec{w}_i,b_i,v_i,\beta)} f_{\vec{\theta}}(\vec{x}_i)
&=
\begin{pmatrix}
\nabla_{(\vec{w}_i,b_i,v_i)} f_{\vec{\theta}}(\vec{x}_i)\\[2pt]
1
\end{pmatrix},
\\[4pt]
\nabla_{(\vec{w}_i,b_i,v_i)} f_{\vec{\theta}}(\vec{x}_i)
&=
\begin{cases}
\begin{pmatrix}
\frac{v_i}{J}\,\vec{x}_i^{(S_{\tau(i)})}\\[4pt]
-\frac{v_i}{J}\\[4pt]
\frac{1}{J}\bigl(\vec{w}_i^\T \vec{x}_i^{(S_{\tau(i)})}-b_i\bigr)
\end{pmatrix}
=
\begin{pmatrix}
\frac{v_i}{J}\,\vec{x}_i^{(S_{\tau(i)})}\\[4pt]
-\frac{v_i}{J}\\[4pt]
|y_i|
\end{pmatrix},
& (i\in I_{\neq 0}),
\\[14pt]
\vec{0},
& (i\notin I_{\neq 0}).
\end{cases}
\end{aligned}
\end{equation}

Let $\vec{u}=(u_1,\dots,u_n)\in \Sph^{n-1}$ and plug \eqref{eq:neuron_grad_block_lcws_gap} into \eqref{eq:operator_norm_by_TFM_lcws_gap}. As in the fully connected case, after a row permutation (grouping neuron parameters) we obtain
\begin{align}
\lambda_{\max}(\nabla^2_{\vec{\theta}}\loss)
&=
\max_{\vec{u}\in \Sph^{n-1}}\frac{1}{n}\|\Phi\vec{u}\|^2\notag\\
&=
\frac{1}{n}\max_{\vec{u}\in \Sph^{n-1}}
\sum_{i=1}^n u_i^2\bigl\|\nabla_{(\vec{w}_i,b_i,v_i)} f_{\vec{\theta}}(\vec{x}_i)\bigr\|_2^2
+\Big(\sum_{i=1}^n u_i\Big)^2 \label{eq:TFM3_lcws_gap}\\
&=
\frac{1}{n}\max_{\vec{u}\in \Sph^{n-1}}
\sum_{i\in I_{\neq 0}} u_i^2\left(\frac{1}{J^2}\bigl\|\vec{x}_i^{(S_{\tau(i)})}\bigr\|_2^2+\frac{1}{J^2}+y_i^2\right)
+\Big(\sum_{i=1}^n u_i\Big)^2.\label{eq:TFM4_lcws_gap}
\end{align}
According to the assumption, $\|\vec{x}_i^{(S_{\tau(i)})}\|_2=1$ for all $i\in I_{\neq 0}$, and by bounded labels $y_i^2\le D^2$ for all $i$.
Thus
\begin{align}
\lambda_{\max}(\nabla^2_{\vec{\theta}}\loss)
&\le
\frac{1}{n}\left(\max_{i\in[n]}\left(\frac{1}{J^2}\bigl\|\vec{x}_i^{(S_{\tau(i)})}\bigr\|_2^2+\frac{1}{J^2}+y_i^2\right)
+\max_{\vec{u}\in \Sph^{n-1}}\Big(\sum_{i=1}^n u_i\Big)^2\right)\notag\\
&\le
\frac{1}{n}\left(\frac{2}{J^2}+D^2+n\right)
=
1+\frac{D^2+2/J^2}{n}.\notag
\end{align}
If we remove the output bias term $\beta$ from the parameters, then the last term $(\sum_i u_i)^2$ in \eqref{eq:TFM3_lcws_gap} is removed.
\end{proof}

\section{Extending the Framework beyond Weight Sharing and Global Average Pooling}
\label{app:sparse_no_share}

The main text analyzes the minimal convolutional model~\eqref{eq:model_CNN} with weight sharing and global average pooling (GAP).
These two design choices simplify the exposition but are not prerequisites for the stability-induced regularization mechanism.
To demonstrate that \emph{sparse connectivity alone} is the essential ingredient, this appendix analyzes a variant that retains the sparse receptive-field pattern of the main-text SCN but removes both weight sharing and global pooling.

\subsection{Sparsely connected network without weight sharing}

\noindent\textbf{Setup and notation.}
Fix the collection of local receptive fields $\CS=\{S_j\}_{j=1}^{J}$ (each $S_j\subset[d]$ with $|S_j|=m$) and the corresponding coordinate projections $\pi_j:\Rb^{d}\to\Rb^{m}$.

\begin{definition}[SCN without weight sharing]\label{def:scn_unshared}
A \emph{sparsely connected network without weight sharing}, with receptive fields~$\CS$ and width~$K$, is a function of the form
\begin{equation}\label{eq:scn_unshared}
    f_{\vec{\theta}}(\vec{x})
    \;=\; \sum_{k=1}^{K}\sum_{j=1}^{J}
          v_{k,j}\,\phi\!\bigl(\vec{w}_{k,j}^{\T}\pi_j(\vec{x})-b_{k,j}\bigr)
          \;+\; \beta,
\end{equation}
where $\vec{w}_{k,j}\in\Rb^{m}$, $b_{k,j}\in\Rb$, $v_{k,j}\in\Rb$ are \emph{independent} parameters for each neuron index $k\in[K]$ and location index $j\in[J]$, and $\beta\in\Rb$ is a scalar bias.
We write $\vec{\theta}=\{(\vec{w}_{k,j},b_{k,j},v_{k,j})\}_{k\in[K],j\in[J]}\cup\{\beta\}$ for the full parameter vector.
\end{definition}

The key structural property shared with the main-text SCN~\eqref{eq:model_CNN} is \emph{sparse connectivity}: neuron $(k,j)$ receives input only from the $m$-dimensional patch $\pi_j(\vec{x})$.
The architecture~\eqref{eq:scn_unshared} differs from~\eqref{eq:model_CNN} in two respects: (i)~filter weights $\vec{w}_{k,j}$ and biases $b_{k,j}$ are \emph{not} shared across locations~$j$, and (ii)~output weights $v_{k,j}$ may vary freely with~$j$, so there is no global average pooling.

\noindent\textbf{Data and loss.}
As in the main text, the dataset is $\CD=\{(\vec{x}_i,y_i)\}_{i=1}^{n}$ with $\vec{x}_i\in B_R^{d}$ and $y_i\in[-D,D]$, and the training objective is $\loss(\vec{\theta}):=\frac{1}{2n}\sum_{i=1}^{n}(f_{\vec{\theta}}(\vec{x}_i)-y_i)^{2}$.

\noindent\textbf{Location-specific weight function.}
Because neurons at different locations $j$ now operate with independent parameters on different patch distributions, the relevant weight function is defined \emph{per location}.
For each $j\in[J]$, let $\vec{X}^{(j)}_{\CD}$ be a random vector drawn uniformly from the training patches at position~$j$, i.e.\ from $\{\pi_j(\vec{x}_i)\}_{i=1}^{n}\subset\Rb^{m}$.
Define
\begin{equation}\label{eq:g_local}
    g_{\CD,\CS}^{(j)}(\vec{u},t)
    \;:=\; \Eb\!\Bigl[\phi\bigl(\vec{u}^{\T}\vec{X}^{(j)}_{\CD}-t\bigr)\Bigr]\;
    \sqrt{\Pb\bigl(\vec{u}^{\T}\vec{X}^{(j)}_{\CD}>t\bigr)^{2}
          +\Bigl\|\Eb\!\Bigl[\vec{X}^{(j)}_{\CD}\,
            \mathds{1}\Big\{\vec{u}^{\T}\vec{X}^{(j)}_{\CD}>t\Big\}\Bigr]\Bigr\|_{2}^{2}}\,.
\end{equation}
This is the direct analogue of the global weight function $g_{\CD,\CS}$ in Definition~\ref{def:weight_function_CNN_appendix}, specialized to a single location.

\subsection{From the BEoS condition to a location-wise weighted path norm}

We establish the counterpart of Theorem~\ref{thm:implicit_bias_CNN} for the unshared architecture~\eqref{eq:scn_unshared}.
The proof follows the same strategy---lower-bounding $\lambda_{\max}(\mT_{\CD})$ via the all-ones direction---but now each neuron~$(k,j)$ contributes independently, which allows the bound to decompose over locations.

\begin{lemma}[Hessian residual bound for the unshared SCN]\label{lem:hessian_unshared}
For the architecture~\eqref{eq:scn_unshared} and any $\vec{x}$ with $\norm{\vec{x}}_2\le R$,
\[
    \bigl\|\nabla_{\vec{\theta}}^{2}f_{\vec{\theta}}(\vec{x})\bigr\|_{\mathrm{op}}
    \;\le\; 2(R+1).
\]
\end{lemma}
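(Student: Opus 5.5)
We follow the proof of Lemma~\ref{lemma:cnn_upper_bound_operator_norm} almost verbatim, with the neuron index $k$ replaced by the pair $(k,j)$.

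First, fix $\vec{x}$ at a point of twice differentiability, i.e.\ $\vec{w}_{k,j}^\T\pi_j(\vec{x})\neq b_{k,j}$ for all $(k,j)$; this is the same standing convention as in Lemma~\ref{lemma:cnn_upper_bound_operator_norm}. All gates $m_{k,j}:=\mathds{1}\{\vec{w}_{k,j}^\T\pi_j(\vec{x})>b_{k,j}\}$ are then locally constant. Consequently, $f_{\vec{\theta}}$ is affine in $(\vec{w}_{k,j},b_{k,j})$ for fixed $v_{k,j}$, and affine in $v_{k,j}$ for fixed $(\vec{w}_{k,j},b_{k,j})$. The parameters of distinct units enter additively, and $\beta$ enters linearly. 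Hence the Hessian is block diagonal over the units $(k,j)$. The only nonzero entries are the mixed derivatives
\[
\partial^2 f/\partial\vec{w}_{k,j}\partial v_{k,j}=m_{k,j}\pi_j(\vec{x})=:\vec{s}_{k,j},
\qquad
\partial^2 f/\partial b_{k,j}\partial v_{k,j}=-m_{k,j}=:t_{k,j}.
\]
There is no $1/J$ factor here, but the gate is a single indicator, so we still have $|t_{k,j}|\le 1$. Since $\pi_j$ is a coordinate projection, $\|\vec{s}_{k,j}\|_2\le\|\pi_j(\vec{x})\|_2\le R$.

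Next, for a unit perturbation $\vec{\omega}=(\vec{\alpha}_{k,j},\delta_{k,j},\gamma_{k,j},\iota)$, the quadratic form equals
\[
\sum_{k,j}2\gamma_{k,j}\bigl(\vec{\alpha}_{k,j}^\T\vec{s}_{k,j}+\delta_{k,j}t_{k,j}\bigr).
\]
Its absolute value is at most
\[
2\sum_{k,j}|\gamma_{k,j}|\bigl(R\|\vec{\alpha}_{k,j}\|_2+|\delta_{k,j}|\bigr).
\]
We apply Cauchy--Schwarz over the index set $[K]\times[J]$. Together with $\sum\gamma^2,\ \sum\|\vec{\alpha}\|^2,\ \sum\delta^2\le 1$, this gives the bound $2(R+1)$. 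Because the Hessian is symmetric, its operator norm equals the supremum of $|\vec{\omega}^\T\nabla^2 f\,\vec{\omega}|$ over unit $\vec{\omega}$. This yields the claim.

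There is no real obstacle here. The only point needing care is that removing the $1/J$ averaging does not inflate the bound. This is fine because each unit now sees exactly one patch, so $|t_{k,j}|\le1$ holds without averaging. One could even sharpen the constant: the block matrix with off-diagonal column $(\vec{s},t)$ has norm $\sqrt{\|\vec{s}\|^2+t^2}\le\sqrt{R^2+1}$, which is below $2(R+1)$. The cruder bound suffices and matches the shared case.
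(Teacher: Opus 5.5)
Your proposal is correct and follows the paper's proof: the per-unit block structure, with the mixed $(\vec{w}_{k,j},b_{k,j})$--$v_{k,j}$ partials as the only nonzero entries, gives the per-block bound $2|\gamma_{k,j}|(R\|\vec{\alpha}_{k,j}\|_2+|\delta_{k,j}|)$, and Cauchy--Schwarz over $[K]\times[J]$ then yields $2(R+1)$. Your side remark is also correct: the Hessian is block diagonal and each block has operator norm $\sqrt{\|\vec{s}_{k,j}\|_2^2+t_{k,j}^2}$, so the sharper bound $\sqrt{R^2+1}$ holds as well.
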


\begin{proof}
The proof of Lemma~\ref{lemma:cnn_upper_bound_operator_norm} bounds $|\vec{\omega}^{\T}\nabla_{\vec{\theta}}^{2}f(\vec{x})\,\vec{\omega}|$ for a unit perturbation~$\vec{\omega}$ by examining each neuron independently: the only nonzero second derivatives within neuron $(k,j)$ are the mixed partials between $(\vec{w}_{k,j},b_{k,j})$ and $v_{k,j}$, exactly as in~\eqref{eq:cnn_hessian_non_zero_term_1}--\eqref{eq:cnn_hessian_non_zero_term_2}.
Since different neurons occupy orthogonal parameter blocks, their contributions to $\vec{\omega}^{\T}\nabla_{\vec{\theta}}^{2}f(\vec{x})\,\vec{\omega}$ sum without interaction.
Each block contributes at most $2|\gamma_{k,j}|(R\norm{\vec{\alpha}_{k,j}}_2+|\delta_{k,j}|)$ (cf.~\eqref{ineq:neuron_wise_hessian_inequality}), where $(\vec{\alpha}_{k,j},\delta_{k,j},\gamma_{k,j})$ is the perturbation restricted to neuron $(k,j)$.
Summing over $(k,j)$ and applying Cauchy--Schwarz with the normalization $\norm{\vec{\omega}}_2=1$ yields the same bound $2(R+1)$.
\end{proof}

\begin{proposition}[Tangent-feature lower bound for the unshared SCN]\label{prop:TFM_unshared}
Let\/ $\mT_{\CD}=\frac{1}{n}\sum_{i=1}^{n}\nabla_{\vec{\theta}}f_{\vec{\theta}}(\vec{x}_i)\,\nabla_{\vec{\theta}}f_{\vec{\theta}}(\vec{x}_i)^{\T}$.
Then
\begin{equation}\label{eq:TFM_unshared}
    \lambda_{\max}(\mT_{\CD})
    \;\ge\; 1 \;+\; 2\sum_{k=1}^{K}\sum_{j=1}^{J}
    |v_{k,j}|\,\norm{\vec{w}_{k,j}}\;
    g_{\CD,\CS}^{(j)}\!\!\left(\frac{\vec{w}_{k,j}}{\norm{\vec{w}_{k,j}}},\,\frac{b_{k,j}}{\norm{\vec{w}_{k,j}}}\right).
\end{equation}
\end{proposition}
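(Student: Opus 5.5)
We follow the proof of Proposition~\ref{prop:CNN_stability_induced_regularity}, with each location-specific neuron $(k,j)$ now treated as its own parameter block. Write $\mT_{\CD}=\frac1n\Phi\Phi^\T$, where $\Phi$ has columns $\nabla_{\vec\theta}f_{\vec\theta}(\vec x_i)$. As in \eqref{eq:top-eig}, testing against $\vec 1/\sqrt n$ gives
\[
\lambda_{\max}(\mT_{\CD})\ \ge\ \frac{1}{n^2}\|\Phi\vec 1\|^2 .
\]
The vector $\Phi\vec 1=\sum_i\nabla_{\vec\theta}f(\vec x_i)$ splits into orthogonal coordinate blocks: one for $\beta$, and one for each triple $(\vec w_{k,j},b_{k,j},v_{k,j})$. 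Its squared norm is therefore the sum of the squared norms of these block components. This separation is the reason the bound decomposes over locations.

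Next we compute the block components. Let $m_{k,j,i}:=\mathds 1\{\vec w_{k,j}^\T\pi_j(\vec x_i)>b_{k,j}\}$. For model \eqref{eq:scn_unshared},
\[
\partial_\beta f=1,\qquad \partial_{v_{k,j}}f=\phi\bigl(\vec w_{k,j}^\T\pi_j(\vec x)-b_{k,j}\bigr),\qquad \partial_{\vec w_{k,j}}f=v_{k,j}m_{k,j}\pi_j(\vec x),\qquad \partial_{b_{k,j}}f=-v_{k,j}m_{k,j}.
\]
There is no $1/J$ factor, because there is no pooling. Summing over $i$, dividing by $n^2$, and writing $\vec u=\vec w_{k,j}/\|\vec w_{k,j}\|$ and $t=b_{k,j}/\|\vec w_{k,j}\|$, the normalized quantity becomes
\[
1+\sum_{k,j}\Bigl[v_{k,j}^2\Bigl(\bigl\|\Eb[\vec X^{(j)}_{\CD}\mathds 1\{\vec u^\T\vec X^{(j)}_{\CD}>t\}]\bigr\|^2+\Pb(\vec u^\T\vec X^{(j)}_{\CD}>t)^2\Bigr)+\|\vec w_{k,j}\|^2\,\Eb[\phi(\vec u^\T\vec X^{(j)}_{\CD}-t)]^2\Bigr].
\]
Here the averages $\frac1n\sum_i$ are exactly expectations under the uniform law of $\vec X^{(j)}_{\CD}$ on $\{\pi_j(\vec x_i)\}_i$. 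We also use positive homogeneity of $\phi$ to pull out the factor $\|\vec w_{k,j}\|$.

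Finally, apply $a^2+b^2\ge 2ab$ to each bracket, with $a=|v_{k,j}|\sqrt{\cdots}$ and $b=\|\vec w_{k,j}\|\,\Eb[\phi(\cdot)]$. Each bracket is then at least $2|v_{k,j}|\|\vec w_{k,j}\|\,g^{(j)}_{\CD,\CS}(\vec u,t)$ by definition \eqref{eq:g_local}, which yields \eqref{eq:TFM_unshared}.

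The only step that needs care is the bookkeeping in the first step: confirming that no cross terms appear between different $(k,j)$ blocks, and that the $\beta$ coordinate contributes exactly $1$. Both hold because the blocks occupy disjoint parameter coordinates, and because the squared norm of a vector is the sum of its blockwise squared norms. Unlike the shared case, no averaging over $j$ mixes the patch distributions, so each neuron is matched with its own location-wise weight $g^{(j)}_{\CD,\CS}$. A zero filter $\vec w_{k,j}=0$ contributes nothing and can be dropped by convention.
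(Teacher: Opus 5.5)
Your proposal is correct and follows essentially the same route as the paper: test $\mT_{\CD}$ against $\vec 1/\sqrt n$, split $\Phi\vec 1$ into the $\beta$ coordinate (contributing exactly $1$) and orthogonal per-$(k,j)$ blocks, and apply $a^2+b^2\ge 2ab$ blockwise to recover $2|v_{k,j}|\,\|\vec w_{k,j}\|\,g^{(j)}_{\CD,\CS}$. Your normalizations and identifications with expectations under $\vec X^{(j)}_{\CD}$ match the paper's computation.
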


\begin{proof}
Write the gate indicators $m_{k,j,i}:=\mathds{1}\curly{\vec{w}_{k,j}^{\T}\pi_j(\vec{x}_i)>b_{k,j}}$, the normalized direction $\vec{u}_{k,j}:=\vec{w}_{k,j}/\norm{\vec{w}_{k,j}}_2$, and the normalized threshold $t_{k,j}:=b_{k,j}/\norm{\vec{w}_{k,j}}_2$.
The partial derivatives of $f_{\vec{\theta}}$ at sample $\vec{x}_i$ are
\begin{align}
    \frac{\partial f}{\partial \vec{w}_{k,j}}(\vec{x}_i)
      &= v_{k,j}\,m_{k,j,i}\,\pi_j(\vec{x}_i),
    &
    \frac{\partial f}{\partial b_{k,j}}(\vec{x}_i)
      &= -v_{k,j}\,m_{k,j,i},
    \label{eq:unshared_grad_wb} \\[4pt]
    \frac{\partial f}{\partial v_{k,j}}(\vec{x}_i)
      &= \phi\bigl(\vec{w}_{k,j}^{\T}\pi_j(\vec{x}_i)-b_{k,j}\bigr),
    &
    \frac{\partial f}{\partial \beta}(\vec{x}_i)
      &= 1.
    \label{eq:unshared_grad_vbeta}
\end{align}
Since each neuron $(k,j)$ operates on the patch $\pi_j(\vec{x}_i)$ with \emph{independent} parameters $(\vec{w}_{k,j},b_{k,j},v_{k,j})$, the gradient blocks are orthogonal across different $(k,j)$ pairs.

Denoting the tangent-feature matrix $\Phi=[\nabla_{\vec{\theta}}f(\vec{x}_1),\ldots,\nabla_{\vec{\theta}}f(\vec{x}_n)]\in\Rb^{p\times n}$, we use the standard lower bound (cf.~\eqref{eq:top-eig})
\begin{equation}\label{eq:unshared_all_ones}
    \lambda_{\max}(\mT_{\CD})
    = \frac{1}{n}\max_{\norm{\vec{u}}=1}\norm{\Phi\vec{u}}^{2}
    \;\ge\; \frac{1}{n^{2}}\norm{\Phi\,\vec{1}}^{2}
    = \frac{1}{n^{2}}\Bigl\|\sum_{i=1}^{n}\nabla_{\vec{\theta}}f(\vec{x}_i)\Bigr\|^{2}.
\end{equation}
Expanding the squared norm and using the orthogonality of neuron blocks gives
\begin{equation}\label{eq:unshared_expand}
    \frac{1}{n^{2}}\norm{\Phi\,\vec{1}}^{2}
    = 1
    + \sum_{k=1}^{K}\sum_{j=1}^{J}\Biggl[
        \underbrace{v_{k,j}^{2}\biggl(
          \Bigl\|\frac{1}{n}\sum_{i}m_{k,j,i}\,\pi_j(\vec{x}_i)\Bigr\|^{2}
          +\Bigl(\frac{1}{n}\sum_{i}m_{k,j,i}\Bigr)^{\!2}
        \biggr)}_{\displaystyle=:\;P_{k,j}}
        +\underbrace{\norm{\vec{w}_{k,j}}_2^{2}\Bigl(\frac{1}{n}\sum_{i}\phi\bigl(\vec{u}_{k,j}^{\T}\pi_j(\vec{x}_i)-t_{k,j}\bigr)\Bigr)^{\!2}}_{\displaystyle=:\;Q_{k,j}}
    \Biggr].
\end{equation}
Here the ``$1$'' comes from $\bigl(\frac{1}{n}\sum_i \partial f/\partial\beta\bigr)^{2}=1$.
The term $P_{k,j}$ collects the squared sums of the $\vec{w}_{k,j}$- and $b_{k,j}$-gradients (both proportional to $v_{k,j}$), while $Q_{k,j}$ comes from the $v_{k,j}$-gradient (proportional to $\norm{\vec{w}_{k,j}}_2$).
Crucially, the double sum decomposes over $(k,j)$ because the parameters are not shared across locations.

We apply $a^{2}+b^{2}\ge 2ab$ to each $(k,j)$ independently (cf.\ the analogous step in the proof of Proposition~\ref{prop:CNN_stability_induced_regularity}):
\begin{equation}\label{eq:unshared_amgm}
    P_{k,j} + Q_{k,j}
    \;\ge\; 2\,|v_{k,j}|\,\norm{\vec{w}_{k,j}}_2\,
    \underbrace{\Bigl(\frac{1}{n}\sum_{i}\phi\bigl(\vec{u}_{k,j}^{\T}\pi_j(\vec{x}_i)-t_{k,j}\bigr)\Bigr)
    \sqrt{
      \Bigl(\frac{1}{n}\sum_{i}m_{k,j,i}\Bigr)^{\!2}
      +\Bigl\|\frac{1}{n}\sum_{i}m_{k,j,i}\,\pi_j(\vec{x}_i)\Bigr\|^{2}
    }}_{\displaystyle=\;g_{\CD,\CS}^{(j)}(\vec{u}_{k,j},\,t_{k,j})}\,.
\end{equation}
To verify the identification with~\eqref{eq:g_local}, note that the empirical averages over $i\in[n]$ are precisely the expectation, probability, and conditional first moment under the uniform draw $\vec{X}_{\CD}^{(j)}\sim\mathrm{Uniform}(\{\pi_j(\vec{x}_i)\}_{i=1}^{n})$:
\[
    \frac{1}{n}\sum_{i=1}^{n}\phi\bigl(\vec{u}^{\T}\pi_j(\vec{x}_i)-t\bigr)
    = \Eb\bigl[\phi(\vec{u}^{\T}\vec{X}_{\CD}^{(j)}-t)\bigr],
    \qquad
    \frac{1}{n}\sum_{i=1}^{n}m_{k,j,i}
    = \Pb\bigl(\vec{u}^{\T}\vec{X}_{\CD}^{(j)}>t\bigr),
\]
\[
    \frac{1}{n}\sum_{i=1}^{n}m_{k,j,i}\,\pi_j(\vec{x}_i)
    = \Eb\bigl[\vec{X}_{\CD}^{(j)}\,\mathds{1}\curly{\vec{u}^{\T}\vec{X}_{\CD}^{(j)}>t}\bigr].
\]
Summing~\eqref{eq:unshared_amgm} over $(k,j)$ and combining with~\eqref{eq:unshared_all_ones}--\eqref{eq:unshared_expand} yields~\eqref{eq:TFM_unshared}.
\end{proof}

\begin{remark}[Why the per-neuron decomposition succeeds here]\label{rem:gap_vs_unshared}
In the GAP model~\eqref{eq:model_CNN}, the output weight $v_k$ is shared uniformly across all locations via the $1/J$ average.
This forces $\nabla_{\vec{w}_k}f=\frac{v_k}{J}\sum_j m^{(S_j)}_{k,i}\,\vec{x}_i^{(S_j)}$, in which contributions from different locations are combined with a \emph{common} coefficient $v_k/J$.
The common coefficient allows $v_k^2$ to be factored out, and the resulting $\frac{1}{n^2}\norm{\Phi\vec{1}}^2$ factors through the \emph{global} patch distribution $\vec{X}^{\CS}_{\CD}$, producing the global weight function $g_{\CD,\CS}$ of Proposition~\ref{prop:CNN_stability_induced_regularity}.

In the unshared architecture~\eqref{eq:scn_unshared}, each neuron $(k,j)$ has its own output weight $v_{k,j}$, but since $\vec{w}_{k,j}$ is also independent across~$j$, the gradient blocks are orthogonal and the squared norm decomposes cleanly into per-$(k,j)$ terms~\eqref{eq:unshared_expand}.
This per-neuron decomposition makes the AM--GM step~\eqref{eq:unshared_amgm} straightforward.

An intermediate architecture---shared filters $\vec{w}_k$ with location-dependent output weights $v_{k,j}$---would couple locations through $\nabla_{\vec{w}_k}f=\sum_j v_{k,j}\,m_{k,j,i}\,\pi_j(\vec{x}_i)$, in which contributions from different locations can cancel when $v_{k,j}$ have varying signs.
This coupling prevents both the per-location decomposition used above and the common-coefficient factorization available under GAP.
Analyzing such architectures would require different proof techniques.
\end{remark}

\begin{theorem}[BEoS constraint for the unshared SCN]\label{thm:EoS_unshared}
For the architecture~\eqref{eq:scn_unshared}, if\/ $\vec{\theta}\in\BEoS^{\CS}(\eta,\CD)$, i.e., $\lambda_{\max}(\nabla^{2}_{\vec{\theta}}\loss)\le 2/\eta$, then
\begin{equation}\label{eq:EoS_unshared}
    \sum_{k=1}^{K}\sum_{j=1}^{J}
    |v_{k,j}|\,\norm{\vec{w}_{k,j}}\;
    g_{\CD,\CS}^{(j)}\!\!\left(\frac{\vec{w}_{k,j}}{\norm{\vec{w}_{k,j}}},\,\frac{b_{k,j}}{\norm{\vec{w}_{k,j}}}\right)
    \;\le\; \frac{1}{\eta}-\frac{1}{2}
    + (R+1)\sqrt{2\loss(\vec{\theta})}.
\end{equation}
\end{theorem}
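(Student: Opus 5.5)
The plan mirrors the proof of Theorem~\ref{thm:implicit_bias_CNN}, with Proposition~\ref{prop:TFM_unshared} and Lemma~\ref{lem:hessian_unshared} replacing their weight-shared counterparts. Decompose the Hessian as in \eqref{eq:Hessian_1}:
\[
\nabla^2_{\vec{\theta}}\loss=\mT_{\CD}+\mR_{\CD},\qquad
\mR_{\CD}=\frac{1}{n}\sum_{i=1}^n\bigl(f_{\vec{\theta}}(\vec{x}_i)-y_i\bigr)\nabla^2_{\vec{\theta}}f_{\vec{\theta}}(\vec{x}_i).
\]
Here we assume, as before, that the parameter is a twice-differentiable point, so that no pre-activation $\vec{w}_{k,j}^\T\pi_j(\vec{x}_i)-b_{k,j}$ vanishes. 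Let $\vec{\omega}$ be a unit top eigenvector of $\mT_{\CD}$. Then
\[
\lambda_{\max}(\nabla^2_{\vec{\theta}}\loss)\ge \vec{\omega}^\T\nabla^2_{\vec{\theta}}\loss\,\vec{\omega}=\lambda_{\max}(\mT_{\CD})+\vec{\omega}^\T\mR_{\CD}\vec{\omega}.
\]

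For the residual term, apply Cauchy--Schwarz over $i$ and then the uniform bound of Lemma~\ref{lem:hessian_unshared}, namely $|\vec{\omega}^\T\nabla^2_{\vec{\theta}}f_{\vec{\theta}}(\vec{x}_i)\vec{\omega}|\le 2(R+1)$ for every $i$. This gives
\[
|\vec{\omega}^\T\mR_{\CD}\vec{\omega}|\le \sqrt{\tfrac1n\textstyle\sum_i (f_{\vec{\theta}}(\vec{x}_i)-y_i)^2}\cdot 2(R+1)=2(R+1)\sqrt{2\loss(\vec{\theta})}.
\]
Rearranging, we obtain
\[
\lambda_{\max}(\mT_{\CD})\le \lambda_{\max}(\nabla^2_{\vec{\theta}}\loss)+2(R+1)\sqrt{2\loss(\vec{\theta})}.
\]

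Next, insert the lower bound \eqref{eq:TFM_unshared} from Proposition~\ref{prop:TFM_unshared}. This yields
\[
1+2\sum_{k,j}|v_{k,j}|\,\|\vec{w}_{k,j}\|\,g^{(j)}_{\CD,\CS}(\vec{u}_{k,j},t_{k,j})\le \lambda_{\max}(\nabla^2_{\vec{\theta}}\loss)+2(R+1)\sqrt{2\loss(\vec{\theta})}.
\]
Now use the BEoS hypothesis $\lambda_{\max}\le 2/\eta$, subtract $1$, and divide by $2$. The result is exactly \eqref{eq:EoS_unshared}, with right-hand side $\tfrac1\eta-\tfrac12+(R+1)\sqrt{2\loss(\vec{\theta})}$.

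There is no real obstacle, since both ingredients are stated earlier. The only point needing care is the residual step. Lemma~\ref{lem:hessian_unshared} must be applied to the specific vector $\vec{\omega}$, which is legitimate because the lemma is uniform over unit perturbations. It is also worth noting that neurons with $\vec{w}_{k,j}=0$ contribute zero to the left-hand side, so the normalization in $g^{(j)}_{\CD,\CS}$ causes no issue.
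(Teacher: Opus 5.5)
Your proposal is correct and follows the paper's proof essentially step for step. You split $\nabla^2_{\vec{\theta}}\loss=\mT_{\CD}+\mR_{\CD}$, test against the top eigenvector of $\mT_{\CD}$, bound the residual term by $2(R+1)\sqrt{2\loss(\vec{\theta})}$ via Cauchy--Schwarz and Lemma~\ref{lem:hessian_unshared}, then insert the lower bound of Proposition~\ref{prop:TFM_unshared} and rearrange using $\lambda_{\max}\le 2/\eta$; your remarks on twice-differentiability and on neurons with $\vec{w}_{k,j}=0$ just make explicit what the paper leaves implicit.
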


\begin{proof}
Decompose the Hessian as $\nabla^{2}_{\vec{\theta}}\loss=\mT_{\CD}+\mR_{\CD}$ (cf.~\eqref{eq:Hessian_1}) with
\[
    \mR_{\CD}=\frac{1}{n}\sum_{i=1}^{n}(f_{\vec{\theta}}(\vec{x}_i)-y_i)\,\nabla^2_{\vec{\theta}}f_{\vec{\theta}}(\vec{x}_i).
\]
Let $\vec{\omega}$ be the unit eigenvector of $\mT_{\CD}$ corresponding to $\lambda_{\max}(\mT_{\CD})$.
Then (cf.~\eqref{equ:qd3})
\[
    \lambda_{\max}(\nabla^{2}_{\vec{\theta}}\loss)
    \;\ge\; \vec{\omega}^{\T}\nabla^{2}_{\vec{\theta}}\loss\,\vec{\omega}
    \;=\; \lambda_{\max}(\mT_{\CD})
          + \underbrace{\frac{1}{n}\sum_{i=1}^{n}(f_{\vec{\theta}}(\vec{x}_i)-y_i)\,
            \vec{\omega}^{\T}\nabla^{2}_{\vec{\theta}}f_{\vec{\theta}}(\vec{x}_i)\,\vec{\omega}}_{\displaystyle=:\;\Delta}.
\]
By the Cauchy--Schwarz inequality and Lemma~\ref{lem:hessian_unshared},
\[
    |\Delta|
    \;\le\; \sqrt{\frac{1}{n}\sum_{i}(f_{\vec{\theta}}(\vec{x}_i)-y_i)^{2}}\;\cdot\;
            \sqrt{\frac{1}{n}\sum_{i}\bigl(\vec{\omega}^{\T}\nabla^{2}_{\vec{\theta}}f_{\vec{\theta}}(\vec{x}_i)\,\vec{\omega}\bigr)^{2}}
    \;\le\; 2(R+1)\sqrt{2\loss(\vec{\theta})}.
\]
Therefore,
\[
    \lambda_{\max}(\mT_{\CD})
    \;\le\; \lambda_{\max}(\nabla^{2}_{\vec{\theta}}\loss) + |\Delta|
    \;\le\; \frac{2}{\eta} + 2(R+1)\sqrt{2\loss(\vec{\theta})}.
\]
Inserting the lower bound from Proposition~\ref{prop:TFM_unshared} and rearranging yields~\eqref{eq:EoS_unshared}.
\end{proof}

\subsection{Generalization analysis}

We now derive the generalization guarantee for the unshared architecture~\eqref{eq:scn_unshared}, following the same interior/exterior decomposition as in Appendix~\ref{app:proof_upper_bound}.

\noindent\textbf{Interior/exterior decomposition.}
For $\varepsilon\in(0,1)$, define
\[
    \mathsf{I}^{\mathrm{all}}_\varepsilon
    := \Big\{\vec{x}\in\Sph^{d-1}:\max_{j\in[J]}\norm{\pi_j(\vec{x})}\le 1-\varepsilon\Big\},
    \qquad
    \mathsf{O}^{\mathrm{any}}_\varepsilon
    := \Big\{\vec{x}\in\Sph^{d-1}:\exists\, j\in[J],\ \norm{\pi_j(\vec{x})}>1-\varepsilon\Big\}.
\]
By a union bound and Proposition~\ref{prop:boundary_tail_for_projected},
$\Pb(\mathsf{O}^{\mathrm{any}}_\varepsilon)\lesssim J\,\varepsilon^{(d-m)/2}$.

\noindent\textbf{Lower bound on $g_{\CD,\CS}^{(j)}$ over the interior.}
On $\mathsf{I}^{\mathrm{all}}_\varepsilon$, every patch satisfies $\norm{\pi_j(\vec{x})}\le 1-\varepsilon$ for all $j$.
When the marginal of $\vec{x}$ is $\mathrm{Uniform}(\Sph^{d-1})$, each projected patch $\pi_j(\vec{x})\in\Rb^{m}$ inherits the same rotational-invariance structure analyzed in Appendix~\ref{app:dd_regularity}.
By the same computation as in Proposition~\ref{prop:CNN_weight_function_population} (see~\eqref{eq:rho-bds}), the population weight function at each location satisfies
\[
    g_{\CP}^{(j)}(\vec{u},t) \;\ge\; c_g(d)\,\varepsilon^{d}
    \qquad\text{for all }\vec{u}\in\Sph^{m-1},\;|t|\le 1-\varepsilon,
\]
with a constant $c_g(d)>0$ depending only on $d$.
The uniform deviation bound of Theorem~\ref{thm:g-deviation}, applied separately at each location~$j$ (the patches $\{\pi_j(\vec{x}_i)\}_{i=1}^{n}$ are i.i.d.\ for fixed~$j$), gives
\[
    \sup_{\vec{u}\in\Sph^{m-1},\,|t|\le 1}
    \bigl|g_{\CD,\CS}^{(j)}(\vec{u},t)-g_{\CP}^{(j)}(\vec{u},t)\bigr|
    \;\le\; C_{\mathrm{ep}}\sqrt{\frac{m+\log(2J/\delta)}{n}}
    \;=:\;\zeta_n,
\]
with probability at least $1-\delta$, simultaneously for all $j\in[J]$ (via a union bound absorbing $J$ into the logarithm).
Hence, for any $\varepsilon$ satisfying $\varepsilon^{d}\ge 2\zeta_n/c_g(d)$, we have
\begin{equation}\label{eq:g_unshared_lower}
    g_{\CD,\CS}^{(j)}(\vec{u},t)
    \;\ge\; \tfrac{1}{2}\,c_g(d)\,\varepsilon^{d}
    \qquad\text{for all }j\in[J],\;\vec{u}\in\Sph^{m-1},\;|t|\le 1-\varepsilon.
\end{equation}

\noindent\textbf{From the BEoS constraint to an unweighted path norm.}
On the interior $\mathsf{I}^{\mathrm{all}}_\varepsilon$, every neuron with $|t_{k,j}|\ge 1-\varepsilon$ is either identically zero or fully affine on all training patches (cf.~\eqref{eq:affine-on-core}), and can be absorbed into the affine component.
For the remaining neurons with $|t_{k,j}|\le 1-\varepsilon$, inserting the lower bound~\eqref{eq:g_unshared_lower} into the BEoS constraint~\eqref{eq:EoS_unshared} and setting $A:=\frac{1}{\eta}-\frac{1}{2}+4M$ yields
\begin{equation}\label{eq:unshared_pathnorm}
    \sum_{\substack{k,j:\\|t_{k,j}|\le 1-\varepsilon}}
    |v_{k,j}|\,\norm{\vec{w}_{k,j}}
    \;\le\; \frac{A}{\frac{1}{2}c_g(d)\,\varepsilon^{d}}
    \;\asymp\; A\,\varepsilon^{-d}.
\end{equation}

\noindent\textbf{Controlling the generalization gap.}
The function class on the interior set $\mathsf{I}^{\mathrm{all}}_\varepsilon$ has variation norm bounded by $A\varepsilon^{-d}$ (equation~\eqref{eq:unshared_pathnorm}).
The generalization gap decomposes as in~\eqref{eq:boundrary_iart_generalization}--\eqref{eq:interior_iart_generalization}:
\begin{itemize}[leftmargin=2em]
    \item \emph{Exterior contribution.}
    Since $\norm{f_{\vec{\theta}}}_{\infty}\le M$ and $|y|\le D\le M$,
    the exterior contributes at most $O(JM^{2}\varepsilon^{(d-m)/2})$
    (cf.~\eqref{ineq:upper_bound_boundrary_error_cnn}).
    \item \emph{Interior contribution.}
    By Lemma~\ref{lem:generalization‐RBV} (metric entropy of variation spaces), the interior generalization gap satisfies
    \[
        \left|\Eb_{\sfI}\bigl[(f_{\vec{\theta}}-y)^{2}\bigr]
        - \frac{1}{n_I}\sum_{i\in I}(f_{\vec{\theta}}(\vec{x}_i)-y_i)^{2}\right|
        \;\lessapprox_d\; (A\varepsilon^{-d})^{\frac{d}{d+3}}\,M^{\frac{d+6}{d+3}}\,n^{-1/2}
    \]
    (cf.~\eqref{ineq:upper_interior_generalization_cnn}).
    \item \emph{Concentration of the interior fraction.}
    By Hoeffding's inequality, $|\Pb(\Iall_\varepsilon)-n_I/n|\lesssim\sqrt{\log(1/\delta)/n}$ with high probability, contributing a lower-order term $O(M^{2}\sqrt{\log(1/\delta)/n})$
    (cf.~\eqref{ineq: high_probability_MSE_upper_bound_cnn}).
\end{itemize}
Combining these three terms gives the following result.

\begin{theorem}[Generalization for the unshared SCN]\label{thm:gen_unshared}
Assume $\vec{x}\sim\mathrm{Uniform}(\Sph^{d-1})$ and $|y|\le D$.
Let $M\ge D$, $\norm{f_{\vec{\theta}}}_{\infty}\le M$, and suppose $\vec{\theta}$ satisfies the BEoS condition $\lambda_{\max}(\nabla^{2}_{\vec{\theta}}\loss)\le 2/\eta$.
Assume $d>3$ and $1\le m<\frac{d(d-3)}{d+3}$.
Then, with probability at least $1-2\delta$,
\begin{equation}\label{eq:gen_unshared}
    \Gen\Gap(f_{\vec{\theta}};\CD)
    \;\lessapprox_d\; J M^{2}\,\varepsilon^{\frac{d-m}{2}}
    + \bigl(A\,\varepsilon^{-d}\bigr)^{\frac{d}{d+3}} M^{\frac{d+6}{d+3}} n^{-1/2},
\end{equation}
for any $\varepsilon\in(0,1)$ satisfying $\varepsilon^{d}\gtrsim d^{2}\sqrt{(m+\log(J/\delta))/n}$, where $A=\frac{1}{\eta}-\frac{1}{2}+4M$.
\end{theorem}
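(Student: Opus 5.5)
The proof follows the detailed version of Theorem~\ref{thm:upper_bound} in Appendix~\ref{app:proof_upper_bound}. The main text of this appendix has already assembled the ingredients; the task is to string them together with a probability budget of $2\delta$. First, I split the sphere into the simultaneous core $\Iall_\varepsilon$ and the exterior $\Oany_\varepsilon$. The generalization gap is then written as the sum of the three terms \eqref{eq:boundrary_iart_generalization}--\eqref{eq:interior_iart_generalization}. For the exterior term, Proposition~\ref{prop:boundary_tail_for_projected} plus a union bound over the $J$ locations gives $\Pb(\Oany_\varepsilon)\lesssim_d J\varepsilon^{(d-m)/2}$, and Hoeffding gives the same bound for $n_O/n$ up to $\sqrt{\log(1/\delta)/n}$. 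Since $\|f_{\vec\theta}\|_\infty\le M$ and $|y|\le D\le M$, each loss is at most $4M^2$, so this term is $\lessapprox_d JM^2\varepsilon^{(d-m)/2}$. The interior-fraction term is handled by Hoeffding and contributes $M^2\sqrt{\log(1/\delta)/n}$, which is absorbed into the second term of \eqref{eq:gen_unshared}.

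The substantive step is converting the BEoS condition into an unweighted path-norm bound on the core. Theorem~\ref{thm:EoS_unshared} gives
\[
\sum_{k,j}|v_{k,j}|\|\vec w_{k,j}\|\,g^{(j)}_{\CD,\CS}(\vec u_{k,j},t_{k,j})\le \tfrac1\eta-\tfrac12+(R+1)\sqrt{2\loss(\vec\theta)}.
\]
Here $R=1$ and $\sqrt{2\loss}\le 2M$, so the right side is at most $A$. Next I need a uniform lower bound on $g^{(j)}_{\CD,\CS}$ over $|t|\le 1-\varepsilon$. For the population version, I use $\sqrt{p^2+\|a\|^2}\ge p$ and the rotational invariance of $\vec u^\T\pi_j(\vec X)$, whose law is the one-dimensional marginal \eqref{eq:fd}. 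Then Proposition~\ref{prop:CNN_weight_function_population} gives $g^{(j)}_{\CP}\ge c_g(d)\varepsilon^d$. This uses the symmetrized (min over $\pm(\vec u,t)$) form, so that negative thresholds reduce to $h(|t|)$. To pass to the empirical version, I apply Theorem~\ref{thm:g-deviation} with $J=1$ at each location, take a union bound over $j$ costing $\log J$, and use the assumed condition $\varepsilon^d\gtrsim d^2\sqrt{(m+\log(J/\delta))/n}$ together with $c_L(d)\gtrsim d^{-2}$. This yields \eqref{eq:g_unshared_lower}. Neurons with $|t_{k,j}|\ge 1-\varepsilon$ are zero or affine on $\Iall_\varepsilon$ by \eqref{eq:affine-on-core}, so they fold into the affine part. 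The remaining neurons have path norm at most $A\varepsilon^{-d}$, as in \eqref{eq:unshared_pathnorm}.

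Finally, I use the fixed-patch trick: $\pi_j(\vec x)=P_j\vec x$ with $P_j$ a coordinate projection, so $\phi(\vec w_{k,j}^\T\pi_j(\vec x)-b_{k,j})=\phi((P_j^\T\vec w_{k,j})^\T\vec x-b_{k,j})$ with $\|P_j^\T\vec w_{k,j}\|=\|\vec w_{k,j}\|$. Hence the restriction of $f_{\vec\theta}$ to $\Iall_\varepsilon$ lies in $\Variation_{CA\varepsilon^{-d}}(\Bb^d_1)$ with sup-norm at most $M$. Conditioning on the interior sample, which is i.i.d.\ from $\CP$ restricted to $\Iall_\varepsilon$, I apply Lemma~\ref{lem:generalization‐RBV} to obtain the term $(A\varepsilon^{-d})^{d/(d+3)}M^{(d+6)/(d+3)}n_I^{-1/2}$. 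Since $n_I\gtrsim n$ with high probability, this matches the claimed term.

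The main obstacle is the uniformity: the bound must hold for the data-dependent $\vec\theta$. That is why the path-norm bound is used only through a class defined independently of the data, namely $\Variation_{CA\varepsilon^{-d}}$. The data enter only via the event that the $g$-deviation bound holds, which has probability at least $1-\delta$. The Hoeffding and chaining events together take the other $\delta$, giving the total $1-2\delta$. One subtlety to check is that the affine part absorbed from large-offset neurons is unpenalized in $|\cdot|_{\Variation}$ and is controlled only through $\|f\|_\infty\le M$. Lemma~\ref{lem:generalization‐RBV} already permits this, since $\Variation_C$ allows an arbitrary affine term.
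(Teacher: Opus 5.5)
Your proposal is correct and follows essentially the same route as the paper. You use the $\Iall_\varepsilon/\Oany_\varepsilon$ decomposition and the location-wise BEoS bound of Theorem~\ref{thm:EoS_unshared} with right-hand side at most $A=\frac{1}{\eta}-\frac12+4M$. You get the per-location lower bound $g^{(j)}_{\CD,\CS}\ge \tfrac12 c_g(d)\varepsilon^d$ from the population computation plus Theorem~\ref{thm:g-deviation} with a union bound over $j$, absorb the large-offset neurons into the affine part, and apply the variation-space chaining bound on the core.

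One small remark: the unshared weight $g^{(j)}_{\CD,\CS}$ in \eqref{eq:g_local} is not the symmetrized $\min$ form, but you do not need symmetrization. Monotonicity of $h(t)=\Pb(Z>t)\,\Eb[\phi(Z-t)]$ already gives $g^{(j)}_{\CP}(\vec u,t)\ge h(t)\ge h(1-\varepsilon)$ for every $t\le 1-\varepsilon$.
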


\begin{proof}
The proof follows the decomposition and estimates described above.
The exterior bound uses Proposition~\ref{prop:boundary_tail_for_projected} and Hoeffding's inequality (see~\eqref{ineq:upper_bound_boundrary_error_cnn}).
The interior bound uses the path-norm estimate~\eqref{eq:unshared_pathnorm} together with the metric entropy of variation spaces (Lemma~\ref{lem:generalization‐RBV}; see~\eqref{ineq:upper_interior_generalization_cnn}).
The concentration term $\bigl|\Pb(\Iall_\varepsilon)-n_I/n\bigr|$ is controlled as in~\eqref{ineq: high_probability_MSE_upper_bound_cnn} and is of smaller order.
\end{proof}

Optimizing over $\varepsilon$ as in Corollary~\ref{corollary:optimal_bounds} yields the rate exponent
\[
    -\frac{(d-m)(d+3)}{2\bigl((d-m)(d+3)+2d^{2}\bigr)},
\]
which tends to $-\frac{1}{6}$ for fixed~$m$ and $d\to\infty$, identical to the rate obtained for the weight-shared SCN in Theorem~\ref{thm:upper_bound}.

\bigskip
\noindent\textbf{Conclusion.}
The only architectural prerequisite for the stability-induced implicit regularization is the \emph{sparse connectivity pattern}: each hidden neuron receives input from a small subset of the ambient coordinates.
Weight sharing and GAP change how the patch geometry is aggregated. In the GAP model, the shared filter and common output coefficient couple all locations and yield a single global patch-multiset weight $g_{\CD,\CS}$. In the unshared model, the same mechanism appears as a sum of location-wise weights $g_{\CD,\CS}^{(j)}$. Thus weight sharing and GAP are not necessary for stability-induced regularization. They determine whether the patch-geometry penalty is global and pooled or location-wise.
This conclusion substantiates the design principle articulated in Section~\ref{sec:discussion}: the effective vector dimension seen by each local operator governs the strength of implicit regularization, regardless of whether parameters are shared across locations.
\section{Experimental Details}\label{app:experiment_details.}

We adopt the random-design nonparametric regression setting. 

\noindent\textbf{The setting of nonparametric regression.} The non-parametric regression with noisy labels on the random design is one typical setting in this program. Suppose $\{\vec{x}_i\}_{i=1}^n$ are i.i.d.\ sampled from a distribution $\CP_{\vec{X}}$ supported on $\Bb_R^d$ and $y_i=f_{true}(\vec{x}_i)+\xi_i$ for $i\in [n]$, where $f_{true}\!:\Rb^d\rightarrow \Rb$ is the ground-true function and $\{\xi_i\}_{i=1}^n$ are i.i.d. Gaussian noises $\CN(0,\sigma^2)$. In this setting, the goal of the regression task is to find a predictor $f$ to minimize the mean squared error (MSE):
\begin{equation}\label{eq:train_MSE}
	\MSE(f)=\frac{1}{n}\sum_{i=1}^n(f(\vec{x}_i)-f_{true}(\vec{x}_i))^2.
\end{equation}
The population level of \eqref{eq:train_MSE} is known as \emph{excess risk}, 
\begin{equation}\label{eq:excess_risk}
	\mathrm{Excess}(f)\coloneqq\operatorname{\Eb}_{\vec{x}\sim \CP_{\vec{X}}}[(f(\vec{x})-f_{true}(\vec{x}))^2],
\end{equation}
which is also called the \emph{estimation error} under $L^2(\CP_{\vec{X}})$.

In this setting, the population risk (under squared loss) of a predictor $f$ decomposes as
\begin{equation}\label{eq:population_risk_decom}
\Risk(f)=\Eb_{(\vec{x},y)\sim \CP}\big[(f(\vec{x})-y)^2\big]=\mathrm{Excess}(f) + \sigma^2,
\end{equation}
The additive term $\sigma^2$ is the irreducible error contributed by the label noise, and it is achieved by the Bayes predictor $f^*(\vec{x})=\Eb[y\mid \vec{x}]=f_{true}(\vec{x})$, namely $R(f^*)=\sigma^2$. Consequently, $\Risk(f)-\Risk(f^*)=\operatorname{\Eb}_{\vec{x}\sim \CP_{\vec{X}}}[(f(\vec{x})-f_{true}(\vec{x}))^2]$,
so controlling the excess risk is equivalent to controlling the population regret in squared-loss regression.

\smallskip
Moreover, the generalization gap in this random-design regression model admits the following equivalent form
\begin{equation}\label{eq:gen_gap_excess_risk_decom}
\mathrm{GenGap}(f,\CD)=\Big|\mathrm{Excess}(f) + \sigma^2 -\widehat{\Risk}_{\CD}(f)\Big|.
\end{equation}
This identity makes explicit how the generalization gap compares the \emph{population} performance (excess risk plus irreducible noise) against the \emph{training} performance measured by the empirical squared loss.


\subsection{Experimental details for Figure~\ref{fig:gap_vs_n_loglog} in Section~\ref{subsec:cnn_sphere}}
\label{appex:exp-subsec1}
We use synthetic data to empirically validate our claim that SCN generalizes well on spherical data when $m\ll d$, and the rate does not deteriorate as the ambient dimension increases. 

\noindent\textbf{Data generation:} Fix a collection of receptive fields $\CS=\{S_j\}_{j=1}^J$ with patch size $m$. In all experiments we use disjoint coordinate patches
$S_j=\{(j-1)m+1,\dots,jm\}$ so that $J=\lfloor d/m\rfloor$.
We first sample a ground-truth predictor $f_{true}\in\vec{\Theta}^{\CS}$ from the architecture in~\eqref{eq:model_CNN} with a moderate width $K_{true}=20$ (fixed across all $n$), and generate training inputs $\{\vec{x}_i\}_{i=1}^n$ i.i.d. from $\mathrm{Uniform}(\Sph^{d-1})$. The labels are generated by
$y_i=f_{true}(\vec{x}_i)+\xi_i$, where $\xi_i{\sim}\mathcal{N}(0,\sigma^2)$ are i.i.d Gaussian noise. We also generate an independent test set $\{\tilde{\vec{x}}_r\}_{r=1}^N{\sim}\mathrm{Uniform}(\Sph^{d-1})$ (with $N\gg n$) for Monte Carlo evaluation of generalization gap.

\noindent\textbf{Architecture:} We compare two overparameterized two-layer ReLU models trained by full-batch GD:
(i) \textbf{SCN} (two-layer sparsely connected ReLU network in \eqref{eq:model_CNN}),
and (ii) \textbf{FCN} obtained by taking $m=d$ and $J=1$ in~\eqref{eq:model_CNN}.
Unless otherwise stated, both models use width $K=1024$, which places the training in an overparameterized regime for the sample sizes we consider. Both use Kaiming-normal initialization for hidden weights and zero
initialization for all biases.

\noindent\textbf{Metrics:} For each trained predictor $f$, we report (i) \textbf{train loss} $\widehat{\Risk}_{\CD}(f)$ 
This quantity measures how well the predictor fit the training set; 
(ii) \textbf{estimated generalization gap}, $\widehat{\mathrm{GenGap}}(f,\CD)$, which
is the main object predicted by our theory (Theorem~\ref{thm:upper_bound}).
(iii) \textbf{estimated excess risk} $\widehat{\Excess}(f) = \tfrac{1}{N}\sum_{r=1}^N\bigl(f(\tilde{\vec{x}}_r)-f_{\mathrm{true}}(\tilde{\vec{x}}_r)\bigr)^2$, the Monte Carlo estimator of $\Excess(f)$ defined in Eq~\eqref{eq:excess_risk};
(iv) \textbf{Hessian Sharpness} $\lambda_{\max}(\nabla^2\mathcal{L}(\theta_t))$, computed every $500$ epochs via PyHessian~\cite{yao2020pyhessian} to verify the trajectory satisfies BEoS;

\noindent\textbf{Optimization} All models are trained with full-batch gradient descent (no momentum, no weight decay) on the squared loss $\mathcal{L}(\theta)$, using learning rate $\eta = 0.2$ for $30000$ epochs; the corresponding BEoS threshold is $2/\eta = 10$.


\noindent\textbf{Sweep and Scaling} For SCN, we sweep (Figure~\ref{fig:gen_gap_two_plots}) ambient dimension $d \in \{100, 200, 400\}$; for FCN, we fix $d = 10$ as a baseline where fully connected networks are expected to perform well. Sample sizes range over $n \in \{ 128, 256, 512, 1024\}$. Results averaged over 5 seeds. To estimate the sample-size scaling, we sweep $n\in\{128,256,512,1024\}$ and repeat the full pipeline over multiple random seeds. We plot $\log\widehat{\Gen\Gap}(f_{\hat{\vec{\theta}}},\CD)$ versus $\log n$ and report the least-squares fitted slope (\textbf{SCN}: $d=100,200,400$; \textbf{FCN}: $d=10$).

\subsection{Experimental details for Figure~\ref{fig:flat_interpolation}} Similar setup and metrics as the spherical-data setup above in~\ref{appex:exp-subsec1}, with two changes. First, each patch is sampled independently from the unit sphere and concatenated: $\vec{x}=[\vec{x}^{(1)},\dots,\vec{x}^{(J)}]$ with $\vec{x}^{(j)}\sim\mathrm{Uniform}(\Sph^{m-1})$, instead of the whole-vector sampling $\vec{x}\sim\mathrm{Uniform}(\Sph^{d-1})$. Second, we fix $m=10$, $J=8$ (so $d=80$), $K_{\mathrm{true}}=50$, $n=512$, $\sigma=1$, and train at width $K=1024$ for $50{,}000$ epochs at $\eta=0.2$ with gradient clipping. Additional experiments with similar setup in Fig~\ref{fig:plot3_lcn_combined},~\ref{fig:plot4_fcn_combined} and Table~\ref{tab:gen_gap}.

\subsection{Experimental details for Figure~\ref{fig:patch_vs_image_geometry}} We evaluate four patchification schemes on CIFAR-10: (i) \textit{Conv Patches}, $4\times 4$ patches with stride $2$ (baseline); (ii) \textit{Random Patches}, $4\times 4$ contiguous crops at random spatial locations (same patch content as Conv but irregular grid); (iii) \textit{ShufflePixel}, conv-style patches taken after a global pixel permutation (negative control); and (iv) \textit{Full Image Space}, the flattened $3072$-D image. For each scheme we draw $2{,}000{,}000$ reference and $50{,}000$ query patches and report two geometry metrics: (a) the PCA \emph{effective rank}, taken as the elbow of the cumulative explained variance at $90\%$ (lower values indicate more structured distributions); and (b) the area under the half-space (Tukey) depth concentration curve $\Psi(T)=\Pr[\hat d(\vec{z})\ge T]$ for $T\in[0,0.5]$ (higher values indicate distributions that are harder to shatter), computed via the random-projection approximation of~\citet{liang2025datageometry} with $512$ uniformly random unit directions and $4096$ histogram bins per direction. Full Image Space uses randomized PCA for the spectrum and $10000$ query points.

\subsection{Experimental details for Figure~\ref{fig:cifar10_regression}}

We repeat the above setup in~\ref{appex:exp-subsec1} with two changes. First, inputs are CIFAR-10 images $\vec{x}\in\mathbb{R}^{3072}$, and SCN uses patches ($3\times 3\times 3$ , stride $1$; $L=900$, $m=27$). Second, targets are generated by a \emph{frozen} ground-truth SCN $f_{\mathrm{true}}$ of width $K_{\text{true}}=20$ with the same patchification, $y_i=f_{\mathrm{true}}(\vec{x}_i)+\xi_i$, $\xi_i\sim\mathcal{N}(0,1)$; excess risk is reported as the MSE against $f_{\mathrm{true}}$ on the standard CIFAR-10 test split ($N=10000$). We train SCN and FCN, both at width $K=1024$, on $n_{\text{train}}=1024$ images for $5000$ epochs with $\eta=0.2$ and gradient clipping. Both networks are overparameterized in width ($K\gg n_{\text{train}}$), and SCN's training loss reaches the noise floor $\sigma^2$, ruling out underfitting as the explanation for its small excess risk.

\begin{figure}[h]
    \centering
    \includegraphics[width=\linewidth]{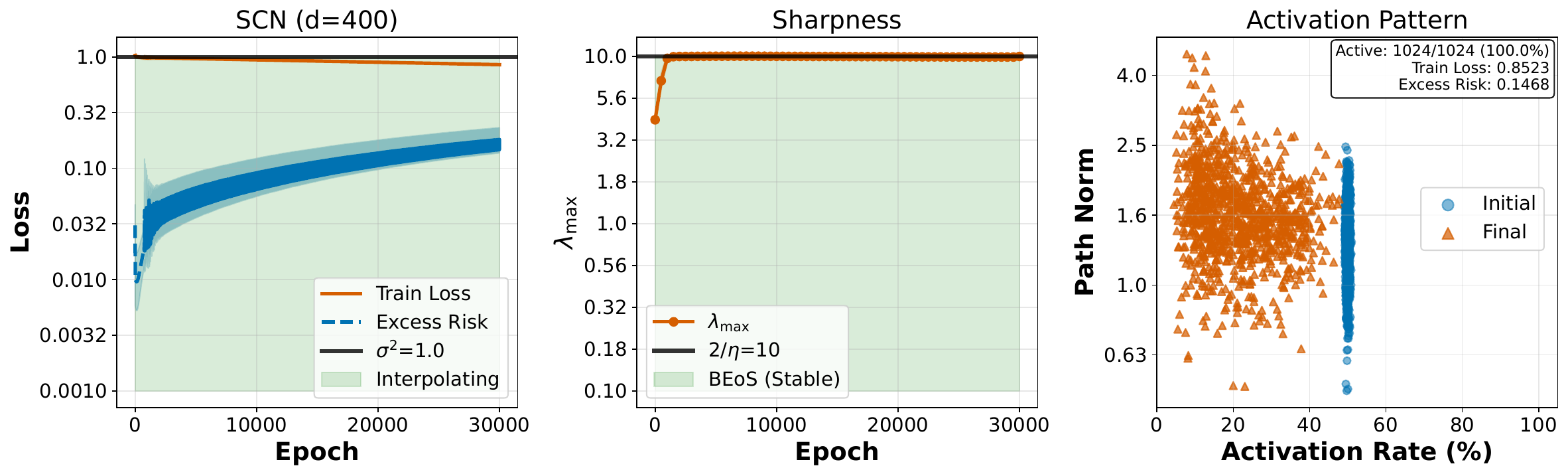}
    \caption{\textbf{SCN generalizes on high-dimensional spherical data.}\textbf{(Left)} With $d=400$ and fixed patch size $m \ll d$, train loss plateaus near $\sigma^2$ while excess risk decreases to $0.15$, confirming generalization rather than memorization. 
    \textbf{(Middle)} Sharpness saturates at BEoS ($\lambda_{\max} \approx 2/\eta$). \textbf{(Right)} Neurons spread across moderate activation rates, unlike the sparse isolation in flat interpolation (Figure~\ref{fig:flat_interpolation}). This validates Theorem~\ref{thm:flat_lcws_gap}: when $m \ll d$, stability-induced regularization prevents overfitting. Results averaged over 5 seeds with $\eta=0.2$, trained for 30k epochs.}
\label{fig:plot3_lcn_combined}
\end{figure}

\begin{table}[h]
\centering
\caption{Generalization gap and excess risk for SCN on spherical data (corresponding to Figure~\ref{fig:gap_vs_n_loglog}). As ambient dimension $d$ increases with fixed patch size $m$, both metrics decrease—confirming the ``blessing of dimensionality''.}
\label{tab:gen_gap}
\begin{tabular}{cccc}
\toprule
$d$ & $J$ & Gen Gap & Excess Risk \\
\midrule
100 & 10 & $1.283 \pm 0.114$ & $0.613 \pm 0.101$ \\
200 & 20 & $0.780 \pm 0.076$ & $0.352 \pm 0.047$ \\
400 & 40 & $0.295 \pm 0.004$ & $0.147 \pm 0.010$ \\
\bottomrule
\end{tabular}
\end{table}

\begin{figure}[h]
    \centering
    \includegraphics[width=\linewidth]{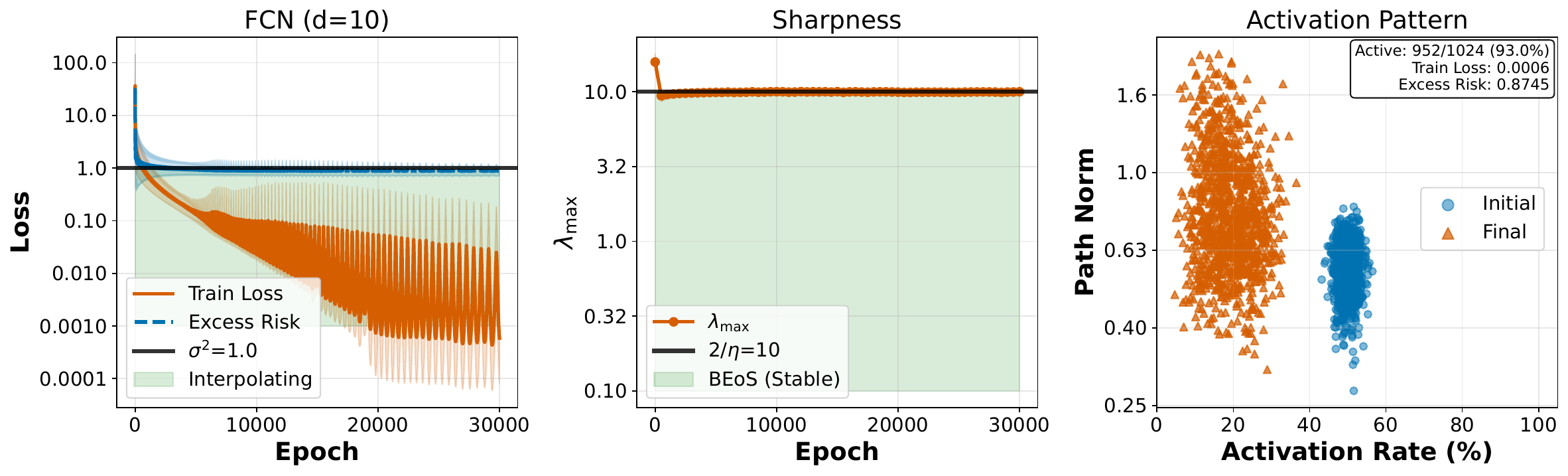}
    \caption{\textbf{FCN satisfies BEoS but still memorizes (Left)} FCN ($d=10$) interpolates noisy labels (train loss $\to 0$) while excess risk remains $\approx \sigma^2$. \textbf{(Middle)} Sharpness saturates at BEoS. \textbf{(Right)} Activation pattern after training. Despite satisfying BEoS, FCN fails to generalize—confirming that on spherical data, stability constraints alone are insufficient without convolutional structure. Averaged over 5 seeds; with similar setting as Figure~\ref{fig:plot3_lcn_combined}}
    \label{fig:plot4_fcn_combined}
\end{figure}

\end{document}